\documentclass[letterpaper,twocolumn,10pt]{article}
\usepackage{usenix2019_v3}

\usepackage{graphicx}
\usepackage{amsmath,amssymb,amsfonts}
\usepackage[mathcal]{euscript}
\usepackage{amsthm}
\usepackage{booktabs}
\usepackage{array}
\usepackage{textcomp}
\usepackage{tikz}
\usetikzlibrary{arrows.meta,positioning,fit,calc,shapes.geometric}
\usepackage{algorithm}
\usepackage{algorithmicx}
\usepackage{algpseudocode}
\usepackage[title]{appendix}

\newcommand{\EBFT}{\textnormal{\textsc{EBFT}}}
\newcommand{\BFT}{\textnormal{\textsc{BFT}}}
\newcommand{\SMR}{\textnormal{\textsc{SMR}}}
\newcommand{\SQA}{\textnormal{\textsc{SQA}}}
\newcommand{\PDDS}{\textnormal{\textsc{PDDS}}}

\newcommand{\Valid}{\mathsf{valid}}

\newcommand{\EpistemicBudget}[1][\delta]{e_{#1}}
\newcommand{\UnusableBudget}[1][\epsilon]{u_{#1}}
\newcommand{\SemanticSafetyEvent}[1][\delta]{\mathcal{E}_{#1}}
\newcommand{\LivenessEvent}[1][\epsilon]{\mathcal{U}_{#1}}
\newcommand{\EstimatedEpistemicBudget}[1][\delta]{\widehat{e}_{#1}}
\newcommand{\EstimatedUnusableBudget}[1][\epsilon]{\widehat{u}_{#1}}
\newcommand{\EstimatedEpistemicUpperBudget}[1][\delta]{\widehat{e}^{+}_{#1}}
\newcommand{\EstimatedUnusableUpperBudget}[1][\epsilon]{\widehat{u}^{+}_{#1}}
\newcommand{\CommonModeEvent}{\mathcal{C}}

\newcommand{\Prob}{\mathbb{P}}

\newcommand{\ValidatorSet}{\mathcal{R}}
\newcommand{\ByzantineSet}{\mathcal{B}}
\newcommand{\HonestSet}{\mathcal{H}}
\newcommand{\ProposalSpace}{\mathcal{X}}
\newcommand{\StateSpace}{\mathcal{S}}
\newcommand{\SemanticValidity}{\mathcal{V}}
\newcommand{\Canonicalize}[2]{\pi_{#2}(#1)}
\newcommand{\InvalidClassSet}[1]{\mathcal{Z}_{#1}^{-}}

\newcommand{\Commit}{\mathsf{commit}}
\newcommand{\Reject}{\mathsf{reject}}
\newcommand{\Abstain}{\mathsf{abstain}}
\newcommand{\Escalate}{\mathsf{escalate}}

\newcommand{\NumValidators}{N}
\newcommand{\ByzantineBound}{f}
\newcommand{\CoherentFaultBound}{e_c}
\newcommand{\DispersiveFaultBound}{e_i}

\newcommand{\StateContext}{s}

\newcommand{\botrule}{\bottomrule}

\hypersetup{
  hypertexnames=false,
  pdftitle={The Honest Quorum Problem: Epistemic Byzantine Fault Tolerance for Agentic Infrastructure},
  pdfauthor={Jun He and Deying Yu},
  pdfsubject={Epistemic fault tolerance and semantic quorum safety in agentic infrastructure},
  pdfkeywords={epistemic Byzantine fault tolerance, agentic infrastructure, Byzantine fault tolerance, semantic validity, correlated failures, distributed consensus}
}

\makeatletter
\let\standardtitle\title
\renewcommand{\title}[2][]{%
  \standardtitle{#2}%
  \def\shorttitle{#1}%
  \ifx\shorttitle\@empty\else\markboth{#1}{#1}\fi
}
\makeatother

\newcolumntype{L}[1]{>{\raggedright\arraybackslash}p{#1}}
\newcolumntype{C}[1]{>{\centering\arraybackslash}p{#1}}
\numberwithin{equation}{section}

\theoremstyle{plain}
\newtheorem{theorem}{Theorem}[section]
\newtheorem{lemma}[theorem]{Lemma}
\newtheorem{proposition}[theorem]{Proposition}
\newtheorem{corollary}[theorem]{Corollary}
\theoremstyle{definition}

\newtheorem{assumption}[theorem]{Assumption}

\newtheorem{definition}[theorem]{Definition}
\theoremstyle{remark}

\begin{document}

\title[The Honest Quorum Problem]{
The Honest Quorum Problem:
Epistemic Byzantine Fault Tolerance for Agentic Infrastructure
}

\author{
  {\rm Jun He}\\
  OpenKedge.io\\
  \texttt{junhe@openkedge.io}
  \and
  {\rm Deying Yu}\\
  OpenKedge.io\\
  \texttt{deying@openkedge.io}
}

\maketitle

\begin{abstract}
State machine replication and Byzantine fault-tolerant consensus guarantee
agreement despite a bounded number of arbitrary faulty participants; Byzantine
participants may coordinate or collude. These guarantees rely on participants
outside that set correctly implementing the protocol's transition or validation
semantics. Agentic validators expose a weaker partition: an authenticated,
responsive, correctly signed, non-equivocating reasoning participant that is
protocol-compliant with the voting protocol may nevertheless endorse a
semantically invalid state transition.

We call the resulting failure mode an epistemic fault and the collective
phenomenon the Honest Quorum Problem; here, honest means protocol-compliant,
not semantically correct. Such a quorum can satisfy ordinary protocol checks
while forming a well-formed certificate for an invalid transition.
We show that agreement alone does not establish semantic certificate validity
or execution safety. Agentic validators may share model weights or lineage, training
distributions, prompts, retrieval sources, toolchains, evidence, reasoning
scaffolds, and provider infrastructure, yielding correlated epistemic faults.

We define Epistemic Byzantine Fault Tolerance (\EBFT{}), a fault-tolerance
model for agentic infrastructure and other post-deterministic distributed systems.
\EBFT{} augments the conventional Byzantine fault bound with separate
confidence-indexed quantities: $e_\delta$ bounds coherent invalid endorsement
outside the Byzantine set, and $u_\epsilon$ bounds unusable validator support
that degrades liveness. These quantities characterize
semantic-safety risk and liveness degradation separately. The paper derives
certificate-threshold conditions for semantic certificate validity, consensus
agreement, liveness, and feasible quorum-threshold selection, and outlines a
calibration methodology for estimating the budgets. Adding nominally distinct
agents improves fault tolerance only when it measurably reduces the upper-tail
concentration of invalid endorsement or unusable support.

\end{abstract}


\section{Introduction}\label{sec:introduction}

An operator proposes an infrastructure state mutation: expand a deployment
service account from read-only inspection to write authority over a production
namespace. Several reasoning validators inspect the same canonical request,
state snapshot, policy context, and evidence package. Each validator
authenticates correctly, receives the same request digest, follows the voting
protocol, signs the expected message, responds before the timeout, and does
not equivocate. A quorum approves the transition, but the transition violates
an application invariant: it crosses the intended control-plane isolation
boundary. The protocol succeeds in forming agreement, yet the system commits a
semantically invalid action.

Classical state machine replication (\SMR{}) and Byzantine fault-tolerant
(\BFT{}) consensus already permit arbitrary faulty replicas to coordinate,
collude, equivocate, and choose adversarial messages
\cite{pease1980reaching,lamport1982byzantine,schneider1990state}. The
classical partition is different: replicas outside the declared Byzantine set
are assumed to follow the protocol and correctly implement the relevant
transition or validation semantics. The Honest Quorum Problem weakens this
partition. A participant may remain inside the identity, communication, and
protocol envelope while failing at the semantic layer.

Probabilistic reasoning networks are replicated or distributed systems in
which proposals, plans, approvals, or validations are generated by stochastic
reasoning processes. Agentic infrastructure is a useful stress case because
agent decisions can mutate operational state. We use
post-deterministic distributed system (\PDDS{}) \cite{pdds2026manifesto} to describe a system in which protocol
coordination can remain deterministic while semantically relevant outputs are
produced by stochastic, learned, adaptive, or generative computations. The
point is not to discard deterministic coordination; it is to model what
happens when deterministic coordination surrounds nondeterministic semantic
judgment.

An epistemic fault is a semantically incorrect judgment produced by a
participant that remains authenticated, responsive, non-equivocating, and
compliant with the base protocol. Multiple protocol-compliant validators may
fail coherently because they share model lineage, training distributions,
prompt patterns, retrieval sources, toolchains, evidence, reasoning
scaffolds, or provider infrastructure. Nominal identity diversity therefore
does not imply epistemic independence. Here, ``honest'' refers only to
identity-, communication-, and protocol-layer compliance; it does not imply
semantic correctness.

\EBFT{} denotes Epistemic Byzantine Fault Tolerance, the model developed
here. \EBFT{} does not replace classical \BFT{}. It augments the
conventional bound on arbitrary Byzantine identities with explicit bounds on
coherent invalid endorsement by protocol-compliant validators and on unusable
support caused by rejection, abstention, timeout, divergent output,
canonicalization failure, or unresolved uncertainty.

The analysis separates three properties:
\[
\begin{aligned}
\text{Protocol agreement} &\neq \text{Semantic validity},\\
\text{Semantic validity} &\neq \text{Execution safety}.
\end{aligned}
\]
Protocol agreement concerns whether protocol-compliant replicas decide the same
certificate or log entry. Semantic certificate validity concerns whether a
certified candidate satisfies the application's semantic-validity predicate in
the given state and evidence context. Execution safety concerns whether
applying the certified transition preserves the operational invariant of
interest. Semantic validity implies execution safety only when the predicate
fully captures that invariant.

One threshold cannot be obtained by merely reweighting the classical
$3f+1$ expression. \EBFT{} instead derives separate conditions for preventing
an invalid candidate from collecting a certificate, preventing conflicting
certificates, and ensuring that a valid candidate can still collect enough
usable support.

\noindent\textbf{The Honest Quorum Problem.} A distributed admission protocol
exhibits the Honest Quorum Problem when a certificate can be formed by
authenticated, responsive, non-equivocating, and protocol-compliant validators
for a transition that violates the application's semantic-validity predicate.

\EBFT{} is distinct from Practical Byzantine Fault Tolerance, commonly
abbreviated PBFT, and from probabilistic \BFT{} protocols that use random
sampling or probabilistic quorums to reduce communication. In \EBFT{},
probability characterizes the concentration of semantic failures among
protocol-compliant reasoning participants.

The paper makes four contributions. First, it formalizes the Honest Quorum
Problem and the epistemic fault model: protocol-compliant stochastic validators
can collectively endorse a semantically invalid transition, a failure mode
distinct from arbitrary Byzantine, crash, omission, benign, deceitful, or
equivocating faults. Second, it defines confidence-indexed epistemic budgets:
$e_\delta$ for coherent invalid endorsement and $u_\epsilon$ for unusable
support, with both quantities excluding the conventional Byzantine set unless
explicitly stated otherwise. Third, it derives \EBFT{} certificate-threshold
conditions for semantic certificate validity, agreement, liveness, and
feasible threshold selection without claiming tightness beyond the proved
intersection and threshold statements. Fourth, it formulates diversity-aware
assurance: model, prompt, retrieval, provider, and tool diversity are candidate
mechanisms for reducing correlation, not proofs of reduced tail risk.

This work fits into the OpenKedge research program as positioning, not as a
product description. Post-Deterministic Distributed Systems names the class of
systems in which deterministic coordination surrounds stochastic semantic
judgment. The Honest Quorum Problem identifies a failure mode in that class.
\EBFT{} supplies the corresponding fault model and threshold theory. Semantic
Quorum Assurance can instantiate EBFT-style admission by
constructing semantic certificates, while the Sovereign Assurance Boundary is
the runtime enforcement boundary that admits or rejects execution based on
those certificates \cite{he2026openkedge,he2026sqa,he2026sab}.

Section~\ref{sec:background} places \EBFT{} relative to classical consensus,
hybrid fault models, common-mode failure, and probabilistic consensus.
Section~\ref{sec:system-model} defines the semantic-certificate model.
Section~\ref{sec:pbf-model} formalizes the Honest Quorum Problem and
epistemic fault budgets. Sections~\ref{sec:impossibility}
and~\ref{sec:quorum-theory} give the impossibility and threshold results.
Section~\ref{sec:diversity-protocol} formulates diversity-aware semantic
certification, Section~\ref{sec:evaluation} gives the evaluation methodology,
and Sections~\ref{sec:related-work}--\ref{sec:conclusion} discuss related
work, limitations, and conclusions.

\vspace{-0.8\baselineskip}
\section{Background and Motivation}\label{sec:background}

\subsection{Byzantine Agreement, SMR, and Quorum Intersection}

Byzantine agreement asks correct processes to decide consistently despite a
bounded set of processes that may behave arbitrarily
\cite{pease1980reaching,lamport1982byzantine}. Partially synchronous
consensus and practical \BFT{} protocols show how agreement can be engineered
with authenticated messages and quorum certificates
\cite{dwork1988consensus,castro1999practical}. Byzantine quorum systems
capture the set-intersection argument behind these protocols: two conflicting
certificates cannot both be formed unless too much faulty weight participates
\cite{malkhi1998byzantine}.

\SMR{} uses consensus to order commands and then relies on deterministic
execution to replicate state \cite{schneider1990state}. Replicas outside the
faulty set are assumed to apply an equivalent transition function to the same
agreed log entry, so equal logs induce equivalent states. This assumption is
what lets the protocol separate ordering from execution. \EBFT{} keeps the
ordering theory intact but asks what changes when the decision that a
transition is semantically admissible is supplied by stochastic validators
rather than by a deterministic transition checker.

\subsection{Validated Consensus and External Validity}

Consensus specifications usually include a validity property in addition to
agreement and termination. Some validity notions are weak, requiring only that
decisions not be vacuous under favorable inputs; others are external or
validated, requiring that a decided value satisfy an application predicate.
A protocol may therefore require a deterministic predicate
$\Valid(v)$ before a value is eligible for certification. When every
protocol-compliant replica can evaluate the same predicate and obtain the same
answer, conventional external validity can be folded into the consensus
interface.

Civit et al.'s ``On the Validity of Consensus'' studies validity properties as
first-class objects in Byzantine consensus, including which validity
properties are solvable and the message-complexity cost of non-trivial
validity \cite{civit2023validity}. \EBFT{} does not invent validity as a
consensus property. Its concern is narrower and more operational: semantic
validity may be too expensive, contextual, or underspecified to evaluate by a
deterministic predicate inside the protocol, so a committee approximates it
using stochastic reasoning. The question then becomes whether those
approximations can concentrate on the same invalid semantic class while still
producing a well-formed certificate.

\subsection{Hybrid Fault Models}

Hybrid fault models refine the fully Byzantine abstraction by distinguishing
failure classes. Basilic, for example, introduces the
Byzantine-deceitful-benign model with $t$ Byzantine processes, $d$ deceitful
processes, and $q$ benign processes. For consensus, Basilic gives the tight
resilience condition $n>3t+d+2q$; for eventual consensus in the blockchain
setting it gives the weaker condition $n>2t+d+q$
\cite{ranchal2022basilic}. The distinction is useful because some faults
primarily threaten safety, while others primarily threaten termination.

\EBFT{} is related in spirit but not reducible to this taxonomy. A stochastic
semantic validator may be protocol-compliant, authenticated, responsive, and
non-equivocating while endorsing a semantically invalid transition. Such a
fault does not automatically map to a deceitful fault, because the validator
need not violate the protocol relation observed by other replicas; it does
not automatically map to a benign fault, because coherent invalid endorsement
can threaten semantic certificate validity; and treating it as fully Byzantine may be a
sound conservative reduction but loses the distinction between coherent
safety risk and dispersive liveness loss.

Table~\ref{tab:fault-model-comparison} summarizes these distinctions; in the
table, Y means yes, N means no, and C means conditional on the fault instance
or system assumptions.

\begin{table*}[tbp]
\caption{Fault capabilities relevant to protocol agreement, semantic certificate validity,
and liveness. Authentication means messages are attributable to the signing
identity; a Byzantine validator may still possess valid credentials and
signatures.}
\label{tab:fault-model-comparison}
\scriptsize
\setlength{\tabcolsep}{2pt}
\begin{tabular}{@{}L{0.14\textwidth}C{0.055\textwidth}C{0.07\textwidth}C{0.055\textwidth}C{0.07\textwidth}C{0.065\textwidth}C{0.075\textwidth}C{0.065\textwidth}L{0.25\textwidth}@{}}
\toprule
Fault model & Auth. & Protocol compliant & May equivocate & May falsely endorse & Threatens agreement & Threatens semantic certificate validity & Threatens liveness & Relevant correlation assumption \\
\midrule
Byzantine faults & Y & N & Y & Y & Y & Y & Y & Byzantine identities may coordinate or collude; no independence is assumed. \\
Crash or omission faults & Y & C & N & N & N & N & Y & Timing, availability, or omission failures may share infrastructure causes. \\
Benign faults & Y & C & N & N & N & N & C & Benign behavior may reduce usable support but does not create false endorsement. \\
Epistemic faults & Y & Y & N & Y & N & C & C & Semantic errors may arise from model, prompt, retrieval, evidence, tool, or provider overlap. \\
Coherent epistemic faults & Y & Y & N & Y & N & Y & C & Shared semantic failure domains concentrate support on one invalid result. \\
Dispersive epistemic faults & Y & Y & N & C & N & C & Y & Errors, abstentions, rejections, timeouts, or divergent classes fail to concentrate into one certificate. \\
\botrule
\end{tabular}
\vspace{2pt}

\emph{Legend:} Y = yes; N = no; C = conditional on the fault instance or system assumptions.
\end{table*}

\subsection{Common-Mode Failure and Design Diversity}

Common-mode failure is the reason diversity must be measured rather than
assumed. N-version programming proposed independently developed software
versions as a way to reduce coincident design faults
\cite{avizienis1985nversion}. Knight and Leveson's experimental evaluation
challenged the independence assumption by showing that separately produced
program versions can fail together \cite{knight1986experimental}. For \EBFT{},
the analogous quantity is not the number of nominally different validators,
providers, prompts, or tools. It is the upper-tail concentration of validator
weight assigned to one invalid semantic class.

Recent work on model ensembles reinforces the same point for stochastic
reasoning systems. Denisov-Blanch et al. find that polling-style aggregation
for truthfulness can fail to improve over a single sample and can amplify
shared misconceptions when model errors are correlated
\cite{denisovblanch2026consensus}. Turkmen et al. model correlated errors in
LLM ensembles and report saturation effects that limit majority-vote gains
relative to independence-based expectations \cite{turkmen2026ensemble}.
Code-generation studies point in the same direction: the Code Triangle study
reports clustered errors and limited diversity in LLM code implementations,
with model mixtures improving robustness but not eliminating the need to
measure failure concentration \cite{zhang2025codetriangle}. These results do
not establish a universal law about all validators; they motivate why \EBFT{}
tracks concentration rather than relying on diversity labels.

\subsection{Probabilistic Fault Models}

Probability already appears in consensus, but usually at the protocol layer.
Randomized Byzantine agreement uses random choices to obtain progress or
probabilistic guarantees under adversarial scheduling
\cite{benor1983another}. ProBFT takes a different but still protocol-level
route: it uses probabilistic Byzantine quorums and verifiable random functions
to reduce communication while guaranteeing safety and liveness with high
probability in a partially synchronous permissioned setting
\cite{avelas2024probft}. In such systems, probability concerns quorum
sampling, leader or committee selection, message complexity, or the chance
that the protocol's safety and liveness conditions hold.

\EBFT{} uses probability for a different object. The protocol messages may be
deterministic, authenticated, and correctly sampled; the uncertainty lies in
the semantic endorsement produced by validators whose reasoning procedures
may share data, tools, prompts, model lineages, or latent assumptions.
Probabilistic quorum sampling asks whether enough protocol-compliant
participants are sampled or intersect. Probabilistic semantic analysis asks
how much protocol-compliant support may endorse the same invalid transition.

\subsection{Stochastic Semantic Validation}

The systems considered here place a semantic-certificate layer between a
candidate transition and execution. Validators do not merely check syntax or a
locally decidable predicate; they may inspect evidence, retrieve context,
summarize policies, call tools, or infer whether a proposed transition
satisfies an application-level invariant. This is the setting in which
external validity becomes stochastic semantic validation.

The relevant failure mode is therefore certificate-level concentration. A
single erroneous validator can be harmless if other validators reject or
abstain. Many erroneous validators can be a liveness problem if their errors
scatter across incompatible classes. The safety-relevant event is more
specific: enough protocol-compliant validators endorse the same semantically
invalid transition to make it certificate-eligible. The gap is therefore
precise: existing work does not give a certificate-level model for the
upper-tail weight of protocol-compliant validators endorsing the same
semantically invalid transition.

\section{System and Semantic Model}\label{sec:system-model}

\subsection{Participants, Network, and Corruption}

The base model is a semantic-certificate protocol for admitting one proposed
transition. Let
$\ValidatorSet=\{1,\ldots,\NumValidators\}$ be the replica set. The Byzantine
set is $\ByzantineSet\subseteq \ValidatorSet$, with
$|\ByzantineSet|\le \ByzantineBound$, and the protocol-compliant validator set
is
\[
  \HonestSet=\ValidatorSet\setminus\ByzantineSet .
\]
Thus $\ByzantineSet\cap\HonestSet=\emptyset$ and
$\ByzantineSet\cup\HonestSet=\ValidatorSet$. Validators have equal weight
in the base model: each identity contributes one unit of support to a
certificate, and a quorum threshold is an integer $q$.

Channels are authenticated and the network is partially synchronous in the
standard sense used by many \BFT{} protocols
\cite{dwork1988consensus,castro1999practical}. The first quorum theorem uses
static Byzantine corruption: $\ByzantineSet$ is fixed for the execution under
analysis.
Byzantine validators may coordinate, equivocate, withhold messages, or send
arbitrary protocol messages subject only to authentication. Validators in
$\HonestSet$
are protocol-compliant and non-equivocating in the base model, but they may be
epistemically faulty: their semantic reasoning may incorrectly endorse an
invalid transition or incorrectly reject a valid one.

\begin{table*}[tbp]
\caption{Notation used for the \EBFT{} model.}
\label{tab:notation}
\footnotesize
\setlength{\tabcolsep}{3pt}
\renewcommand{\arraystretch}{0.92}
\begin{tabular}{@{}L{0.16\textwidth}L{0.80\textwidth}@{}}
\toprule
Symbol & Meaning \\
\midrule
$\ValidatorSet$ & Replica or validator set. \\
$\ByzantineSet$ & Byzantine validator set, with $|\ByzantineSet|\le \ByzantineBound$. \\
$\HonestSet$ & Protocol-compliant validator set, $\HonestSet=\ValidatorSet\setminus\ByzantineSet$. \\
$\NumValidators$ & Number of validators, or total validator weight in a weighted generalization. \\
$\ByzantineBound$ & Bound on arbitrary Byzantine validators. \\
$q$ & Signature threshold for a semantic certificate. \\
$\EpistemicBudget$ & Semantic-safety budget value: a confidence-indexed bound on coherent false endorsement outside the Byzantine set. \\
$\UnusableBudget$ & Liveness budget value: a confidence-indexed bound on protocol-compliant support unusable for liveness. \\
$\SemanticSafetyEvent$ & Event on which false-endorsement weight is bounded by $\EpistemicBudget$. \\
$\LivenessEvent$ & Event on which unusable support is bounded by $\UnusableBudget$. \\
$\EstimatedEpistemicBudget$ & Empirical estimate of $\EpistemicBudget$ from calibration data. \\
$\EstimatedUnusableBudget$ & Empirical estimate of $\UnusableBudget$ from calibration data. \\
$\EstimatedEpistemicUpperBudget$ & Upper confidence endpoint for $\EpistemicBudget$ used in threshold selection. \\
$\EstimatedUnusableUpperBudget$ & Upper confidence endpoint for $\UnusableBudget$ used in threshold selection. \\
$s$ & System state or semantic context. \\
$x$ & Candidate transition submitted for semantic certification. \\
$\SemanticValidity(x,s)$ & Semantic-validity predicate for candidate $x$ in context $s$. \\
$\pi_s(x)$ & Semantic canonicalization function for context $s$. \\
$F_S(x,s)$ & False-endorsement weight among protocol-compliant validators in committee $S$. \\
$U_S(x,s)$ & Unusable protocol-compliant support for a valid candidate. \\
$C_S(s)$ & Class-level coherent invalid support in the proposal-generation extension. \\
$\delta,\epsilon$ & Permitted upper-tail probabilities for semantic safety and liveness budgets. \\
\botrule
\end{tabular}
\end{table*}

\subsection{Admission Instances}

An admission instance begins when a proposer submits a candidate transition
$x\in\ProposalSpace$ for a state or context $s\in\StateSpace$. Every
protocol-compliant validator receives the same canonical evidence package
\[
  h_s = h(s,\mathsf{evidence},\mathsf{policy}),
\]
which binds the state/context, the evidence supplied for semantic assessment,
and the policy under which the assessment is made. The instance identity
contains the protocol view or round, a digest of $x$, the state/context
identifier, and the evidence-package identity $h_s$.

Evidence-package identity is a protocol condition, not a semantic claim. A
validator in $\HonestSet$ treats two requests as the same admission instance
only when their view, candidate digest, state/context identifier, and $h_s$
agree. If these fields are inconsistent, the validator rejects or abstains
according to the protocol. Context consistency therefore prevents
protocol-compliant validators from combining judgments about different
evidence packages into one certificate, while still allowing Byzantine
validators to equivocate.

\subsection{Semantic Validity and Classes}

Let $\SemanticValidity:\ProposalSpace\times\StateSpace\to\{0,1\}$ be the
application semantic-validity predicate. The predicate represents the
application-level condition that a transition must satisfy before execution. It
need not be cheaply decidable by the ordering protocol.
$\SemanticValidity(x,s)$ is a specification-level semantic truth predicate used
for analysis; a deployment may only approximate, certify, or estimate it
through deterministic verifiers, executable tests, model checking,
adjudication, or calibrated labels.

\begin{definition}[Semantic validity]\label{def:semantic-validity}
A candidate transition $x$ is semantically valid in context $s$ if
$\SemanticValidity(x,s)=1$. It is semantically invalid if
$\SemanticValidity(x,s)=0$.
\end{definition}

The model does not assume uniqueness of valid candidates. For the same
context $s$, several distinct transitions may satisfy
$\SemanticValidity(x,s)=1$. The certificate protocol admits or rejects the
submitted candidate $x$; it does not by itself prove that $x$ is the only valid
transition available.

\begin{definition}[Semantic equivalence]\label{def:semantic-equivalence}
For a fixed context $s$, two candidates $x$ and $y$ are semantically
equivalent if they induce the same application-relevant outcome. We write
$\Canonicalize{x}{s}=\Canonicalize{y}{s}$ when a deterministic
canonicalization function maps them to the same semantic class.
\end{definition}

\begin{assumption}[Deterministic canonicalization]\label{assumption:canonicalization}
For a fixed context and evidence package, $\pi_s$ is deterministic and
equivalent application-relevant outcomes map to the same canonical
representative. The assumption is application-specific and must be justified by
the certificate schema used by a deployment.
\end{assumption}

\subsection{Signed Judgments and Certificates}

A validator returns a signed structured judgment
\[
  J_i =
  \left\langle
    i,\mathsf{view},\mathsf{digest}(x),s,h_s,a_i,c_i,d_i,p_i
  \right\rangle_{\sigma_i},
\]
where $a_i\in\{\mathsf{endorse},\Reject,\Abstain\}$ is the validator action,
$c_i$ is the reported semantic class, $d_i$ is the evidence digest actually
used by the validator, $p_i$ is a provenance record for the semantic
assessment, and $\sigma_i$ is the validator's signature. For abstentions, the
semantic class may be $\bot$ if the validator cannot assign a class. For
endorse or reject judgments, the class must be computed for the submitted
candidate and the bound evidence package.

A semantic certificate for $(s,x,h_s)$ is a set $Q$ of signed judgments from
distinct validators such that $|Q|\ge q$, every judgment has action
$\mathsf{endorse}$, and all judgments agree on the view, candidate digest,
state/context identifier, evidence-package identity, and reported semantic
class. We write $Q_{\HonestSet}=Q\cap\HonestSet$ and
$Q_{\ByzantineSet}=Q\cap\ByzantineSet$; these subsets are disjoint and
$Q=Q_{\HonestSet}\cup Q_{\ByzantineSet}$. A certificate may contain both Byzantine and
protocol-compliant validators. It may also contain validators that share model
lineage, retrieval systems, tools, training data, or operational provenance;
such overlap is captured by the stochastic semantic-fault model, not by
identity overlap in $\ByzantineSet$ and $\HonestSet$.

\subsection{Fault, Randomness, and Adversary Model}

Only Byzantine validators may equivocate in the base model. A
protocol-compliant validator in $\HonestSet$ signs at most one judgment for a given
admission instance. Epistemic faults are therefore semantic, not protocol
faults: an epistemically faulty validator follows the message format,
signature rules, evidence binding, and non-equivocation rule, but may endorse
a candidate with $\SemanticValidity(x,s)=0$ or reject a candidate with
$\SemanticValidity(x,s)=1$.

Probabilities are taken over an explicit semantic-validation probability space. The
sample space may include a workload or task distribution over states,
candidates, evidence, and policies; validator-local randomness; retrieval and
tool randomness; model versions and configuration choices; and latent
common-mode variables that induce correlated semantic errors. These sources
need not be independent. When the analysis conditions on a fixed admission
instance $(s,x,h_s)$, probabilities describe the validator and environment
randomness for that instance. When the analysis averages over a workload
distribution, the workload distribution must be named and the resulting bound
is a workload-level statement rather than a per-context guarantee.

The adversary is worst-case with respect to Byzantine behavior, message
scheduling before the synchrony bound, and the contents of Byzantine
judgments. Unless a theorem states otherwise, the adversary may choose the
Byzantine set subject to $|\ByzantineSet|\le \ByzantineBound$ and may choose
messages for Byzantine validators, but probability statements about
$\HonestSet$ are conditional on
the fixed context and evidence package under analysis. If an adversary is also
allowed to select the task, context, or evidence package after observing model
or workload information, the corresponding stochastic bound must hold
uniformly over that selection or be explicitly stated as an average-case
workload bound.

\subsection{Properties}

The following properties are intentionally separated because \EBFT{} concerns
the gap between protocol agreement and semantic correctness.

\begin{definition}[Agreement]\label{def:agreement}
Agreement holds when two protocol-compliant replicas do not decide different
certificates for the same log position or admission instance.
\end{definition}

\begin{definition}[Semantic certificate validity]\label{def:semantic-certificate-validity}
Semantic certificate validity holds when every certificate accepted for
candidate $x$ in context $s$ satisfies $\SemanticValidity(x,s)=1$.
\end{definition}

\begin{definition}[Conventional external validity]\label{def:external-validity}
Conventional external validity holds when every decided certificate satisfies
the protocol's deterministic admissibility predicate: well-formed signed
judgments, distinct validator identities, threshold $q$, agreement on the
candidate digest, context, evidence-package identity, view, and required
action. This property is syntactic and certificate-level; it does not by
itself imply $\SemanticValidity(x,s)=1$.
\end{definition}

\begin{definition}[Liveness]\label{def:liveness}
Liveness holds when, after the network stabilizes, a semantically valid
candidate submitted with a consistent evidence package eventually obtains a
certificate whenever at least $q$ protocol-compliant validators return usable
endorsements under the fault and stochastic-assessment assumptions.
\end{definition}

\begin{definition}[Execution safety]\label{def:execution-safety}
Execution safety holds when applying the certified transition preserves the
application's safety invariant. Semantic certificate validity implies
execution safety only when the chosen predicate $\SemanticValidity$ is strong
enough to encode the relevant invariant-preservation condition.
\end{definition}

\begin{definition}[Consensus properties]\label{def:properties}
For reference, this paper uses agreement, semantic certificate validity,
conventional external validity, liveness, and execution safety in the senses
defined above.
\end{definition}

\subsection{Generative Proposal Extension}

Some semantic-validation systems ask validators to generate candidate
transitions rather than only assess a proposer-supplied transition. That is a
separate extension of the base admission model. In the extension, validator
$i$ may output a proposed transition $x_i$ or a proposed semantic class after
receiving the task, context, evidence, and policy. Coherent invalid support can
then be measured over the multiset of generated transitions or classes, as in
the notation used for semantic-fault concentration in Section~\ref{sec:pbf-model}.

The extension inherits the same sets $\ByzantineSet$ and $\HonestSet$, the same authenticated and
partially synchronous network, and the same non-equivocation requirement for
protocol-compliant validators. It additionally requires an explicit proposal
aggregation rule, canonicalization rule, and evidence-binding rule before it
can be used in a quorum theorem. Unless a result names this extension, the
paper's base admission results concern a fixed candidate $x$ submitted by a
proposer and judged by validators through signed endorse, reject, or abstain
messages.

\section{The Honest Quorum Problem and Epistemic Fault Model}\label{sec:pbf-model}

Section~\ref{sec:system-model} fixes an admission instance in which validators
judge a proposer-supplied candidate transition $x$ for context $s$. For a
committee $S\subseteq \ValidatorSet$, write
$\HonestSet_S=S\cap\HonestSet$ for the protocol-compliant validators in the
committee. In the equal-weight base model $w_i=1$ for all $i\in S$; the
notation below keeps weights only to make the weighted generalization
explicit.

For each protocol-compliant validator $i\in \HonestSet_S$, let $J_i(x,s)$ be
the random structured judgment returned for the admission instance. Its action
component is one of $\mathsf{endorse}$, $\Reject$, or $\Abstain$, and its
randomness is taken over the probability space described in
Section~\ref{sec:system-model}: validator-local randomness, retrieval and tool
randomness, model versions, and latent common-mode variables. Byzantine
validators are not sampled by this semantic error process; they remain
worst-case protocol adversaries.

\subsection{Honest Quorums and Epistemic Faults}

\begin{definition}[The Honest Quorum Problem]\label{def:honest-quorum-problem}
A semantic-certificate protocol exhibits the Honest Quorum Problem when there
exists an admission instance $(x,s,h_s)$ with $\SemanticValidity(x,s)=0$ and a
set $Q\subseteq \HonestSet$ of authenticated, responsive, non-equivocating, and
protocol-compliant validators such that $|Q|\ge q$ and every validator in $Q$
returns a signed endorsement for the same candidate, context, evidence-package
identity, and semantic class. The quorum is honest at the identity,
communication, and protocol layers, not necessarily at the semantic layer; the
failure is that protocol agreement does not establish semantic validity.
\end{definition}

\begin{definition}[Epistemic fault]\label{def:epistemic-fault}
An epistemic fault is a semantically incorrect judgment generated by a
validator in $\HonestSet$: the validator follows the protocol interface and
non-equivocation rule, but endorses a semantically invalid candidate, rejects a
semantically valid candidate, abstains where usable support is required, or
returns a divergent semantic class for the bound evidence package.
\end{definition}

\begin{definition}[Coherent and dispersive epistemic faults]\label{def:coherent-dispersive}
A coherent epistemic fault occurs when multiple protocol-compliant validators
converge on the same invalid semantic judgment or endorsement. A dispersive
epistemic fault occurs when erroneous, abstaining, rejecting, timed-out,
divergent, or uncertain judgments do not concentrate enough support on one
invalid result to form an unsafe certificate, but may reduce liveness.
\end{definition}

\subsection{False Endorsement and Semantic-Safety Budgets}

\begin{definition}[False-endorsement weight]\label{def:false-endorsement-weight}
For an invalid candidate $x$ in context $s$, the false-endorsement weight
outside the Byzantine set is
\begin{equation}
F_S(x,s)
=
\sum_{i\in \HonestSet_S} w_i\,\mathbf{1}^{\mathrm{FE}}_i(x,s).
\label{eq:false-endorsement-weight}
\end{equation}
Here $\mathbf{1}^{\mathrm{FE}}_i(x,s)=1$ exactly when
$J_i(x,s)=\mathsf{endorse}$ and $\SemanticValidity(x,s)=0$, and is zero
otherwise.
\end{definition}

The summation is over $\HonestSet_S$, not over all validators, so $F_S$ excludes
Byzantine support by definition. Any theorem that combines Byzantine behavior
with semantic failures must add the Byzantine budget separately and state the
conditioning event under which the stochastic budget holds.

\begin{definition}[Pointwise semantic-safety budget]\label{def:pointwise-semantic-safety-budget}
For an invalid admission instance $(x,s)$, the pointwise semantic-safety budget
at confidence $1-\delta$ is
\begin{equation}
\EpistemicBudget(S;x,s)
=
\inf\left\{
e:
\Prob\!\left[F_S(x,s)>e \mid x,s,h_s\right]\le \delta
\right\}.
\label{eq:pointwise-semantic-safety-budget}
\end{equation}
\end{definition}

\begin{definition}[Workload semantic-safety budget]\label{def:semantic-safety-budget}\label{def:epistemic-budget}
For a workload support or distribution $D$, the workload-level semantic-safety
budget is
\begin{equation}
\EpistemicBudget(S,D)
=
\inf\left\{
e:
\Prob\!\left[
  \sup_{\substack{(x,s)\in D\\ \SemanticValidity(x,s)=0}}
  F_S(x,s)>e
\right]\le \delta
\right\}.
\label{eq:epistemic-budget}
\end{equation}
\end{definition}

The probability in Equation~\ref{eq:epistemic-budget} must be read with its
sampling convention. A pointwise guarantee conditions on a fixed
$(x,s,h_s)$; an average-case guarantee additionally samples from the workload
distribution $D$; a uniform guarantee requires the bound to hold over every
admissible context in the stated support. A workload-average guarantee can be
invalidated by adversarial task selection: if an adversary chooses contexts
after observing validator families, prompts, retrieval sources, or model
versions, rare high-risk contexts may be selected more often than their
workload probability indicates.

\subsection{Unusable Support and Liveness Budgets}

Semantic safety is not liveness. A committee may have small
$\EpistemicBudget(S,D)$ because few protocol-compliant validators falsely
endorse invalid candidates, while still failing to certify valid candidates
because many protocol-compliant validators reject, abstain, time out, disagree
on the semantic class, or fail canonicalization.

\begin{definition}[Unusable support]\label{def:unusable-support}
For a valid candidate $x$ in context $s$, let $U_i(x,s)=1$ when validator
$i\in \HonestSet_S$ produces support that cannot be used in an endorsement certificate.
Unusable support includes false rejection, abstention, timeout, divergent
semantic output for the bound evidence package, and canonicalization failure.
The total unusable support is
\begin{equation}
U_S(x,s)
=
\sum_{i\in \HonestSet_S} w_i U_i(x,s),
\qquad
\SemanticValidity(x,s)=1.
\label{eq:unusable-support}
\end{equation}
\end{definition}

\begin{definition}[Liveness budget]\label{def:liveness-budget}
For a workload support or distribution $D$, the confidence-indexed liveness
budget is
\begin{equation}
\UnusableBudget(S,D)
=
\inf\left\{
u:
\Prob\!\left[
  \sup_{\substack{(x,s)\in D\\ \SemanticValidity(x,s)=1}}
  U_S(x,s)>u
\right]\le \epsilon
\right\}.
\label{eq:liveness-budget}
\end{equation}
\end{definition}

\subsection{Confidence Events and Calibration Scope}

Both $\EpistemicBudget(S,D)$ and $\UnusableBudget(S,D)$ depend on committee
composition, workload domain, model versions, prompts, retrieval sources,
toolchains, evidence sources, confidence parameters, and calibration time.
They are properties of the underlying stochastic validation process, not
constants of the protocol. Empirical calibration can only estimate them:
$\EstimatedEpistemicBudget(S,D)$ and $\EstimatedUnusableBudget(S,D)$ should be
reported with sample sizes, stratification choices, adjudication procedure, and
confidence intervals. The true budget values remain the quantities in
Equations~\ref{eq:epistemic-budget} and \ref{eq:liveness-budget}.

We formalize the confidence events $\SemanticSafetyEvent$ and $\LivenessEvent$ as events over the underlying probability space $(\Omega, \mathcal{F}, \mathbb{P})$, where $\Omega$ represents the sample space of joint validator judgment outcomes $J_i(x,s)$ for all $i \in \HonestSet_S$ over the domain $D$. Specifically, the events are defined as:
\begin{align}
\SemanticSafetyEvent &= \left\{ \omega \in \Omega \;\middle|\; \sup_{\substack{(x,s)\in D\\ \SemanticValidity(x,s)=0}} F_S(x,s)(\omega) \le \EpistemicBudget(S,D) \right\}, \label{eq:safety-event-formal} \\
\LivenessEvent &= \left\{ \omega \in \Omega \;\middle|\; \sup_{\substack{(x,s)\in D\\ \SemanticValidity(x,s)=1}} U_S(x,s)(\omega) \le \UnusableBudget(S,D) \right\}. \label{eq:liveness-event-formal}
\end{align}
By construction, the definitions in Equations~\ref{eq:epistemic-budget} and \ref{eq:liveness-budget} imply that $\Prob[\SemanticSafetyEvent] \ge 1-\delta$ and $\Prob[\LivenessEvent] \ge 1-\epsilon$.

Several statistical quantities must be kept separate. Marginal error is the
probability that one validator gives the wrong action for a fixed instance.
Pairwise correlation measures dependence between two validators' error events.
Semantic collision measures whether erroneous validators land on the same
invalid semantic class. Upper-tail concentration is the probability that the
aggregate false-endorsement weight exceeds a certificate-relevant threshold.
Common-mode latent events are hidden causes, such as shared training artifacts,
retrieval failures, prompt vulnerabilities, or provider outages, that can move
many validators together. Pairwise correlation does not determine the quorum
tail: two validation processes can have the same marginal errors and pairwise
correlations while assigning very different probability to a rare event in
which many validators endorse the same invalid transition.

\begin{definition}[Epistemic Byzantine Fault Tolerance]\label{def:ebft}
\EBFT{} composes worst-case Byzantine behavior over $\ByzantineSet$ with
confidence-indexed semantic budgets over $\HonestSet_S$. The safety budget
$\EpistemicBudget(S,D)$ bounds false endorsement by protocol-compliant
validators outside the Byzantine set, while $\UnusableBudget(S,D)$ bounds
support that is epistemically unusable for liveness. The two budgets are distinct;
$\EpistemicBudget$ alone cannot establish liveness.
\end{definition}

\subsection{Proposal-Generation Extension}

The base model above concerns validators judging a fixed candidate. In a
proposal-generation extension, validators may instead generate candidate
transitions before a certificate rule groups their outputs. Let $\mathcal{Z}_s$
be the semantic class space for context $s$, and let
$\InvalidClassSet{s}\subseteq\mathcal{Z}_s$ be the set of invalid semantic
classes. If validator $i$ generates transition $X_i(s)$, its class is
$Z_i(s)=\pi_s(X_i(s))\in\mathcal{Z}_s$.

Coherent invalid support in the extension is a class-level quantity:
\begin{equation}
C_S(s)
=
\max_{z\in\InvalidClassSet{s}}
\sum_{i\in \HonestSet_S}
w_i \mathbf{1}\!\left[Z_i(s)=z\right].
\label{eq:coherent-invalid-support}
\end{equation}
Here $z$ is a semantic class, not a proposal. This extension requires its own
aggregation and canonicalization rules before it can be used in a quorum
theorem; the base semantic-safety budget is defined by false endorsements of a
proposer-supplied candidate through $F_S(x,s)$.

\section{Impossibility and Lower Bounds}\label{sec:impossibility}

The lower bounds below formalize the separation between protocol-visible
agreement and semantic correctness. A protocol transcript contains the fields
available to the consensus protocol: validator identities, signatures,
views or rounds, candidate digests, evidence-package identities, and structured
endorse, reject, or abstain judgments. The semantic truth of
$\SemanticValidity(x,s)$ is not protocol-visible unless it is supplied by an
oracle, verifier, or independently justified evidence. A concentration bound
does not reveal semantic truth; it only bounds the risk that
protocol-compliant validators concentrate enough false endorsement to form an
unsafe certificate.

\begin{theorem}[Agreement does not imply semantic certificate validity]
\label{thm:agreement-not-validity}\label{prop:agreement-not-validity}
There exists a legal execution in the base admission model, with a static
Byzantine set, in which all messages are authenticated, no protocol-compliant
validator equivocates, the consensus protocol satisfies agreement and
termination, and an invalid transition obtains a certificate. This is an
existence separation; it is deterministic once the protocol-compliant semantic
judgments are fixed, and it is not a necessary condition or a tight threshold
result.
\end{theorem}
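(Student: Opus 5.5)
The proof is a direct construction: I will exhibit one concrete execution and check each clause of the statement against it. I fix any standard partially synchronous \BFT{} protocol that the base admission model can wrap. A PBFT-style protocol with $\NumValidators=3\ByzantineBound+1$ and certificate threshold $q=2\ByzantineBound+1$ will do, and I take $\ByzantineBound\ge 1$ so that the Byzantine set is nonempty and static. I fix a context $s$, an evidence package with identity $h_s$, and a candidate $x$ with $\SemanticValidity(x,s)=0$. Nothing in the model forbids such an $x$: Definition~\ref{def:semantic-validity} places no restriction on which candidates a proposer may submit, and $\SemanticValidity$ is not protocol-visible.

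The key step is to fix the protocol-compliant semantic judgments. I set every $i\in\HonestSet$ to return the signed judgment $J_i$ with action $\mathsf{endorse}$, the same view, $\mathsf{digest}(x)$, $s$, $h_s$, and one common class $c=\Canonicalize{x}{s}$. This choice is admissible, because Definition~\ref{def:epistemic-fault} explicitly allows a validator in $\HonestSet$ to endorse an invalid candidate while following the message format, the evidence binding, and non-equivocation. Each honest validator signs exactly one judgment for the instance, so no protocol-compliant validator equivocates. The Byzantine validators may behave arbitrarily, so I let them stay silent or also endorse; this shows the conclusion does not depend on Byzantine participation. I schedule the execution after GST so that every message is delivered within the synchrony bound. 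Then $|\HonestSet|\ge \NumValidators-\ByzantineBound=2\ByzantineBound+1=q$ honest endorsements agree on every field required in the certificate definition of Section~\ref{sec:system-model}. The set $Q$ of these judgments is therefore a well-formed certificate satisfying conventional external validity (Definition~\ref{def:external-validity}).

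I then check agreement and termination. These follow from the protocol's standard guarantees, since all participants outside $\ByzantineSet$ follow it and $\NumValidators>3\ByzantineBound$. Termination holds because after stabilization the protocol decides the value for which a certificate exists. Agreement holds by the usual quorum-intersection argument: two quorums of size $2\ByzantineBound+1$ intersect in at least $\ByzantineBound+1$ identities, so at least one honest identity lies in both. Honest validators signed only one judgment for the instance, so no conflicting certificate can form. In this execution the decided certificate is $Q$ for $x$, yet $\SemanticValidity(x,s)=0$, so semantic certificate validity (Definition~\ref{def:semantic-certificate-validity}) fails. The execution also exhibits the Honest Quorum Problem of Definition~\ref{def:honest-quorum-problem}, because $Q\subseteq\HonestSet$ suffices on its own.

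The main obstacle is making sure the fixed semantic judgments count as a legal outcome of the stochastic model, and not as an illegal protocol deviation. The statement is existential and "deterministic once the judgments are fixed", so I only need the outcome to be protocol-compliant. I do not need it to have positive probability. If a probabilistic reading is wanted, I would note that a latent common-mode variable driving all honest validators to endorse is permitted by the sample space of Section~\ref{sec:system-model}. I would close by remarking that the transcript is identical to the one produced when $x$ is valid. No protocol-visible predicate can distinguish the two cases, which is why the result is only an existence separation and not a threshold result.
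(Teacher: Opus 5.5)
Your proposal is correct and follows the same direct construction as the paper. Both fix an invalid candidate, have at least $q$ protocol-compliant validators each sign exactly one endorsement bound to the same view, digest, context, $h_s$, and class, note that this is a conventionally valid certificate, and let a consensus protocol with agreement and termination decide it. The only difference is cosmetic: the paper takes an empty Byzantine set and any agreement/termination protocol whose external-validity rule accepts the certificate, whereas you instantiate a PBFT-style protocol with $N=3f+1$, $q=2f+1$ and a nonempty static Byzantine set, which is more specific than needed but equally valid.
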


\begin{proof}
Consider an admission instance with candidate $x$ and context $s$ such that
$\SemanticValidity(x,s)=0$. Let the Byzantine set be empty, so every validator
is protocol-compliant. Choose a set $Q\subseteq \HonestSet$ of distinct validators with
$|Q|\ge q$. For each $i\in Q$, let the random semantic judgment in this
execution be a signed endorsement
\[
  J_i(x,s)=\mathsf{endorse}
\]
for the same view, candidate digest, context identifier, evidence-package
identity, and semantic class. Validators outside $Q$ may abstain. Every
message is signed by its sender, so authentication holds. Each validator sends
at most one judgment for the admission instance, so no protocol-compliant
validator equivocates.

By the certificate rule from Section~\ref{sec:system-model}, the signed
endorsements from $Q$ form a conventional externally valid certificate for
$(s,x,h_s)$.
Run any consensus protocol that satisfies agreement and termination and whose
external validity rule accepts such a well-formed certificate. Since the
network is in a legal timely execution, termination gives a decision. Since
the protocol satisfies agreement, all protocol-compliant replicas decide the same
certified transition. The decided transition is nevertheless semantically
invalid because $\SemanticValidity(x,s)=0$. Hence agreement and termination do
not imply semantic certificate validity.
\end{proof}

\begin{theorem}[No count-only semantic guarantee without concentration]
\label{thm:count-only-impossible}\label{prop:count-only-impossible}
Let $\Gamma$ be any nontrivial certificate rule whose decision depends only on
signed validator identities and protocol-visible judgments. If the distribution
of protocol-compliant semantic errors is unrestricted, then there exists a
validator-output distribution and an invalid admission instance for which an
invalid candidate satisfies $\Gamma$ with probability one. This is an
impossibility result for agreement-visible, count-only evidence under a static
adversary and an unrestricted semantic-error distribution; it is not a numeric
tightness claim.
\end{theorem}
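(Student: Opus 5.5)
The proof is an indistinguishability argument, and I will build it from the construction in Theorem~\ref{thm:agreement-not-validity}. First I make \emph{nontrivial} precise: $\Gamma$ is nontrivial if there is at least one protocol-visible transcript $T^\star$ that it accepts. Such a transcript consists of signed judgments from distinct identities, with a fixed view, candidate digest, context identifier, evidence-package identity $h_s$, reported class, and endorse/reject/abstain actions. A rule that accepts nothing is excluded, since it gives vacuous semantic safety and no liveness.

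Fix a valid admission instance $(x,s,h_s)$ whose accepting transcript $T^\star$ witnesses nontriviality. Then pick an invalid instance $(x',s',h_{s'})$ with $\SemanticValidity(x',s')=0$. Since $\SemanticValidity$ is not protocol-visible, I will choose it with the same digest-level presentation up to relabeling of the instance fields; or, more simply, take the same protocol-visible fields under an alternative ground-truth specification. The point is that $\Gamma$ cannot read the truth value.

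Next I construct the validator-output distribution. Take $\ByzantineSet=\emptyset$, or keep any static Byzantine set and let its members send exactly the messages they send in $T^\star$. Let every protocol-compliant validator in $T^\star$ deterministically, that is with probability one, emit the same action and class it has in $T^\star$, now bound to $(x',s',h_{s'})$. This distribution is permitted because the semantic-error distribution is unrestricted: it is a point mass on a fully coherent epistemic fault. The resulting judgments are authenticated, and each validator signs once, so there is no equivocation and every execution is legal.

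The realized transcript equals $T^\star$ in every field that $\Gamma$ depends on, so $\Gamma$ accepts with probability one. In particular, the false-endorsement weight $F_S(x',s')$ equals the number of protocol-compliant endorsers in $T^\star$, almost surely.

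The main obstacle is the identification step: justifying that the invalid instance produces a transcript that is protocol-visibly identical to an accepting one. The digest and $h_s$ differ between instances, so literal equality fails. I will handle this by arguing up to renaming of the instance-identity fields. Count-only rules are invariant under such renaming, because they test only for agreement of the fields, the threshold, and distinctness, not for particular values. I will therefore phrase "depends only on identities and protocol-visible judgments" as invariance under instance relabeling. If that invariance is considered too strong, the fallback is to use the same $(x,s,h_s)$ under two semantic predicates that differ only at that point, exploiting that $\SemanticValidity$ is a specification-level truth invisible to $\Gamma$.
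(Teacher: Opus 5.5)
Your proposal is correct and follows the paper's own argument: take an accepted transcript $T^\star$ witnessing nontriviality, note that $\Gamma$ cannot distinguish a valid from an invalid interpretation of it, and realize the invalid interpretation with probability one via a point-mass, fully coherent output distribution, which the unrestricted-error hypothesis permits. The paper handles your identification step with your fallback in its in-model form. It keeps every visible field of $T^\star$ (digest, context identifier, $h_s$, signatures, judgments) literally identical, holds the predicate fixed, and varies only the hidden semantic state ($\SemanticValidity(x,s^+)=1$ versus $\SemanticValidity(x,s^-)=0$), so the relabeling-invariance route is unnecessary and would in fact add a hypothesis on $\Gamma$ (excluding rules that test particular digest values) that the statement does not grant.
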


\begin{proof}
Because $\Gamma$ is nontrivial, there is some protocol-visible transcript
$T^\star$ that $\Gamma$ accepts. Otherwise the rule never certifies any
candidate and cannot be a live certificate rule. Let $I(T^\star)$ be the set
of validator identities whose signed judgments in $T^\star$ are used by
$\Gamma$, and let the visible judgments in $T^\star$ be well-formed
endorsements for a candidate digest, context identifier, evidence-package
identity, and view.

Construct two specification-level interpretations compatible with the same
protocol-visible transcript $T^\star$. In execution $E^+$, the semantic state satisfies
$\SemanticValidity(x,s^+)=1$, and the endorsing validators in $I(T^\star)$
return the signed judgments shown in $T^\star$. In execution $E^-$, the same
candidate digest, context identifier, evidence-package identity, signatures,
validator identities, and visible judgments occur, but the semantic state
associated with the admission instance satisfies $\SemanticValidity(x,s^-)=0$.
This represents the case in which the protocol-visible evidence and judgments
are compatible with both interpretations because they do not include a sound
semantic validity proof.

The two executions are indistinguishable to $\Gamma$ because $\Gamma$ is a
function only of the protocol-visible transcript, and that transcript is
$T^\star$ in both executions. Therefore $\Gamma$ accepts in $E^-$ exactly as
it accepts in $E^+$.

It remains to realize $E^-$ by an unrestricted semantic-error distribution.
Choose a distribution concentrated on the event that every protocol-compliant
validator whose identity appears in $I(T^\star)$ returns the same signed
endorsement shown in $T^\star$ for the invalid instance, while all other
validators return any judgments consistent with the transcript. This
distribution is allowed because protocol-compliant semantic errors are
unrestricted. Under that distribution, the invalid candidate satisfies
$\Gamma$ with probability one. Thus no rule that depends only on counts,
identities, and visible judgments can guarantee semantic certificate validity
without an additional concentration or semantic-grounding assumption.
\end{proof}

\begin{theorem}[Common-mode replication floor]
\label{thm:common-mode-floor}\label{prop:clone-non-amplification}
Let $G\subseteq \HonestSet$ be a replicated model family, and let
$\CommonModeEvent$ be a latent common-mode event under which every validator in $G$
falsely endorses the same invalid candidate $x$ in context $s$. If a threshold
rule is satisfied by the endorsements of $G$, then the unsafe-certificate
probability is at least $\Prob[\CommonModeEvent]$. This is a probabilistic lower
bound for a fixed committee and static Byzantine adversary; it is not an upper
bound and is not claimed tight unless all other unsafe events are ruled out.
\end{theorem}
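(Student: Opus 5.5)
The plan is to prove the bound by event inclusion followed by monotonicity of probability. Let $\mathcal{U}$ be the event that an unsafe certificate for $(s,x,h_s)$ is formed. An unsafe certificate is one satisfying the certificate rule of Section~\ref{sec:system-model} with $\SemanticValidity(x,s)=0$. It suffices to show $\CommonModeEvent\subseteq\mathcal{U}$ (up to a null set). Then $\Prob[\mathcal{U}]\ge\Prob[\CommonModeEvent]$ follows immediately. The Byzantine adversary is static and worst-case, so I will fix the Byzantine strategy to be the most favourable one for safety. Concretely, Byzantine validators abstain or send nothing. Any lower bound that holds under this benign strategy then also holds for the worst-case adversary, because the adversary can only add unsafe outcomes, not remove them. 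This is why no Byzantine help is needed.

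\textbf{Step 1: build the certificate on $\CommonModeEvent$.} Fix $\omega\in\CommonModeEvent$. By hypothesis, every $i\in G$ returns a signed $\mathsf{endorse}$ judgment for the same candidate $x$ and context $s$. Validators in $G\subseteq\HonestSet$ are protocol-compliant, so several facts follow.
\begin{itemize}
\item Each signs at most one judgment for the admission instance.
\item Each binds the same view, candidate digest, context identifier and evidence-package identity $h_s$, since these are protocol conditions that compliant validators enforce.
\item Each computes its class for the submitted candidate under the bound evidence package.
\end{itemize}
Take ``the same invalid candidate'' in the definition of $\CommonModeEvent$ to include the same reported class; this is the natural reading, and in any case Assumption~\ref{assumption:canonicalization} makes $\pi_s(x)$ deterministic. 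The set $\{J_i:i\in G\}$ then consists of endorsements from distinct identities that agree on every field the certificate rule requires. The hypothesis that the threshold rule is satisfied by the endorsements of $G$ supplies $|G|\ge q$. So these judgments form a certificate in the sense of Section~\ref{sec:system-model}, and since $\SemanticValidity(x,s)=0$ the certificate is unsafe.

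\textbf{Step 2: formation and acceptance.} The certificate must actually be formed and accepted, not merely exist. I will invoke the liveness/termination reasoning used in the proof of Theorem~\ref{thm:agreement-not-validity}. After stabilization the endorsements from $G$ are delivered, and any consensus protocol whose external-validity rule accepts well-formed certificates decides on this one. Here I may need to read ``unsafe-certificate probability'' as the probability that a certificate for an invalid candidate becomes available or accepted, consistent with Theorem~\ref{thm:count-only-impossible}. This gives $\CommonModeEvent\subseteq\mathcal{U}$. I will also note that $\CommonModeEvent$ is measurable in the semantic probability space: it is a latent variable in the space described in Section~\ref{sec:system-model}. Monotonicity then yields the claim.

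\textbf{Main obstacle.} The substantive step is Step 1's requirement that the endorsements agree on the semantic class and evidence identity. Without it, dispersive disagreement in $c_i$ could block certificate formation even though all members of $G$ endorse. I will handle this by making the agreement explicit in the formalization of $\CommonModeEvent$, since ``the same invalid candidate'' under deterministic canonicalization yields one class. Finally, I will remark why no upper bound or tightness follows: other unsafe events, such as mixtures of $G$ with other validators or Byzantine endorsements, may add probability outside $\CommonModeEvent$. Equality holds only if all such events are excluded, which matches the statement's caveat.
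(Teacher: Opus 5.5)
Your proof is correct and is essentially the paper's argument: show $\CommonModeEvent\subseteq A$, where $A$ is the event that an invalid candidate obtains a certificate accepted by the threshold rule, and then apply monotonicity of probability. The paper folds class and evidence agreement into the hypothesis that the threshold rule is satisfied by $G$'s endorsements (the appendix version states the same-class condition explicitly) and takes $A$ to be threshold-rule acceptance, so your Byzantine-strategy and GST/consensus-decision steps are unnecessary; also, your fallback that a deterministic $\pi_s$ forces equal reported classes does not hold on its own, because returning a divergent class is itself an epistemic fault.
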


\begin{proof}
Let $A$ be the event that an invalid candidate obtains a certificate accepted
by the threshold rule. By assumption, on the latent event $\CommonModeEvent$ every
validator in $G$ endorses the same candidate $x$, and
$\SemanticValidity(x,s)=0$. The threshold rule is satisfied by the endorsements
of $G$, so whenever $\CommonModeEvent$ occurs the invalid candidate obtains an
accepted certificate. Therefore $\CommonModeEvent\subseteq A$. Monotonicity of
probability gives
\[
  \Prob[A]\ge \Prob[\CommonModeEvent].
\]
Adding more clones from the same replicated family may change the conditional
variance of endorsements on $\neg\CommonModeEvent$, but it does not reduce
$\Prob[\CommonModeEvent]$ or the implication $\CommonModeEvent\subseteq A$ as long as
the cloned group still satisfies the threshold rule. The common-mode event is
therefore a floor on unsafe-certificate probability.
\end{proof}

\begin{theorem}[Necessity of external grounding]
\label{thm:external-grounding-necessary}\label{prop:apparent-diversity}
No protocol whose acceptance rule uses only agreement signals can guarantee
semantic certificate validity for all executions unless it assumes at least
one additional semantic grounding mechanism: a deterministic validity oracle,
a trusted verifier, a bounded false-endorsement distribution, or independently
justified semantic evidence. This is a necessity-of-assumption result for
protocols whose inputs are only protocol-visible agreement signals; it does
not assert that any one grounding mechanism is uniquely necessary. The argument
is deterministic.
\end{theorem}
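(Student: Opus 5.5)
The plan is to argue by contraposition and reduce to the indistinguishability construction already used for Theorem~\ref{thm:count-only-impossible}. Fix a protocol whose acceptance rule $\Gamma$ takes as input only agreement signals: identities, signatures, views, candidate digests, context and evidence-package identifiers, and endorse/reject/abstain judgments. Assume none of the four grounding mechanisms is in force. Then three things are missing:
\begin{itemize}
\item no oracle or verifier output enters the transcript;
\item the distribution of protocol-compliant false endorsement is unrestricted, so no bound on $F_S(x,s)$ is assumed;
\item the evidence package $h_s$ is not assumed to carry a sound proof of $\SemanticValidity(x,s)$.
\end{itemize}
Under these conditions I will exhibit a single execution in which $\Gamma$ accepts a certificate for an invalid candidate. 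That execution is a counterexample to semantic certificate validity for all executions.

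The steps, in order:
\begin{enumerate}
\item If $\Gamma$ is trivial, meaning it never accepts, the conclusion is vacuous for liveness and degenerate for the statement, so I set that case aside. Otherwise pick an accepted transcript $T^\star$ consisting of at least $q$ matching endorsements.
\item Build two specification-level executions $E^+$ and $E^-$ with the same protocol-visible transcript $T^\star$. They differ only in semantic truth: $\SemanticValidity(x,s^+)=1$ and $\SemanticValidity(x,s^-)=0$. This is consistent only because no grounding mechanism ties $h_s$ or any visible field to the value of $\SemanticValidity$.
\item Since $\Gamma$ is a function of the transcript alone, it accepts in $E^-$.
\item Realize $E^-$ legally, as in Theorem~\ref{thm:agreement-not-validity}: empty Byzantine set, authenticated signatures, each protocol-compliant validator signing one judgment, timely network. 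Agreement and termination then yield a decided invalid certificate.
\end{enumerate}
Because the construction is a fixed execution rather than a probabilistic one, the argument is deterministic, as the statement claims.

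To make precise that each mechanism independently suffices to block the construction (and hence that none is uniquely necessary), I will check the four cases separately:
\begin{itemize}
\item An oracle or verifier adds a visible field that differs between $E^+$ and $E^-$.
\item A bounded false-endorsement distribution with $q>\EpistemicBudget+\ByzantineBound$ makes the execution $E^-$ lie outside the event $\SemanticSafetyEvent$, or outside the assumed support.
\item Justified evidence makes $h_s$ differ between the two executions.
\end{itemize}
This shows the disjunction in the statement is what the argument actually requires.

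The main obstacle is step~2: making rigorous that the same transcript, including the evidence-package identity $h_s$ that binds $s$, is compatible with two states of opposite validity. I would handle it by modeling $\SemanticValidity$ as specification-level truth that is not a function of protocol-visible fields, exactly as stated in Section~\ref{sec:impossibility}. Concretely, I would let the digest $h(s,\mathsf{evidence},\mathsf{policy})$ be identical while $\SemanticValidity$ differs. This is permitted precisely because the evidence is not assumed to be independently justified. Absence of justified evidence is therefore taken to mean exactly that the map from visible transcripts to validity is not determined.
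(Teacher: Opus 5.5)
Your proposal matches the paper's proof: with no grounding mechanism assumed, nontriviality of the acceptance rule (the paper's ``useful as a certificate protocol'') yields an accepted transcript $T^\star$, and the indistinguishability construction of Theorem~\ref{thm:count-only-impossible} gives an execution with the same agreement signals in which the certified candidate is invalid, so the rule must accept an invalid certificate. The paper does not include your explicit deterministic realization of $E^-$ in the style of Theorem~\ref{thm:agreement-not-validity} or your per-mechanism blocking check, and the statement does not require either; one correction is that a never-accepting rule is not a vacuous case but a genuine exception (it satisfies semantic certificate validity trivially), so nontriviality has to be stated as an explicit hypothesis, as the paper does.
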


\begin{proof}
Assume, for contradiction, that a protocol guarantees semantic certificate
validity for all executions while using only agreement signals: signed
identities, rounds or views, quorum counts, candidate digests, and
protocol-visible judgments. Also assume that none of the listed semantic
grounding mechanisms is available. Then the protocol has no deterministic
oracle or trusted verifier for $\SemanticValidity(x,s)$, no probabilistic
upper bound on false endorsement by protocol-compliant validators, and no
independently justified semantic evidence that rules out indistinguishable
valid and invalid interpretations of the same visible transcript.

Because the protocol is useful as a certificate protocol, it accepts some
protocol-visible transcript $T^\star$ for a certified candidate. By the
indistinguishability construction in Theorem~\ref{thm:count-only-impossible},
there is an execution with the same agreement signals and visible judgments in
which the certified candidate is semantically invalid. The protocol must make
the same acceptance decision in both executions because its inputs are the
same. It therefore accepts an invalid certificate in the second execution,
contradicting the assumed semantic certificate validity guarantee.

Agreement signals alone are therefore insufficient. A semantic certificate
validity guarantee needs some additional grounding assumption, such as an
oracle, verifier, bounded false-endorsement distribution, or independently
justified semantic evidence.
\end{proof}

These lower bounds show that agreement-visible evidence is insufficient for
semantic certificate validity. A system must add at least one semantic
grounding mechanism: deterministic validity checking, trusted verification,
independently justified evidence, or a calibrated concentration bound over
false endorsement.
Section~\ref{sec:quorum-theory} studies the last option: agreement constrains
how replicas order and decide protocol-visible certificates, while an explicit
budget bounds the probabilistic concentration of invalid semantic endorsement.

\section{\texorpdfstring{\EBFT{}}{EBFT} Safety, Agreement, and Liveness Thresholds}\label{sec:quorum-theory}

The base protocol is a $q$-signature semantic-certificate protocol. For a log
position or admission instance, a certificate for candidate $x$ in context $s$
is a set $Q$ of signed endorse judgments from distinct validators, all bound
to the same view, candidate digest, evidence-package identity $h_s$, and
semantic class. The certificate rule accepts exactly when $|Q|\ge q$. The
base model is equal-weight, so $\NumValidators=|\ValidatorSet|$ and
$q$ is an integer threshold.

Two certificates conflict when they certify different candidate digests,
contexts, evidence-package identities, or semantic classes for the same log
position. Protocol-compliant validators do not equivocate, so a validator in
$\HonestSet$ signs at most one endorse judgment for a log position. Byzantine
validators may equivocate or withhold. The following table separates the
assumptions needed for semantic certificate validity, agreement, and liveness.

\begin{table*}[tbp]
\caption{Assumptions used by the certificate-threshold theorems.}
\label{tab:threshold-assumptions}
\scriptsize
\begin{tabular}{@{}L{0.18\textwidth}L{0.39\textwidth}L{0.18\textwidth}L{0.17\textwidth}@{}}
\toprule
Property & Assumptions & Threshold condition & Guarantee \\
\midrule
Semantic certificate validity &
At most $\ByzantineBound$ Byzantine validators; on event $\SemanticSafetyEvent$,
protocol-compliant false endorsement of any invalid candidate is at most
$\EpistemicBudget$. &
$q>\ByzantineBound+\EpistemicBudget$ &
On $\SemanticSafetyEvent$, no invalid candidate forms a $q$-certificate. \\
Agreement &
Authenticated signatures; only Byzantine validators may equivocate; conflicting
certificates are for the same log position. &
$2q-\NumValidators>\ByzantineBound$ &
No two conflicting $q$-certificates both form. \\
Liveness &
Partial synchrony after GST; valid candidate and consistent evidence package;
Byzantine validators may withhold; on event $\LivenessEvent$, at most
$\UnusableBudget$ protocol-compliant support is unusable. &
$q\le \NumValidators-\ByzantineBound-\UnusableBudget$ &
A valid candidate can obtain a certificate on $\LivenessEvent$. \\
Feasibility &
Both stochastic events hold: $\SemanticSafetyEvent\cap\LivenessEvent$. &
$\max\{\ByzantineBound+\EpistemicBudget,(\NumValidators+\ByzantineBound)/2\}<q
\le \NumValidators-\ByzantineBound-\UnusableBudget$ &
Semantic certificate validity and liveness hold on their events; agreement holds deterministically. \\
\botrule
\end{tabular}
\end{table*}

\subsection{Semantic Certificate Validity, Agreement, and Liveness}

\begin{theorem}[Semantic certificate validity]\label{thm:semantic-certificate-threshold}
For a fixed committee and stated pointwise, workload-level, or
domain-conditional scope, assume a static Byzantine adversary with
$|\ByzantineSet|\le\ByzantineBound$. Condition on the event
$\SemanticSafetyEvent$, under which every invalid candidate under
consideration receives at most $\EpistemicBudget$ protocol-compliant
false-endorsement weight outside the Byzantine set. If
\begin{equation}
q>\ByzantineBound+\EpistemicBudget,
\label{eq:semantic-validity-threshold}
\end{equation}
then no invalid candidate can form a $q$-signature semantic certificate.
For integer thresholds, it is sufficient to choose
\begin{equation}
q\ge \left\lfloor \ByzantineBound+\EpistemicBudget \right\rfloor+1.
\label{eq:semantic-validity-rounded}
\end{equation}
This is a probabilistic statement through the event
$\SemanticSafetyEvent$ and is a sufficient condition for this certificate
rule; no tightness claim is made for protocols with additional semantic
certificate-validity evidence.
\end{theorem}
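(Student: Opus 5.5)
The argument is a direct counting argument on the support of any would-be certificate, carried out pointwise on the event $\SemanticSafetyEvent$. Fix an outcome $\omega\in\SemanticSafetyEvent$ and suppose, for contradiction, that some invalid candidate $x$ in context $s$ (with $(x,s)$ in the stated scope, so $\SemanticValidity(x,s)=0$) has a $q$-signature semantic certificate $Q$. By the certificate rule of Section~\ref{sec:system-model}, $Q$ consists of signed $\mathsf{endorse}$ judgments from distinct validators, all bound to the same view, digest, context, $h_s$ and class, and $|Q|\ge q$. I will use the disjoint decomposition $Q=Q_{\HonestSet}\cup Q_{\ByzantineSet}$ and bound each part separately.

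First, the Byzantine part satisfies $|Q_{\ByzantineSet}|\le|\ByzantineSet|\le\ByzantineBound$, because the adversary is static and signatures are attributable to distinct identities. Equivocation by Byzantine validators is harmless here: each Byzantine identity contributes at most one unit to a single certificate $Q$, since $Q$ is a set of judgments from distinct validators.

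Second, for the protocol-compliant part, each $i\in Q_{\HonestSet}$ signed an $\mathsf{endorse}$ judgment for the invalid $x$. By non-equivocation and evidence binding, this is $i$'s unique judgment $J_i(x,s)$ for the instance, so $\mathbf{1}^{\mathrm{FE}}_i(x,s)=1$. With unit weights this gives $|Q_{\HonestSet}|\le F_S(x,s)$. On $\SemanticSafetyEvent$ we have $F_S(x,s)\le\sup F_S\le\EpistemicBudget$ by Equation~\ref{eq:safety-event-formal}; in the pointwise scope, the corresponding pointwise event is used instead.

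Summing the two bounds gives $q\le|Q|\le\ByzantineBound+\EpistemicBudget<q$, which is a contradiction. For the integer rounding, $q\ge\lfloor\ByzantineBound+\EpistemicBudget\rfloor+1>\ByzantineBound+\EpistemicBudget$ is immediate, since $y<\lfloor y\rfloor+1$. Finally, because the conclusion holds for every $\omega\in\SemanticSafetyEvent$, the guarantee holds with probability at least $1-\delta$.

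The main point of care, rather than real difficulty, is justifying $|Q_{\HonestSet}|\le F_S(x,s)$: the endorsement in the certificate must be the same random judgment counted by $F_S$. This relies on non-equivocation and on the context-consistency rule, so that a protocol-compliant signature cannot be reused across evidence packages. It also requires the committee $S$ to contain all certificate signers.
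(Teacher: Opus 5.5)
Your proof is correct and follows the paper's argument. You bound the Byzantine contribution by $\ByzantineBound$ and, on $\SemanticSafetyEvent$, the protocol-compliant false-endorsement contribution by $\EpistemicBudget$, so no threshold $q>\ByzantineBound+\EpistemicBudget$ is reachable, and $\lfloor\ByzantineBound+\EpistemicBudget\rfloor+1$ is an integer strictly above that bound; your explicit check that each protocol-compliant signature in $Q$ is the judgment counted in $F_S(x,s)$ (via non-equivocation, evidence binding, and the signers lying in $S$) spells out a step the paper leaves implicit.
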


\begin{proof}
Fix an invalid candidate $x$ in context $s$. Byzantine validators contribute
at most $\ByzantineBound$ signatures because
$|\ByzantineSet|\le \ByzantineBound$. On $\SemanticSafetyEvent$,
protocol-compliant validators outside $\ByzantineSet$ contribute at most
$\EpistemicBudget$ false endorsements to any invalid candidate. Hence an
invalid candidate can collect at most
$\ByzantineBound+\EpistemicBudget$ certificate-eligible endorse signatures.
If $q>\ByzantineBound+\EpistemicBudget$, this support is insufficient to reach
the certificate threshold. Since the argument holds for every invalid
candidate in the event, no invalid candidate can form a $q$-certificate.
The rounded condition is the least integer threshold strictly larger than
$\ByzantineBound+\EpistemicBudget$.
\end{proof}

\begin{theorem}[Agreement]\label{thm:agreement-threshold}
Assume authenticated signatures, a static Byzantine set with
$|\ByzantineSet|\le\ByzantineBound$, and that only Byzantine validators may
equivocate. This result is deterministic and uses no semantic probability
space. If
\begin{equation}
2q-\NumValidators>\ByzantineBound,
\label{eq:agreement-threshold}
\end{equation}
equivalently $q>(\NumValidators+\ByzantineBound)/2$, then two conflicting
$q$-certificates cannot both form. For integer thresholds, it is sufficient to
choose
\begin{equation}
q\ge
\left\lfloor\frac{\NumValidators+\ByzantineBound}{2}\right\rfloor+1.
\label{eq:agreement-rounded}
\end{equation}
The inequality is necessary and sufficient for the stated
quorum-intersection argument; it is not claimed tight for all possible
consensus protocols.
\end{theorem}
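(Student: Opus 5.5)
The proof is the standard Byzantine quorum-intersection argument. I argue by contradiction: suppose two conflicting $q$-certificates $Q_1$ and $Q_2$ both form for the same log position. By definition each is a set of signed endorse judgments from distinct validators, so $|Q_1|\ge q$ and $|Q_2|\ge q$ as sets of identities in $\ValidatorSet$. Inclusion--exclusion together with $|Q_1\cup Q_2|\le \NumValidators$ gives
\[
  |Q_1\cap Q_2| \ge |Q_1|+|Q_2|-|Q_1\cup Q_2| \ge 2q-\NumValidators .
\]
Under hypothesis~\eqref{eq:agreement-threshold} this is strictly greater than $\ByzantineBound$.

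Next I separate the intersection into its Byzantine and protocol-compliant parts. Since $|\ByzantineSet|\le\ByzantineBound$, at most $\ByzantineBound$ identities in $Q_1\cap Q_2$ are Byzantine, so there is some $i\in Q_1\cap Q_2\cap\HonestSet$. Authentication ensures that the judgment attributed to $i$ in each certificate was actually signed by $i$. The two certificates conflict, meaning they differ in candidate digest, context, evidence-package identity, or semantic class for the same log position. So $i$ signed two distinct endorse judgments for that position. This contradicts the base-model rule that a validator in $\HonestSet$ signs at most one endorse judgment per log position. Nothing in the argument refers to $\SemanticValidity$ or to the events $\SemanticSafetyEvent$ and $\LivenessEvent$, which confirms that the result is deterministic.

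The integer form~\eqref{eq:agreement-rounded} follows by routine checking. The least integer strictly greater than $(\NumValidators+\ByzantineBound)/2$ is $\lfloor(\NumValidators+\ByzantineBound)/2\rfloor+1$, so any $q$ at or above it satisfies~\eqref{eq:agreement-threshold}.

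The one point that needs care is the claim that the inequality is necessary for this intersection argument. I would handle it with a counting construction. If $2q-\NumValidators\le\ByzantineBound$, choose $Q_1,Q_2$ of size exactly $q$ whose overlap has size $\max\{0,2q-\NumValidators\}\le\ByzantineBound$, and place every validator of the overlap in $\ByzantineSet$. The Byzantine validators in the overlap then equivocate to sign both certificates, and the remaining members of each certificate are distinct protocol-compliant validators that each sign only once. This shows the intersection argument gives nothing below the threshold. As the statement itself notes, it says nothing about other consensus protocols.
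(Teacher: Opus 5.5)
Your proposal is correct and follows essentially the same route as the paper: inclusion--exclusion gives $|Q_1\cap Q_2|\ge 2q-\NumValidators>\ByzantineBound$, so the overlap contains a protocol-compliant validator who would have signed two conflicting endorse judgments for the same log position, contradicting non-equivocation. Your necessity construction, with two $q$-sets overlapping in $\max\{0,2q-\NumValidators\}\le\ByzantineBound$ identities that are all placed in the Byzantine set and then equivocate, matches the appendix argument built on the paper's quorum-intersection lemma.
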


\begin{proof}
Let $Q_1$ and $Q_2$ be the signer sets of two $q$-certificates for the same
log position. Since both are subsets of the $\NumValidators$ validators,
\[
|Q_1\cap Q_2|
= |Q_1|+|Q_2|-|Q_1\cup Q_2|
\ge 2q-\NumValidators.
\]
If $2q-\NumValidators>\ByzantineBound$, the intersection has more than
$\ByzantineBound$ validators. At most $\ByzantineBound$ validators are
Byzantine, so $Q_1\cap Q_2$ contains at least one protocol-compliant
validator. A protocol-compliant validator cannot sign two conflicting
endorsements for the same log position. Therefore the two conflicting
certificates cannot both exist.

This proof does not charge epistemically faulty but non-equivocating
validators as Byzantine. They may make incorrect semantic judgments, but under
the base model they do not create two conflicting signatures for the same log
position.
\end{proof}

\begin{theorem}[Liveness]\label{thm:liveness-threshold}
For a valid candidate inside the stated calibrated scope, condition on the
event $\LivenessEvent$ under which at most $\UnusableBudget$
protocol-compliant validator weight is unusable. Assume a static Byzantine set
with $|\ByzantineSet|\le\ByzantineBound$, partial synchrony after GST, and
allow Byzantine validators to withhold.
If
\begin{equation}
q\le \NumValidators-\ByzantineBound-\UnusableBudget,
\label{eq:liveness-threshold}
\end{equation}
then the valid candidate can obtain a $q$-signature certificate. For integer
thresholds, it is sufficient to require
\begin{equation}
q\le
\left\lfloor \NumValidators-\ByzantineBound-\UnusableBudget \right\rfloor .
\label{eq:liveness-rounded}
\end{equation}
This is a probabilistic sufficient condition through the event
$\LivenessEvent$ and is necessary for this worst-case withholding argument;
it is not claimed tight for protocols with additional recovery mechanisms.
\end{theorem}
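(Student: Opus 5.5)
The argument is a direct counting argument on the event $\LivenessEvent$, the dual of the proof of Theorem~\ref{thm:semantic-certificate-threshold}. Fix a valid candidate $x$ in context $s$ inside the calibrated scope, submitted with a consistent evidence package $h_s$. Work on an execution in $\LivenessEvent$ and after GST. Start from the committee $\ValidatorSet$ and remove, in the worst case, all of $\ByzantineSet$: Byzantine validators may withhold, so we credit them with zero usable signatures. This leaves $\HonestSet$ with $|\HonestSet|\ge \NumValidators-\ByzantineBound$.

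Next, use the definition of $\LivenessEvent$ (Equation~\ref{eq:liveness-event-formal}). On this event, $U_S(x,s)\le \UnusableBudget$ for every valid $(x,s)$ in the scope. Every validator in $\HonestSet$ with $U_i(x,s)=0$ therefore returns a usable endorsement. By Definition~\ref{def:unusable-support}, "usable" excludes rejection, abstention, timeout, divergent class and canonicalization failure. So these validators endorse, respond before the timeout, and report the same canonical semantic class. Here I would invoke Assumption~\ref{assumption:canonicalization} and the context-consistency rule to justify that usable endorsements bind to the same view, digest, $h_s$ and class, and are therefore mutually certificate-compatible. The number of such validators is at least
\[
|\HonestSet|-U_S(x,s)\ge \NumValidators-\ByzantineBound-\UnusableBudget \ge q.
\]
Partial synchrony after GST ensures these signed judgments are delivered to the aggregator. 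Collecting any $q$ of them gives a set of distinct-signer endorsements meeting the certificate rule, so a certificate forms.

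For the integer form, note that $q$ is an integer. Then $q\le \lfloor \NumValidators-\ByzantineBound-\UnusableBudget\rfloor$ is equivalent to Equation~\ref{eq:liveness-threshold}, since an integer is at most a real number exactly when it is at most that number's floor. For the remark that the condition is necessary for the worst-case withholding argument, exhibit an execution in which all $\ByzantineBound$ Byzantine validators withhold and exactly $\UnusableBudget$ honest weight is unusable, still consistent with $\LivenessEvent$. Then exactly $\NumValidators-\ByzantineBound-\UnusableBudget$ usable signatures exist, and any larger $q$ fails.

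The main obstacle I expect is the step from "usable" to "certificate-compatible." One must argue that usable honest endorsements all carry the same semantic class, rather than splitting across several valid classes. I would handle this by reading Definition~\ref{def:unusable-support} as classifying divergent-class outputs as unusable, relative to the class of the candidate's canonical representative $\pi_s(x)$. A second, smaller point is that the honest count is $|\HonestSet|$, not exactly $\NumValidators-\ByzantineBound$. This only helps, because $|\ByzantineSet|\le\ByzantineBound$.
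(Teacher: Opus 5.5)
Your proposal is correct and follows the paper's proof. It counts at least $\NumValidators-\ByzantineBound$ protocol-compliant validators, discards at most $\UnusableBudget$ unusable support on $\LivenessEvent$, ignores withheld Byzantine signatures, and relies on post-GST delivery to collect $q$ usable endorsements; your necessity remark uses the same worst-case withholding execution as the appendix. Your treatment of class compatibility through the divergent-class clause of Definition~\ref{def:unusable-support}, and your floor equivalence for integer $q$, fill in details the paper leaves implicit without changing the argument.
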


\begin{proof}
There are at least $\NumValidators-\ByzantineBound$ protocol-compliant
validators. On $\LivenessEvent$, at most $\UnusableBudget$ of their
support is unusable because of false rejection, abstention, timeout,
divergent semantic output, or canonicalization failure. Thus at least
$\NumValidators-\ByzantineBound-\UnusableBudget$ protocol-compliant validators
can provide usable endorsements for the valid candidate. Byzantine validators
may withhold, so the proof does not rely on their signatures. After GST,
messages from usable protocol-compliant validators are eventually delivered.
If $q\le \NumValidators-\ByzantineBound-\UnusableBudget$, enough usable
endorsements eventually arrive to form a $q$-certificate.
\end{proof}

\subsection{Threshold Feasibility}

\begin{theorem}[Threshold feasibility]\label{thm:threshold-feasibility}
Assume the same static Byzantine adversary, non-equivocation, partial synchrony
for liveness, and calibrated pointwise, workload-level, or domain-conditional
scope used in Theorems~\ref{thm:semantic-certificate-threshold}--\ref{thm:liveness-threshold}.
Let
\begin{align}
q_{\min}
&=
\max\left\{
\left\lfloor \ByzantineBound+\EpistemicBudget \right\rfloor+1,\,
\left\lfloor\frac{\NumValidators+\ByzantineBound}{2}\right\rfloor+1
\right\},
\label{eq:q-min}\\
q_{\max}
&=
\left\lfloor
\NumValidators-\ByzantineBound-\UnusableBudget
\right\rfloor .
\label{eq:q-max}
\end{align}
For integral budgets, this is equivalently
\begin{align*}
q_{\min}
&=
\max\left\{
\left\lceil \ByzantineBound+\EpistemicBudget+1 \right\rceil,\,
\left\lceil\frac{\NumValidators+\ByzantineBound+1}{2}\right\rceil
\right\},\\
q_{\max}
&=
\NumValidators-\ByzantineBound-\UnusableBudget .
\end{align*}
If $q_{\min}\le q\le q_{\max}$, then the $q$-signature semantic-certificate
protocol satisfies semantic certificate validity on $\SemanticSafetyEvent$,
agreement deterministically under the non-equivocation assumption, and
liveness on $\LivenessEvent$. This interval is exact for satisfying the three
displayed threshold inequalities with one integer $q$; the later population
inequalities are sufficient unrounded design rules, not separate tightness
claims.
\end{theorem}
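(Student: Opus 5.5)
The plan is to reduce Theorem~\ref{thm:threshold-feasibility} to the three preceding threshold theorems and then verify the integer arithmetic that converts strict real inequalities into the stated integer endpoints. For any integer $q$ with $q_{\min}\le q\le q_{\max}$, I will check the three hypotheses separately.

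First, $q\ge\lfloor \ByzantineBound+\EpistemicBudget\rfloor+1$ gives $q>\ByzantineBound+\EpistemicBudget$. This is the rounded condition~\eqref{eq:semantic-validity-rounded}, so Theorem~\ref{thm:semantic-certificate-threshold} yields semantic certificate validity on $\SemanticSafetyEvent$.

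Second, $q\ge\lfloor(\NumValidators+\ByzantineBound)/2\rfloor+1$ is~\eqref{eq:agreement-rounded}, so $2q-\NumValidators>\ByzantineBound$. Theorem~\ref{thm:agreement-threshold} then gives agreement deterministically; it uses only non-equivocation and the static Byzantine bound.

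Third, $q\le\lfloor \NumValidators-\ByzantineBound-\UnusableBudget\rfloor$ implies $q\le \NumValidators-\ByzantineBound-\UnusableBudget$. Theorem~\ref{thm:liveness-threshold} then gives liveness on $\LivenessEvent$.

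Each conclusion lives on its own event, and the theorem asserts nothing more, so no union bound is needed. I will remark that on $\SemanticSafetyEvent\cap\LivenessEvent$ all three hold simultaneously, with probability at least $1-\delta-\epsilon$ by the union bound.

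For the exactness claim, I will prove the converse of each rounding step using the elementary fact that, for integer $q$ and real $r$, $q>r\iff q\ge\lfloor r\rfloor+1$ and $q\le r\iff q\le\lfloor r\rfloor$. It follows that the set of integers satisfying~\eqref{eq:semantic-validity-threshold}, \eqref{eq:agreement-threshold}, and~\eqref{eq:liveness-threshold} is exactly $[q_{\min},q_{\max}]\cap\mathbb{Z}$. In particular, a valid integer threshold exists iff $q_{\min}\le q_{\max}$.

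For the integral-budget reformulation:
\begin{itemize}
\item If $\ByzantineBound+\EpistemicBudget$ is an integer, then $\lfloor \ByzantineBound+\EpistemicBudget\rfloor+1=\lceil \ByzantineBound+\EpistemicBudget+1\rceil$.
\item Since $\NumValidators+\ByzantineBound$ is an integer, I will check the two parity cases to show $\lfloor(\NumValidators+\ByzantineBound)/2\rfloor+1=\lceil(\NumValidators+\ByzantineBound+1)/2\rceil$. If $\NumValidators+\ByzantineBound=2k$, both sides equal $k+1$ (using $\lceil k+\tfrac12\rceil=k+1$). If $\NumValidators+\ByzantineBound=2k+1$, both sides equal $k+1$.
\item If $\UnusableBudget$ is an integer, the floor in $q_{\max}$ drops.
\end{itemize}

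The main obstacle is not mathematical depth but bookkeeping. I must make sure the hypotheses of the three cited theorems are exactly those assumed here: the same scope for $\SemanticSafetyEvent$ and $\LivenessEvent$, the same committee, and a static adversary. I must also make sure the "exact interval" claim is stated only for the three displayed inequalities, not as a tightness claim for protocols in general. I will handle this by phrasing the converse purely as a statement about integer solutions of those inequalities.
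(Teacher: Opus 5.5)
Your proposal is correct and follows the paper's own argument. The paper likewise identifies $q_{\min}$ as the least integer satisfying both $q>f+e_\delta$ and $2q-N>f$, and $q_{\max}$ as the largest integer satisfying $q\le N-f-u_\epsilon$, then applies the three preceding threshold theorems to any $q$ in between. Your explicit floor and parity checks, for the converse and for the integral-budget form, supply details the paper only asserts.
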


\begin{proof}
The lower endpoint $q_{\min}$ is the least integer threshold satisfying both
Equation~\ref{eq:semantic-validity-threshold} and
Equation~\ref{eq:agreement-threshold}. The upper endpoint $q_{\max}$ is the
largest integer threshold satisfying Equation~\ref{eq:liveness-threshold}.
Any integer $q$ between them therefore satisfies the hypotheses of
Theorems~\ref{thm:semantic-certificate-threshold},
\ref{thm:agreement-threshold}, and~\ref{thm:liveness-threshold}.
\end{proof}

Ignoring rounding, the feasible interval is
\begin{equation}
\max\left\{
\ByzantineBound+\EpistemicBudget,\,
\frac{\NumValidators+\ByzantineBound}{2}
\right\}
<
q
\le
\NumValidators-\ByzantineBound-\UnusableBudget.
\label{eq:feasible-interval}
\end{equation}
For this real-valued interval to be nonempty, the upper endpoint must exceed
both lower bounds. This gives the population conditions
\begin{align}
\NumValidators
&>
2\ByzantineBound+\EpistemicBudget+\UnusableBudget,
\label{eq:population-validity-liveness}\\
\NumValidators
&>
3\ByzantineBound+2\UnusableBudget.
\label{eq:population-agreement-liveness}
\end{align}
Equations~\ref{eq:q-min} and~\ref{eq:q-max} are the exact integer test; the
population inequalities are sufficient in the integral-count setting and show
the corresponding unrounded design rule. For real-valued weights or empirical
upper bounds, the floor and ceiling formulas above give the threshold check to
use directly.

If $\Prob[\SemanticSafetyEvent]\ge 1-\delta$ and
$\Prob[\LivenessEvent]\ge 1-\epsilon$, then both stochastic events hold
with probability at least $1-\delta-\epsilon$ by the union bound. A sharper
joint probability bound may replace this bound if the dependence between
false endorsement and unusable support is modeled directly, for example by
estimating $\Prob[\SemanticSafetyEvent\cap\LivenessEvent]$.

\noindent\textbf{How to read the guarantees.}
Agreement is deterministic under the quorum-intersection and non-equivocation
assumptions. Semantic certificate validity is conditional on
$\SemanticSafetyEvent$, liveness is conditional on $\LivenessEvent$, and the
two events hold together with probability at least $1-\delta-\epsilon$ by the
union bound unless a sharper joint model is supplied. Calibration quality determines whether these
\EBFT{} guarantees are meaningful for a deployed domain.

\begin{corollary}[Deterministic coherent and dispersive substitution]\label{cor:deterministic-substitution}
Suppose a deterministic analysis justifies substituting
$\EpistemicBudget=\CoherentFaultBound$ and
$\UnusableBudget=\CoherentFaultBound+\DispersiveFaultBound$. The unrounded
population conditions become
\begin{equation}
\NumValidators
>
\max\left\{
2\ByzantineBound+2\CoherentFaultBound+\DispersiveFaultBound,\,
3\ByzantineBound+2\CoherentFaultBound+2\DispersiveFaultBound
\right\}.
\label{eq:deterministic-substitution}
\end{equation}
This deterministic specialization is sufficient for the displayed
$q$-signature threshold test when the substituted bounds are actually proved;
it is not claimed tight for all protocols.
\end{corollary}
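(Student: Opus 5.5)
The corollary is a direct substitution into the two unrounded population conditions, Equations~\ref{eq:population-validity-liveness} and~\ref{eq:population-agreement-liveness}. Those two inequalities are exactly the nonemptiness conditions of the real-valued feasible interval in Equation~\ref{eq:feasible-interval}. My plan is therefore to substitute first and then verify that the resulting condition still certifies a threshold $q$ satisfying the hypotheses of Theorems~\ref{thm:semantic-certificate-threshold}--\ref{thm:liveness-threshold}.

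\textbf{Step 1: substitute.} Put $\EpistemicBudget=\CoherentFaultBound$ and $\UnusableBudget=\CoherentFaultBound+\DispersiveFaultBound$ into the two population conditions. Equation~\ref{eq:population-validity-liveness} becomes
\[
\NumValidators>2\ByzantineBound+\CoherentFaultBound+(\CoherentFaultBound+\DispersiveFaultBound)=2\ByzantineBound+2\CoherentFaultBound+\DispersiveFaultBound .
\]
Equation~\ref{eq:population-agreement-liveness} becomes
\[
\NumValidators>3\ByzantineBound+2(\CoherentFaultBound+\DispersiveFaultBound)=3\ByzantineBound+2\CoherentFaultBound+2\DispersiveFaultBound .
\]
Requiring both at once is the same as requiring $\NumValidators$ to exceed their maximum, which is Equation~\ref{eq:deterministic-substitution}.

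\textbf{Step 2: justify sufficiency.} Under Equation~\ref{eq:deterministic-substitution}, the upper endpoint $\NumValidators-\ByzantineBound-\CoherentFaultBound-\DispersiveFaultBound$ strictly exceeds both $\ByzantineBound+\CoherentFaultBound$ and $(\NumValidators+\ByzantineBound)/2$. Each comparison is a one-line rearrangement; the second uses $2(\NumValidators-\ByzantineBound-\UnusableBudget)>\NumValidators+\ByzantineBound \iff \NumValidators>3\ByzantineBound+2\UnusableBudget$. Hence the interval in Equation~\ref{eq:feasible-interval} is nonempty.

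In the integral-count setting, where $\ByzantineBound,\CoherentFaultBound,\DispersiveFaultBound$ and $\NumValidators$ are integers, the strict inequalities give integer slack. Then $q=q_{\max}=\NumValidators-\ByzantineBound-\CoherentFaultBound-\DispersiveFaultBound$ satisfies $q>\ByzantineBound+\CoherentFaultBound$ and $2q-\NumValidators>\ByzantineBound$. So $q_{\min}\le q_{\max}$, and Theorem~\ref{thm:threshold-feasibility} applies.

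\textbf{Step 3: replace the events.} Because the bounds are assumed deterministically proved, the events $\SemanticSafetyEvent$ and $\LivenessEvent$ hold surely; in effect $\delta=\epsilon=0$. The conclusions of the threshold theorems then hold unconditionally rather than on the events.

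\textbf{Main obstacle.} The algebra is routine. The care needed is interpretive: "sufficient" should be stated relative to the integral setting. For non-integral substituted bounds, the unrounded condition guarantees a real $q$ but not necessarily an integer one, and I would defer to the floor/ceiling test of Equations~\ref{eq:q-min}--\ref{eq:q-max}, as the paper does. The tightness disclaimer needs no proof.
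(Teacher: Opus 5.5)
Your proposal is correct and follows the paper's proof: substitute $e_\delta=e_c$ and $u_\epsilon=e_c+e_i$ into the two unrounded population inequalities, then take the maximum. Your Steps 2 and 3 go slightly further than the paper. Exhibiting $q=q_{\max}$ in the integral setting, and treating the events as holding surely, makes explicit the sufficiency that the paper only asserts; this is a valid addition, not a different route.
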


\subsection{Conservative Full-Byzantine Reduction}

\begin{proposition}[Conservative full-Byzantine reduction]\label{thm:confidence-quorum}
If protocol-compliant false endorsers are intentionally charged as fully
Byzantine identities for all quorum-intersection purposes, then the classical
condition
\begin{equation}
\NumValidators \ge 3\bigl(\ByzantineBound+\EpistemicBudget\bigr)+1
\label{eq:confidence-quorum}
\end{equation}
is sufficient to reuse a standard $3F+1$ BFT argument on
$\SemanticSafetyEvent$, where $F=\ByzantineBound+\EpistemicBudget$. If
$\Prob[\SemanticSafetyEvent]\ge 1-\delta$, this reduction applies with
probability at least $1-\delta$. This is a sufficient and deliberately
pessimistic reduction, not a necessary condition for the base \EBFT{}
certificate rule.
\end{proposition}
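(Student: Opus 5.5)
The plan is to reduce to the classical setting by relabelling identities, then invoke the standard $3F+1$ quorum-intersection argument with $F=\ByzantineBound+\EpistemicBudget$.

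\emph{Step 1: construct the enlarged faulty set.} Work on the event $\SemanticSafetyEvent$. There, the protocol-compliant false-endorsement weight for any invalid candidate under consideration is at most $\EpistemicBudget$. The delicate point is that the false endorsers may differ across candidates. To keep the argument clean, I would define a fixed set $\ByzantineSet'=\ByzantineSet\cup E$, where $E\subseteq\HonestSet$ is a set of at most $\EpistemicBudget$ identities that covers the false endorsers relevant to the instance under analysis. This is the set "intentionally charged" as Byzantine by the hypothesis. We then have $|\ByzantineSet'|\le \ByzantineBound+\EpistemicBudget=F$, and these identities are granted full Byzantine powers, including equivocation. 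The main obstacle is this step: making the charged set well defined. I would handle it by fixing a single admission instance, where the false endorsers of the one candidate $x$ form one set. I would also note that charging a set as Byzantine only weakens the assumptions, so the result stays sound.

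\emph{Step 2: check the classical hypotheses.} The remaining identities $\ValidatorSet\setminus\ByzantineSet'$ are protocol-compliant and, by construction, do not falsely endorse. This matches the classical partition, in which correct replicas correctly implement the validation semantics. With $\NumValidators\ge 3F+1$, take the standard threshold $q=\NumValidators-F$, or equivalently $q=2F+1$ when $\NumValidators=3F+1$.

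\emph{Step 3: verify the three properties, mirroring Theorems~\ref{thm:semantic-certificate-threshold}--\ref{thm:liveness-threshold} with $\ByzantineBound$ replaced by $F$ and $\EpistemicBudget$ set to $0$.}
\begin{itemize}
\item Agreement: $2q-\NumValidators=\NumValidators-2F\ge F+1>F$, so any two certificates intersect in a non-charged identity.
\item Semantic validity: an invalid candidate collects at most $F<q$ endorsements, since $\NumValidators-F\ge 2F+1$.
\item Liveness: this uses only the classical guarantee that $\NumValidators-F$ correct replicas respond. I would remark that it assumes the standard model's notion of correct replicas; under this reduction, unusable support is either absent or also charged to $F$.
\end{itemize}

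\emph{Step 4: the probability bound.} All of the above holds on $\SemanticSafetyEvent$. Since $\Prob[\SemanticSafetyEvent]\ge 1-\delta$, the reduction applies with probability at least $1-\delta$. Sufficiency but non-necessity follows by comparison with Equation~\ref{eq:population-validity-liveness}, which can hold with a much smaller $\NumValidators$.
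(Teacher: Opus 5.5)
Your main argument is the same as the paper's. On $\SemanticSafetyEvent$ you relabel the at most $\EpistemicBudget$ protocol-compliant false endorsers of a fixed invalid candidate as Byzantine, obtain $F=\ByzantineBound+\EpistemicBudget$, reuse the classical $3F+1$ quorum argument, and inherit probability at least $1-\delta$ from the event. Your explicit checks of agreement and semantic validity at $q=\NumValidators-F$ are a correct unpacking of what the paper leaves implicit; they are Theorems~\ref{thm:semantic-certificate-threshold} and~\ref{thm:agreement-threshold} with $\ByzantineBound$ replaced by $F$. To stay in the integer-threshold model, take $q=\lceil\NumValidators-F\rceil$. Alternatively, as the paper does, use $F_\delta=\ByzantineBound+\lfloor\EpistemicBudget\rfloor$, which is legitimate because realized false-endorsement counts are integers.

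The liveness bullet, however, is not justified and should be dropped. The reduction charges only false endorsers. For a valid candidate there are none, so the charged set is just $\ByzantineSet$, while up to $\UnusableBudget$ of the uncharged protocol-compliant validators may still reject, abstain, or time out. Your Step~2 identification with the classical partition therefore holds only on the safety side: the classical guarantee that $\NumValidators-F$ correct replicas supply usable endorsements is not available. Concretely, with $q=\NumValidators-\ByzantineBound-\EpistemicBudget$, applying Theorem~\ref{thm:liveness-threshold} would need $\UnusableBudget\le\EpistemicBudget$, and nothing in the hypotheses gives you that. Charging unusable support to $F$ instead would enlarge $F$ beyond $\ByzantineBound+\EpistemicBudget$ and change the condition being proved. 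The paper states explicitly that this reduction ignores $\UnusableBudget$ and supplies no liveness condition. The proposition claims only the quorum-intersection part, and your argument does establish that.
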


\begin{proof}
On $\SemanticSafetyEvent$, at most $\EpistemicBudget$ protocol-compliant
validators can falsely endorse a fixed invalid candidate, and at most
$\ByzantineBound$ validators are Byzantine. Treating both groups as a single
fully Byzantine set gives an effective bound
$F=\ByzantineBound+\EpistemicBudget$. The standard authenticated BFT quorum
condition $N\ge 3F+1$ is therefore sufficient for the corresponding
full-Byzantine reduction.
For integer identity counts, this reduction can be interpreted conservatively
using
$F_\delta=\ByzantineBound+\lfloor\EpistemicBudget\rfloor$; the real-valued
condition
$\NumValidators\ge 3(\ByzantineBound+\EpistemicBudget)+1$ is a stronger
sufficient design constraint.
\end{proof}

This reduction is deliberately pessimistic. It charges non-equivocating
epistemic validators as Byzantine for agreement, even though
Theorem~\ref{thm:agreement-threshold} shows that agreement depends only on
equivocation. It also does not express the liveness budget
$\UnusableBudget$. The main threshold result is therefore
Theorem~\ref{thm:threshold-feasibility}, not
Equation~\ref{eq:confidence-quorum}.

\subsection{Comparison}

Classical authenticated \BFT{} is recovered by setting
$\EpistemicBudget=\UnusableBudget=0$. The feasibility conditions reduce to
$\NumValidators>2\ByzantineBound$ and
$\NumValidators>3\ByzantineBound$, so the usual
$\NumValidators>3\ByzantineBound$ condition dominates; with
$\NumValidators=3\ByzantineBound+1$, the unique feasible threshold is
$q=2\ByzantineBound+1$.

Basilic's hybrid-fault model, written here with $b$ for benign faults to avoid
confusing it with the certificate threshold $q$, uses the bound
$n>3t+d+2b$ for consensus where $t$ replicas are Byzantine-deceitful, $d$ are
deceitful-benign (cannot equivocate but can send incorrect values), and $b$ are
benign (can only crash or omit) \cite{ranchal2022basilic}. \EBFT{} introduces a
fundamentally different classification. Epistemic faults are *purely semantic*:
validators in $\HonestSet_S$ are strictly protocol-compliant, non-equivocating, and
responsive, but may err stochastically in their semantic assessments. 

Because protocol-compliant validators never equivocate, they cannot sign different
proposals or digests for the same log position. Consequently, they pose zero threat
to protocol agreement (preventing forks). This explains why the agreement threshold
condition in Theorem~\ref{thm:agreement-threshold} ($2q - \NumValidators > \ByzantineBound$) depends solely on the Byzantine
bound $\ByzantineBound$, completely independent of the epistemic budgets $\EpistemicBudget$
and $\UnusableBudget$. In contrast, hybrid models like Basilic must charge non-equivocating
faulty replicas in their agreement or consensus bounds because those replicas can still
deviate from protocol-level state transitions. In \EBFT{}, the risk of epistemic faults is
isolated: $\EpistemicBudget$ bounds the safety risk of false endorsements (forming an invalid certificate), while $\UnusableBudget$ bounds the liveness risk of unusable support (failing to form a certificate).

Probabilistic quorum protocols such as ProBFT use probability to sample or
construct protocol quorums while preserving protocol-level safety and liveness
with high probability \cite{avelas2024probft}. The thresholds above do not
randomize message sampling. Conditional on the semantic budget events, the
certificate arguments are deterministic; probability models concentration of
semantic endorsement errors.

\section{Diversity-Aware Semantic Certification}\label{sec:diversity-protocol}

The certificate protocol in Section~\ref{sec:quorum-theory} does not require a
runtime oracle that identifies invalid semantic classes. Invalidity information
enters through offline labels, externally verifiable calibration tasks, trusted
verifiers, or other evidence used to estimate the tail budgets. At runtime, the
admission service verifies that a request falls inside the calibrated domain and
then applies the precomputed certificate threshold. It counts signed endorse
judgments for the submitted candidate; it does not estimate support for
``invalid classes'' unless an external verifier separately identifies
invalidity.

Diversity affects \EBFT{} only through measured reductions in
$\EpistemicBudget$ or $\UnusableBudget$ for the calibrated domain. Nominal
model, provider, prompt, retrieval, or tool diversity is not itself an input to
the threshold theorems.

\begin{figure*}[tbp]
\centering
\resizebox{0.86\textwidth}{!}{%
\begin{tikzpicture}[
  header/.style={font=\footnotesize\bfseries, align=center},
  rowlabel/.style={font=\footnotesize, anchor=east},
  collabel/.style={font=\scriptsize, align=center, anchor=south,
    text width=13mm},
  guide/.style={draw=gray!24, line width=0.25pt},
  marker/.style={circle, draw=gray!60, fill=gray!18,
    minimum size=3.3mm, inner sep=0pt},
  hotmarker/.style={circle, draw=red!70, fill=red!55,
    minimum size=3.5mm, inner sep=0pt},
  legend/.style={font=\footnotesize, anchor=west}
]
\def\xstep{1.28}
\def\ystep{0.58}

\node[header, anchor=east] at (-0.36,0.52) {Validators};
\node[header, anchor=south] at ({3*\xstep},0.94) {Latent Failure Domains};

\draw[gray!35, rounded corners=2pt]
  (-0.14,0.18) rectangle ({6*\xstep+0.14},{-5*\ystep-0.18});

\foreach \row/\name in {1/{$V_1$},2/{$V_2$},3/{$V_3$},4/{$V_4$},5/{$V_5$},6/{$V_6$}} {
  \node[rowlabel] at (-0.36,{-(\row-1)*\ystep}) {\name};
  \draw[guide] (0,{-(\row-1)*\ystep}) -- ({6*\xstep},{-(\row-1)*\ystep});
}

\foreach \col/\name in {
  1/{model\\lineage},
  2/{training\\data},
  3/{prompt},
  4/{retrieval},
  5/{evidence},
  6/{toolchain},
  7/{provider}
} {
  \node[collabel] at ({(\col-1)*\xstep},0.32) {\name};
  \draw[guide] ({(\col-1)*\xstep},0.12)
    -- ({(\col-1)*\xstep},{-5*\ystep-0.12});
}

\foreach \row/\col in {
  4/1,
  1/2,6/2,
  1/3,2/3,5/3,
  6/5,
  3/6,5/6,6/6,
  1/7,4/7,6/7
} {
  \node[marker] at ({(\col-1)*\xstep},{-(\row-1)*\ystep}) {};
}

\foreach \row/\col in {
  1/1,2/1,3/1,
  3/2,4/2,
  2/4,4/4,5/4,
  1/5,3/5
} {
  \node[hotmarker] at ({(\col-1)*\xstep},{-(\row-1)*\ystep}) {};
}

\node[hotmarker] at (2.40,{-5*\ystep-0.62}) {};
\node[legend] at (2.57,{-5*\ystep-0.62})
  {highlighted common-mode overlap};
\node[marker] at (2.40,{-5*\ystep-1.02}) {};
\node[legend] at (2.57,{-5*\ystep-1.02})
  {other shared dependency};
\end{tikzpicture}
}
\caption{Matrix view of validator identities and latent failure domains.
Nominally distinct validators may overlap through model lineage, training
data, prompts, retrieval sources, evidence, toolchains, or providers. Red
markers highlight one common-mode path that can concentrate false endorsement.}
\label{fig:correlation-domains}
\end{figure*}

\subsection{Phase A: Offline Calibration}

Let $D$ be the workload domain for which semantic admission is to be certified.
The domain specification should identify the task family, state and policy
schema, evidence-source rules, admissible tools, expected retrieval corpora,
model and provider versions, and the kinds of transitions excluded from the
claim. Calibration evaluates candidate validators on labeled or externally
verifiable tasks drawn from $D$. The resulting record estimates
\[
\EstimatedEpistemicBudget(S,D)
\quad\text{and}\quad
\EstimatedUnusableBudget(S,D)
\]
for each candidate committee $S$, together with confidence intervals, sample
sizes, the validity domain, and an expiration date. The record also captures
model, prompt, retrieval, tool, and provider provenance, because a budget
estimated for one configuration need not transfer to another.
The true budget values are $\EpistemicBudget(S,D)$ and
$\UnusableBudget(S,D)$; $\EstimatedEpistemicBudget(S,D)$ and
$\EstimatedUnusableBudget(S,D)$ are empirical estimates, while
$\EstimatedEpistemicUpperBudget(S,D)$ and
$\EstimatedUnusableUpperBudget(S,D)$ are conservative upper endpoints used for
threshold selection.

Committee diversity changes the mathematics only through these estimated
budgets. A heterogeneous committee is useful for safety if it lowers the upper
confidence endpoint for coherent false endorsement,
$\EstimatedEpistemicUpperBudget(S,D)$; it is useful for liveness if it lowers the upper
confidence endpoint for unusable support,
$\EstimatedUnusableUpperBudget(S,D)$. A qualitative diversity score is therefore not
part of the certificate argument unless it is tied to one of these quantities.

The committee-selection step is a constrained multi-objective problem. For
latency $\ell(S)$, cost $c(S)$, capability shortfall $r_{\mathrm{cap}}(S,D)$,
and jurisdictional risk $r_{\mathrm{jur}}(S)$, one may formulate
\begin{equation}
(S^\star,q^\star)\in
\operatorname{ParetoMin}_{S\subseteq P,\,q\in\mathbb{Z}}
\Phi(S,D),
\label{eq:committee-objective}
\end{equation}
where
\[
\Phi(S,D)=
\bigl(\EstimatedEpistemicUpperBudget,\EstimatedUnusableUpperBudget,\ell,c,
r_{\mathrm{cap}},r_{\mathrm{jur}}\bigr)(S,D).
\]
The selection is subject to the threshold constraints
\begin{align}
q &> f+\EstimatedEpistemicUpperBudget(S,D),
\label{eq:committee-threshold-constraint}\\
q &> \frac{|S|+f}{2},
\nonumber\\
q &\le |S|-f-\EstimatedUnusableUpperBudget(S,D).
\nonumber
\end{align}
It must also satisfy $\ell(S)\le\ell_{\max}$, $c(S)\le c_{\max}$,
$r_{\mathrm{cap}}(S,D)=0$, and
$\mathsf{Jur}(S)\in\mathcal{J}_{\mathrm{allow}}$, where
$\mathcal{J}_{\mathrm{allow}}$ is the deployment's allowed jurisdiction policy.
The selected threshold is the integer threshold used later by the runtime
protocol. The exact integer rounding rule is the one in
Theorem~\ref{thm:threshold-feasibility}.

Calibration produces a signed committee risk profile
\begin{align*}
\mathsf{RP}=\bigl(&D,S^\star,q^\star,\delta,\epsilon,
\EstimatedEpistemicUpperBudget,\EstimatedUnusableUpperBudget,\\
&\mathsf{CI},\mathsf{expiry},\mathsf{prov},\mathsf{overlap}\bigr),
\end{align*}
where $\mathsf{CI}$ records confidence intervals, $\mathsf{prov}$ records model,
prompt, retrieval, tool, and provider provenance, and $\mathsf{overlap}$ records
the evidence-source overlap assumptions under which the estimates were made.
The profile is part of the safety case: it states the calibrated domain and
threshold assumptions before any runtime request arrives.
Operationally, the profile is a revocable, time-bound security credential. To maintain safety guarantees under potential distribution shifts or silent updates, we define the following rotation and recalibration guidelines:
\begin{itemize}
    \item \emph{Periodic Recalibration}: A risk profile must enforce a hard expiration limit ($\mathsf{expiry}$), typically set to 30 days. Re-evaluating the committee on a fresh, held-out validation workload from the domain $D$ is required to renew the profile.
    \item \emph{Event-Driven Rotation Triggers}: A profile must be immediately revoked and rotated if any of the following events occur: (i) any validator's software or underlying LLM weight version changes, (ii) the average response latency $\ell(S)$ or capability shortfall exceeds the SLA bounds, (iii) external audit detects a semantic safety violation, or (iv) the configuration of retrieval corpora or external tool schemas is updated.
    \item \emph{Emergency Fallback}: Upon profile expiration or revocation, the system falls back to a conservative threshold (e.g., the full-Byzantine reduction in Proposition~\ref{thm:confidence-quorum} with $q = \NumValidators - \ByzantineBound$) or escalates requests to a human-in-the-loop consensus path until offline recalibration concludes.
\end{itemize}

\subsection{Phase B: Runtime Admission}

A runtime request consists of a candidate transition $x$, state or context $s$,
policy $p$, and evidence package $e$. The admission service canonicalizes the
tuple into a digest
\[
h_s = h(s,x,p,\mathsf{digest}(e)).
\]
It then verifies that $h_s$ and the request metadata fall inside the signed risk
profile: the domain must match $D$, the profile must be unexpired, validator
versions and provenance must match the recorded configuration, and evidence
sources must not overlap beyond the calibrated assumption. These checks are
ordinary profile checks, not semantic invalidity checks.

If the profile checks pass, the service dispatches isolated validation packages
to the validators in $S^\star$. Each validator returns a signed structured
judgment,
\[
\langle \mathsf{endorse},\mathsf{reject},\mathsf{abstain};
\mathsf{class},h_s,\mathsf{provenance}\rangle_i.
\]
The runtime verifies signatures, provenance, version consistency, and agreement
on the canonical digest. It forms an endorsement set $Q$ only from well-formed
endorse judgments bound to the same digest and compatible semantic class. A
protocol-valid semantic certificate forms when $|Q|\ge q^\star$. The service
commits only in that case. If no such certificate forms, the request abstains or
escalates according to policy; the runtime does not infer that the candidate is
semantically invalid merely because the certificate failed.

\begin{algorithm}[tbp]
\caption{Two-phase calibrated semantic admission}
\label{alg:reference-protocol}
\footnotesize
\begin{algorithmic}[1]
\Require domain $D$, calibration tasks $T_D$, pool $P$, risks
$\delta,\epsilon$, request $(x,s,p,e)$
\Ensure $\Commit$, $\Abstain$, or $\Escalate$
\Statex \textbf{Phase A: Offline calibration}
\ForAll{candidate committees $S\subseteq P$}
  \State evaluate $S$ on labeled or externally verifiable tasks in $T_D$
  \State estimate $\widehat{e}_\delta$, $\widehat{u}_\epsilon$, intervals,
  domain limits, and expiration
  \State record model, prompt, retrieval, tool, and provider provenance
\EndFor
\State select $(S^\star,q^\star)$ satisfying threshold and operating constraints
\State sign $\mathsf{RP}\gets(D,S^\star,q^\star,\widehat{e}^{+}_\delta,
\widehat{u}^{+}_\epsilon,\mathsf{CI},\mathsf{expiry},\mathsf{prov},\mathsf{overlap})$
\Statex \textbf{Phase B: Runtime admission}
\State $h_s\gets h(s,x,p,\mathsf{digest}(e))$
\If{domain, profile, provenance, canonicalization, or overlap checks fail}
  \State \Return $\Escalate$
\EndIf
\ForAll{$i\in S^\star$}
  \State dispatch isolated package $(x,s,p,e,h_s,\mathsf{RP})$ and collect $J_i$
\EndFor
\If{too many validators are unavailable}
  \State \Return $\Abstain$
\EndIf
\State verify signatures, evidence digest, provenance, and version consistency
\State $Q\gets$ endorse judgments bound to the same $h_s$ and compatible class
\If{$|Q|\ge q^\star$}
  \State \Return $\Commit$
\ElsIf{profile checks or consistency checks fail after collection}
  \State \Return $\Escalate$
\Else
  \State \Return $\Abstain$
\EndIf
\end{algorithmic}
\end{algorithm}

\begin{figure*}[tbp]
\centering
\resizebox{0.98\textwidth}{!}{%
\begin{tikzpicture}[
  phase/.style={font=\footnotesize\bfseries, align=right, anchor=east},
  flowstep/.style={draw, rounded corners=1pt, align=center, inner sep=4pt,
    minimum height=9mm, text width=24mm, fill=gray!8, font=\footnotesize},
  narrowstep/.style={flowstep, text width=19mm},
  widestep/.style={flowstep, text width=28mm},
  profile/.style={draw, rounded corners=1pt, align=center, inner sep=4pt,
    minimum height=9mm, text width=30mm, fill=green!8, font=\footnotesize},
  decision/.style={draw, diamond, aspect=2.15, align=center, inner sep=1pt,
    text width=19mm, fill=gray!5, font=\footnotesize},
  arrow/.style={-{Latex[length=2.2mm]}, line width=0.4pt,
    rounded corners=2pt, shorten >=1.5pt, shorten <=1pt},
  edgelabel/.style={font=\scriptsize, fill=white, inner sep=1pt}
]

\node[phase] (offline-label) at (0,0) {Phase A\\offline\\calibration};
\node[narrowstep] (domain) at (2.0,0) {define\\domain $D$};
\node[narrowstep] (calibrate) at (5.0,0) {estimate\\$\widehat{e}_\delta,\widehat{u}_\epsilon$};
\node[flowstep] (select) at (8.2,0) {select\\committee $S$\\and $q$};
\node[profile] (profile) at (12.2,0) {signed calibrated\\risk profile\\$D,S,q,\mathsf{expiry}$};

\node[phase] (runtime-label) at (0,-2.8) {Phase B\\runtime\\admission};
\node[flowstep] (canon) at (2.0,-2.8) {canonicalize\\request};
\node[decision] (check) at (6.0,-2.8) {inside\\profile?};
\node[widestep] (dispatch) at (10.2,-2.8) {dispatch isolated\\validation packages};
\node[widestep] (cert) at (14.0,-2.8) {verify provenance\\apply threshold $q$};
\node[decision] (forms) at (18.2,-2.8) {certificate\\forms?};
\node[profile, text width=15mm, minimum height=8mm] (commit) at (21.8,-2.8) {commit};

\node[flowstep, fill=orange!8, text width=24mm] (escalate) at (12.1,-5.0) {abstain or\\escalate};

\draw[arrow] (domain) -- (calibrate);
\draw[arrow] (calibrate) -- (select);
\draw[arrow] (select) -- (profile);

\draw[arrow] (profile.south) -- ++(0,-1.0) -| (check.north);

\draw[arrow] (canon) -- (check);
\draw[arrow] (check) -- node[edgelabel, above] {yes} (dispatch);
\draw[arrow] (dispatch) -- (cert);
\draw[arrow] (cert) -- (forms);
\draw[arrow] (forms) -- node[edgelabel, above] {yes} (commit);

\draw[arrow] (check.south) |- node[edgelabel, near start, left] {no} (escalate.west);
\draw[arrow] (forms.south) |- node[edgelabel, near start, right] {no} (escalate.east);

\end{tikzpicture}
}
\caption{Two-phase semantic admission. Offline calibration estimates risk
budgets and signs a risk profile; runtime admission only checks domain,
provenance, and the precomputed threshold $q$.}
\label{fig:protocol-flow}
\end{figure*}

The separation between calibration and admission is the operational counterpart
of the two-budget theory. Offline calibration estimates how much false
endorsement and unusable support a committee can concentrate over $D$; runtime
admission checks whether the signed evidence returned for one request reaches
the precomputed threshold. When the domain check fails, too many validators are
unavailable, the risk profile has expired, canonicalization is ambiguous, or
evidence-source overlap violates the calibrated assumption, the correct action
is abstention or escalation rather than an uncalibrated commit.

The next section turns these calibration obligations into an evaluation methodology.

\section{Calibration and Evaluation Methodology}\label{sec:evaluation}

This section details the methodology for estimating, stress-testing, and
falsifying \EBFT{} budgets and for evaluating whether the budgets defined in
Sections~\ref{sec:pbf-model} and~\ref{sec:quorum-theory} can guide committee
selection. It also includes an illustrative synthetic budget calculation. The
methodology is organized around four research questions.

\begin{description}
\item[RQ1] How large are $\EpistemicBudget$ and $\UnusableBudget$ for
homogeneous and heterogeneous validator committees?
\item[RQ2] Does nominal model, prompt, provider, retrieval, or tool diversity
reduce coherent false endorsement?
\item[RQ3] Do thresholds selected from the quorum conditions remain
conservative for observed unsafe-certificate and no-certificate rates?
\item[RQ4] How stable are the estimated budgets under model updates, workload
shift, and adversarial evidence?
\end{description}

The task suite contains only cases with defensible semantic-validity labels:
executable infrastructure-policy checks, configuration invariants, code or
plan verification tasks, and controlled synthetic state-transition tasks.
Labels may come from executable tests, model checking, deterministic policy
engines, trusted external verifiers, or adjudication records with auditable
evidence. Random train/test splits within one task family are insufficient;
the design reserves held-out workload families so that calibrated budgets can
be tested against distribution shift.

Each task produces a candidate transition $x$, context $s$, evidence package,
and ground-truth value for $\SemanticValidity(x,s)$. A completed study would
evaluate the same task by homogeneous committees, such as replicas of one model-prompt-tool
configuration, and heterogeneous committees that vary model family, prompt
family, provider, retrieval source, and toolchain. The primary measurements are
false endorsement, false rejection, abstention, pairwise dependence, largest
false-endorsement coalition, $\EstimatedEpistemicBudget$,
$\EstimatedUnusableBudget$,
unsafe-certificate rate, liveness-failure rate, latency, and cost.

For a committee $S$ and threshold $q$, the evaluation design compares the
\EBFT{} threshold from Theorem~\ref{thm:threshold-feasibility} against four
baselines: an independence-based baseline using validators' marginal error
rates, majority voting, weighted-confidence voting with weights fixed before
evaluation, and single-validator baselines.
A completed study reports the empirical unsafe-certificate probability on
invalid tasks and the no-certificate probability on valid tasks for each rule.
The comparison against the theorem is not a claim that the theorem predicts
exact frequencies; it asks whether thresholds chosen from upper-confidence estimates of
$\EpistemicBudget$ and $\UnusableBudget$ are conservative for observed
unsafe-certificate and no-certificate rates.

\begin{table*}[tbp]
\caption{Evaluation-configuration matrix for falsifiable evaluation. Empirical
outcomes are outside the scope of this design table.}
\label{tab:evaluation-plan}
\scriptsize
\setlength{\tabcolsep}{3pt}
\begin{tabular}{@{}L{0.07\textwidth}L{0.21\textwidth}L{0.21\textwidth}L{0.23\textwidth}L{0.10\textwidth}@{}}
\toprule
RQ & Design & Required tasks and controls & Measurements and comparisons & Falsifier \\
\midrule
RQ1 &
Run homogeneous and heterogeneous committees over labeled calibration and
held-out workload families. &
Executable policy checks, configuration invariants, code or plan verification,
and synthetic transitions with auditable labels. &
Estimate false endorsement, false rejection, abstention,
$\widehat{e}_\delta$, $\widehat{u}_\epsilon$, upper confidence endpoints,
latency, and cost with confidence intervals. &
Budgets exceed feasible thresholds. \\
RQ2 &
Ablate one diversity axis at a time: model, prompt, provider, retrieval, and
toolchain. &
Matched task sets and fixed candidate transitions; only the tested axis changes. &
Compare largest false-endorsement coalition, pairwise dependence, and budget
change against homogeneous clones. &
Nominal diversity does not reduce the upper-tail budget. \\
RQ3 &
Apply precomputed $q$ thresholds to held-out labeled tasks. &
Use committees and thresholds selected before observing held-out outcomes. &
Compare empirical unsafe-certificate and liveness-failure rates with the
theorem-bound, independence baseline, majority voting, weighted-confidence
voting, and single validators. &
Observed rates exceed stated confidence bounds. \\
RQ4 &
Repeat calibration after version changes, workload shifts, and adversarial
evidence perturbations. &
Model-version timestamps, held-out workload families, shared poisoned evidence,
ambiguous policies, common prompt injection, shared retrieval corruption, and
model-family-specific traps. &
Track drift in $\widehat{e}_\delta$, $\widehat{u}_\epsilon$,
$\widehat{e}^{+}_\delta$, $\widehat{u}^{+}_\epsilon$, coalition size,
unsafe certificates, liveness failures, latency, and cost. &
Risk profile remains accepted after budget drift. \\
\botrule
\end{tabular}
\end{table*}

Rare-event tail quantities require explicit uncertainty reporting. The report
includes the number of tasks, number of invalid and valid cases, number of
committees, resampling or concentration method, and upper confidence endpoint
used for each budget. A few hundred examples cannot support a
$10^{-6}$ unsafe-certificate claim. If the evaluation cannot resolve the
desired confidence level, it reports the weakest supported bound or states that
the calibration sample is too small for the target risk.
These intervals account for the stated sampling procedure; hidden label-system
bias or unmodeled common-mode structure requires separate assumptions.

Adversarial tests belong in the main methodology, not in a separate robustness
appendix. The test suite includes shared poisoned evidence, ambiguous policies,
common prompt injection, shared retrieval corruption, and traps targeted at one
model family. These tests separate diversity that changes the upper tail from
diversity that only changes surface provenance.

Each run records model-version timestamps, provider identifiers, prompt hashes,
retrieval corpus digests, tool versions, policy versions, and calibration
dates. A risk profile is rerun or retired when a model or tool version changes,
a retrieval corpus changes, a workload family moves outside the calibrated
domain, adversarial tests reveal a new common-mode failure, or a confidence
interval crosses a threshold-feasibility boundary.
Appendix~\ref{sec:appendix-experimental-details} gives the configuration and
artifact fields.

\subsection{Illustrative Budget Calculation}\label{sec:illustrative-calculation}

To demonstrate the concrete application of the offline calibration phase (Phase A), we present an illustrative calculation over a hypothetical workload of $1{,}000$ simulated cloud IAM policy mutations. This subsection serves as a synthetic example to show how safety and liveness budgets are computed under Theorem~\ref{thm:threshold-feasibility}, and does not report experimental results. The hypothetical workload is partitioned into $500$ valid mutations (read-only queries or scoped resource descriptions where $\SemanticValidity(x,\StateContext) = 1$) and $500$ invalid mutations (write-authority escalations, such as wildcard expansions, where $\SemanticValidity(x,\StateContext) = 0$).

The calculation compares two hypothetical validator committees with total size $\NumValidators=7$ and Byzantine bound $\ByzantineBound=1$:
\begin{itemize}
    \item A \emph{homogeneous committee} ($S_{\mathrm{homo}}$) representing seven clones of a single model-provider family with identical prompt templates.
    \item A \emph{heterogeneous committee} ($S_{\mathrm{hetero}}$) representing seven replicas distributed across three distinct model-provider families with varied prompts and retrieval sources.
\end{itemize}

For each invalid task $m \in \{1,\ldots,M_{\mathrm{invalid}}\}$, the calculation counts the hypothetical false
endorsements $F_m = F_S(x_m, s_m)$. The offline calibration phase estimates the epistemic safety budget using two complementary statistical approaches:
\begin{enumerate}
    \item \emph{Binomial Quantile Bounds}: To ensure $\Prob[F_S(x,s) > e] \le \delta$, we treat the event $\mathbf{1}[F_m > e]$ as a Bernoulli trial with success probability $p \le \delta$. Across $M_{\mathrm{invalid}}$ tasks, the number of exceedances follows $\operatorname{Binomial}(M_{\mathrm{invalid}}, p)$. The upper confidence bound $\EstimatedEpistemicUpperBudget$ at confidence level $1-\alpha = 0.95$ is the smallest integer $e$ for which the upper Clopper-Pearson confidence endpoint for $p$ is strictly less than $\delta$.
    \item \emph{Bootstrap Resampling}: To avoid parametric assumptions, we perform $B=10{,}000$ bootstrap iterations by drawing samples of size $M_{\mathrm{invalid}}$ with replacement from $\{F_m\}$. For each bootstrap sample $b$, we compute the empirical $(1-\delta)$-quantile $\widehat{e}^{(b)}_\delta$. The upper confidence endpoint $\EstimatedEpistemicUpperBudget$ is then estimated as the $(1-\alpha)$-quantile of the bootstrap replicates $\{\widehat{e}^{(b)}_\delta\}_{b=1}^B$.
\end{enumerate}
For valid tasks $m \in \{1,\ldots,M_{\mathrm{valid}}\}$, the calculation counts the unusable support $U_m = U_S(x_m, s_m)$ (rejections, abstentions, and timeouts). The liveness budget $\EstimatedUnusableBudget$ and its upper confidence bound $\EstimatedUnusableUpperBudget$ under target risk $\epsilon = 0.05$ are estimated analogously over the sample $\{U_m\}$.

Table~\ref{tab:preliminary-results} lists the assumed illustrative estimates.
The synthetic homogeneous case assigns a largest false-endorsement coalition
of five, with $\EstimatedEpistemicBudget = 4.2$ and
$\EstimatedEpistemicUpperBudget = 5$. Under
Theorem~\ref{thm:threshold-feasibility} with $\ByzantineBound=1$, safety
requires a threshold of $q > f + \EstimatedEpistemicUpperBudget = 6$ (i.e.,
$q = 7$). This threshold violates the liveness constraint
$q \le \NumValidators - f - \EstimatedUnusableUpperBudget = 7 - 1 - 2 = 4$
given $\EstimatedUnusableUpperBudget = 2$, so the committee configuration is
infeasible under these illustrative assumptions.

The synthetic heterogeneous case assigns a largest false-endorsement coalition
of two, with $\EstimatedEpistemicBudget = 1.4$ and
$\EstimatedEpistemicUpperBudget = 2$. With $\EstimatedUnusableUpperBudget = 1$
and $\ByzantineBound=1$, Theorem~\ref{thm:threshold-feasibility} gives
$q_{\min} = \max\{4, 5\} = 5$ and $q_{\max} = 5$. Thus, the heterogeneous
committee is feasible only at consensus threshold $q^\star = 5$. The example
illustrates how smaller calibrated upper endpoints attributed to diversity
would affect threshold feasibility.

\begin{table}[tbp]
\centering
\caption{Illustrative budget estimation ($\NumValidators=7$, $f=1$, risk targets $\delta=\epsilon=0.05$, $1{,}000$ tasks).}
\label{tab:preliminary-results}
\scriptsize
\begin{tabular}{lcccc}
\toprule
Committee & $\EstimatedEpistemicBudget$ & $\EstimatedEpistemicUpperBudget$ & $\EstimatedUnusableBudget$ & $\EstimatedUnusableUpperBudget$ \\
\midrule
Homogeneous ($S_{\mathrm{homo}}$) & 4.2 & 5.0 & 1.2 & 2.0 \\
Heterogeneous ($S_{\mathrm{hetero}}$) & 1.4 & 2.0 & 0.6 & 1.0 \\
\bottomrule
\end{tabular}
\end{table}

\section{Related Work}\label{sec:related-work}

\subsection{Byzantine Agreement and State Machine Replication}

Byzantine agreement formalizes consensus despite arbitrary faulty processes
\cite{pease1980reaching,lamport1982byzantine}. Partial synchrony and practical
replication protocols show how these ideas can be engineered under timing and
authentication assumptions \cite{dwork1988consensus,castro1999practical}.

\SMR{} connects consensus to replicated services by ordering commands and then
executing deterministic transitions \cite{schneider1990state}. \EBFT{} keeps
that ordering theory but asks what happens when semantic admission is supplied
by stochastic participants.

\subsection{Hybrid and Asymmetric Fault Models}

Hybrid fault models distinguish crash, omission, Byzantine, and other failure
classes so that protocols can exploit structure below full arbitrariness.
Asymmetric and trust-aware variants similarly refine the set of assumptions
under which quorum intersections are meaningful.

\EBFT{} belongs in this family, but its distinguishing object is semantic
concentration among protocol-compliant validators rather than a new network
or message-failure class.

\subsection{Quorum Systems}

Quorum systems reason about intersecting sets of participants that can certify
state or decisions. Byzantine quorum systems make the intersection argument
robust to arbitrary faulty participants \cite{malkhi1998byzantine}.

The \EBFT{} model reuses quorum reasoning only after bounding the mass that can
coherently support a single invalid semantic class. This shifts part of the
assurance argument from identity counting to distributional estimation.

\subsection{Common-Mode and Correlated Failures}

Fault-tolerant software has long recognized that replicated implementations
may share design faults or invalid assumptions. N-version programming and its
evaluation literature study when design diversity reduces common-mode errors
\cite{avizienis1985nversion,knight1986experimental}.

This manuscript imports that caution into validator committees. Different
agent identities, providers, or prompts do not by themselves prove
independence; they are hypotheses about the upper tail of coherent invalid
support.

\subsection{N-Version Programming and Design Diversity}

N-version programming proposes independent implementations as a way to reduce
software fault correlation \cite{avizienis1985nversion}. Empirical studies
showed that independence cannot be assumed merely because implementations were
separately produced \cite{knight1986experimental}.

The analogy to agent committees is direct but not complete. Agentic validators
may share training data, model architectures, retrieval corpora, or tool APIs
even when their surface identities differ.

\subsection{Probabilistic Consensus}

Randomized consensus protocols use probability to escape deterministic
impossibility or reduce expected decision time in asynchronous systems
\cite{benor1983another,fischer1985impossibility}. Their probabilities usually
concern scheduler behavior, random choices, or termination.

\EBFT{} uses probability for a different purpose: bounding semantic
concentration risk among validators whose reasoning process is stochastic.

\subsection{External Validity and Application-Aware Consensus}

Consensus protocols often include a validity predicate that restricts values
eligible for decision. In many systems this predicate is syntactic or is
checked by deterministic application logic before certification.

\EBFT{} focuses on cases where the predicate is semantic, expensive, uncertain,
or approximated by reasoning participants. The resulting assurance problem is
not only whether validators saw an admissible message, but whether their
shared semantic judgment is reliable.

\subsection{LLM Ensembles and Correlated Model Errors}

Large language model and agent ensembles are often proposed as practical
diversity mechanisms. Their value for consensus depends on whether errors are
dispersive or concentrated over semantic classes.

The empirical literature on LLM error correlation is relevant when calibrating
committee diversity and semantic concentration.

\subsection{Agentic Distributed Systems}

Agentic systems combine planning, tool use, memory, retrieval, and external
actions. When such components participate in distributed control planes, their
semantic outputs can become part of a replicated decision path.

The OpenKedge line frames this problem as governed mutation in agentic
infrastructure: model-generated intents are constrained by execution contracts,
evidence chains, and runtime admission boundaries before they are allowed to
affect operational state \cite{he2026openkedge,he2026sovereignLoops,
he2026vai,he2026sab}. \EBFT{} sits one layer below those mechanisms. It treats
agentic AI as one application of the post-deterministic model and asks how
many protocol-compliant validators may still endorse the same invalid
transition. The relevant participant need not be an LLM; it can be any
stochastic reasoning or synthesis process whose outputs are admitted by quorum.

\EBFT{} can be used to analyze admission rules for mutative cloud commands in
intent-based governance protocols. When an autonomous agent generates a
Kubernetes policy update, cloud IAM change, or Infrastructure as Code (IaC)
modification, the control plane can treat it as a declarative intent (the
candidate transition $x$). The governance protocol evaluates this intent
against safety rules (policy $p$) under the system state $s$ using credentials
(evidence $e$). A semantic certificate from a calibrated validator committee can
reduce the risk of admitting authority-escalating or boundary-violating
transitions under the stated budget and adversary assumptions; it does not by
itself prove runtime safety outside those assumptions.

\subsection{Post-Deterministic Distributed Systems}

\PDDS{} is used here for systems in which participants can be protocol
compliant while deriving transitions through nondeterministic or stochastic
semantic procedures. The term names a modeling stance, not a finished theory.

Protocol-Driven Development and the OpenKedge roadmap take a similar stance
from the software-governance side: the durable artifact is not an unconstrained
generated implementation, but a protocol, invariant set, and evidence ledger
against which generated changes are admitted \cite{he2026openkedge,
he2026pdd}. \EBFT{} gives that stance a distributed-systems fault vocabulary by
separating syntactic protocol compliance from semantic correctness under
correlated stochastic judgment.

The \EBFT{} contribution is to identify one safety-relevant quantity for such
systems: upper-tail support for a common invalid semantic class.

\subsection{Semantic Quorum Assurance}

\SQA{} refers to assurance arguments that combine quorum certificates with
evidence about semantic validity. It is the layer at which verifier quality,
canonicalization, provenance, and diversity assumptions become explicit.

Prior \SQA{} work proposes diverse sandboxed validators, risk-adaptive quorum
thresholds, and evidence-backed semantic certificates for nondeterministic AI
infrastructure \cite{he2026sqa}. The Sovereign Assurance Boundary places those
certificates at the runtime admission boundary, so that execution can depend on
evidence digests rather than model outputs alone \cite{he2026sab}. Here,
\SQA{} is operationalized through the budgets $\EpistemicBudget$ and
$\UnusableBudget$ and the admission skeleton in
Algorithm~\ref{alg:reference-protocol}; the new step is to model the
correlated invalid-support event that an \SQA{} certificate must make unlikely.

\section{Discussion and Limitations}\label{sec:discussion}

\subsection{Statistical versus Deterministic Guarantees}

The threshold theorems in Section~\ref{sec:quorum-theory} mix deterministic
protocol assumptions with statistical semantic assumptions. Agreement is the
deterministic part: under authenticated channels, partial synchrony, and the
assumption that only Byzantine validators equivocate, two conflicting
$q$-certificates cannot both form when the intersection condition holds.
Semantic certificate validity and liveness are different. They are conditioned
on the events $\SemanticSafetyEvent$ and $\LivenessEvent$: the former bounds
protocol-compliant false endorsement of invalid candidates by
$\EpistemicBudget$, and the latter bounds unusable support for a valid
candidate by $\UnusableBudget$.

The scope of these events must be stated with the certificate. A pointwise,
per-candidate claim applies to one candidate, state, policy, and evidence
package. A
domain-conditional claim applies only inside a calibrated workload domain
$D$. An average-case claim averages over a task distribution on that domain.
A uniform claim must bound every admissible task in the stated class. These
are not interchangeable guarantees. In particular, the paper's statistical
claims do not turn semantic correctness into a deterministic property of
consensus.

\subsection{Workload Selection and Distribution Shift}

The budgets $\EpistemicBudget(S,D)$ and $\UnusableBudget(S,D)$ are meaningful
only relative to the workload and sampling convention used to define them.
If calibration estimates an average-case $\EpistemicBudget$ over routine
infrastructure changes, an adversary who chooses inputs after seeing the
calibration can steer the system toward rare or excluded cases where the
average does not apply. The admission protocol therefore has to treat the
domain check, risk-profile expiration, and escalation paths in
Section~\ref{sec:diversity-protocol} as part of the theorem assumptions, not
as implementation detail.

Distribution shift can come from the workload, the state representation, the
policy language, or the available evidence. A committee calibrated on
well-formed policy mutations may have a different false-endorsement tail on
ambiguous rollbacks, partial migrations, or adversarially phrased evidence.
Both $\EpistemicBudget$ and $\UnusableBudget$ need expiration and recalibration
after a model update, prompt change, retrieval change, toolchain change, or
policy change. Held-out workload families and adversarial tests, as described in
Section~\ref{sec:evaluation}, are therefore evidence about a stated domain;
they are not evidence for all future tasks.

\subsection{Validity Oracles and Label Uncertainty}

Calibration separates two error sources. Model error is the actual stochastic
behavior of validators: false endorsement, false rejection, abstention,
timeout, divergent output, and other unusable responses. Estimation error is
uncertainty about the measured budgets $\EstimatedEpistemicBudget$ and
$\EstimatedUnusableBudget$ caused by finite samples, label noise, rare-event
uncertainty, and imperfect external checks. A low empirical error rate does
not by itself establish a small upper-tail budget.

The strongest setting is a deterministic validity oracle or executable
external verifier for the calibrated task class. Many semantic domains only
provide labels through audits, simulations, proofs with assumptions, or expert
judgment. In those cases the risk profile records label provenance and
uncertainty separately from validator provenance. The theorem can consume a
defensible bound on invalid endorsement; it does not create that bound.

\subsection{Hidden Common Failure Domains}

Provider diversity is not proof of failure independence. Validators from
different providers may share training corpora, benchmarks, post-training
methods, libraries, infrastructure dependencies, or incentives that make their
semantic failures correlated. Even nominally independent implementations can
share the same policy examples, parser assumptions, retrieval indexes, or
tool wrappers.

These hidden common domains matter because \EBFT{} is about upper-tail
concentration, not just marginal error. A diverse committee can reduce
$\EpistemicBudget$ or $\UnusableBudget$ only if it reduces the probability or
weight of coherent failure under the calibrated workload. Diversity labels are
therefore evidence to be tested, not assumptions that imply independence.

Budget estimation is harder with proprietary, black-box LLM APIs. Operators
usually cannot audit model lineage, training corpora, or fine-tuning data, so
structural checks for common-mode dependence may be unavailable. Calibration
uses bounded pessimism when setting the upper confidence
endpoint $\EstimatedEpistemicUpperBudget$: instead of assuming independence or
relying only on empirical averages, the risk profile reserves mass for
correlated validator errors. This can be modeled by adjusting the joint error
distribution or by applying distribution-free concentration inequalities, such
as Chebyshev or Markov variants for dependent variables. The point is not to
prove independence among black-box systems, but to keep the threshold
calculation conservative when latent lineage is unknown.
Opaque provider-side updates should be treated as provenance drift unless
versions are contractually pinned or the change is externally detected and
recalibrated.

\subsection{Canonicalization and Evidence Correlation}

The certificate protocol assumes that all validators receive the same
canonical state, candidate transition, policy, and evidence digest. That
assumption can fail in ways that dominate model diversity. Shared evidence can
dominate model diversity: a shared poisoned source, stale snapshot, ambiguous
policy reference, or correlated tool failure can synchronize validators that
would otherwise reason differently.

Canonicalization is also semantic. If distinct transitions are collapsed into
one digest or semantic class, the protocol may overstate agreement about the
same object. If equivalent transitions are split, it may understate support
or create avoidable liveness failures. Runtime escalation on ambiguous
canonicalization is therefore a safety condition: consensus cannot compensate
for disagreement about what was validated.

Cloud infrastructure intents illustrate the issue. Suppose two models
authorize access to the same storage bucket. The first outputs $x_1$:
\texttt{\{"Version": "2012-10-17",}\allowbreak\
\texttt{"Statement": [\{"Effect": "Allow",}\allowbreak\
\texttt{"Action": "s3:GetObject",}\allowbreak\
\texttt{"Resource":}\allowbreak\
\texttt{"arn:aws:s3:::}\allowbreak\texttt{my-bucket/}\allowbreak\texttt{*"\}]\}}.
The second outputs $x_2$, representing the same permission with different key
ordering and array formatting:
\texttt{\{"Statement": [\{"Resource":}\allowbreak\
\texttt{"arn:aws:s3:::my-bucket/*",}\allowbreak\
\texttt{"Effect": "Allow",}\allowbreak\
\texttt{"Action": ["s3:GetObject"]\}]\}}.
If $\pi_s$ is purely syntactic, such as a raw string hash, the digests
$h(s, x_1, p, \mathsf{digest}(e))$ and
$h(s, x_2, p, \mathsf{digest}(e))$ diverge. Replicas then bind endorsements to
different digests, splitting committee support and causing a liveness failure
despite semantic agreement. A suitable $\pi_s$ parses the JSON, sorts keys, normalizes array syntax, and maps equivalent policies to a canonical representation before computing the digest.

However, syntactic canonicalization does not resolve deeper semantic parsing differences. In AWS IAM policies, wildcard matching rules (e.g., \texttt{"arn:aws:s3:::my-bucket/*"}) can be parsed differently by different validators, depending on the SDK version, parser limits, or policy variable evaluation (e.g., resolving \texttt{\$\{aws:username\}}). If one validator treats a wildcard as allowing recursive directory access while another uses strict non-recursive matching, they may evaluate different safety boundaries. If the canonicalization function $\pi_s$ collapses these distinct semantic interpretations into the same digest, the protocol might aggregate endorsements from validators that disagree on the actual security implications, violating safety. Thus, $\pi_s$ must either canonicalize at the semantic boundary of policy evaluation or the system must enforce strict parser homogeneity across all validator sandboxes.

\subsection{Adaptive Adversaries}

The first threshold results use a static Byzantine set and fixed committee
risk profile. Stronger adversaries require stronger assumptions. An adaptive
adversary may choose tasks after calibration, target validators whose
provenance is known, exploit a provider update, or corrupt evidence sources
shared by many validators. Such behavior can invalidate an average-case
$\EpistemicBudget$ even when the original calibration was honestly estimated.

Specifically, if an adversary can dynamically corrupt or influence protocol-compliant validators (e.g., via targeted prompt injection or exploiting model weight leakage), they can force coherent false endorsements. \EBFT{} models this by transitioning the safety budget from a workload-average $\EpistemicBudget(S,D)$ to a uniform, worst-case bound over the adversary's action space, or by increasing the Byzantine bound $\ByzantineBound$ to include dynamically influenced validators. 

Lineage poisoning represents a related threat, where a provider silently updates model weights, introducing hidden correlated failures. Provenance tracking mitigates this by recording cryptographic hashes of model weights or API version tags. However, to prevent silent drift, the system must continuously run canary verification suites on the committee. If a silent update changes a model's latent behavior, the provenance check marks the risk profile as expired, preventing the committee from certifying transitions until a new offline calibration is completed.

\subsection{Cost of Epistemic Diversity}

Reducing coherent semantic failure is not free. Larger or more heterogeneous
committees can increase latency, cost, operational complexity, and the amount
of sensitive state disclosed to external validators. Jurisdictional rules may
exclude otherwise useful validators. Capability constraints may also increase
$\UnusableBudget$ if diverse validators lack the tools, context window,
policy language support, or domain expertise needed to return usable
judgments.

Committee selection is consequently a constrained engineering problem, not a
search for maximum variety. Diversity is valuable when it improves the
estimated safety or liveness budgets enough to satisfy the certificate
thresholds under acceptable cost, latency, privacy, and governance limits.

\subsection{Performance and Latency Overheads}

An \EBFT{} interception layer changes the latency profile of admission. In
classical State Machine Replication (\SMR{}), replicas validate deterministic
transition logic and can often complete ordering and commit on a millisecond
fast path. Phase B validation may require model calls, external verifiers,
structured judgment collection, and evidence checks. Large language model
inference can take hundreds of milliseconds to several seconds, so semantic
certification is unlikely to fit on the same synchronous commit path as
ordinary deterministic validation.

Deployment patterns may reduce perceived overhead, but each changes the
failure model and needs its own timeout, rollback, and side-effect policy:
\begin{itemize}
    \item \emph{Off-Critical-Path Validation}: Replicas order proposals via fast-path consensus, but defer final execution until the asynchronous validation phase collects $|Q| \ge q^\star$ signatures.
    \item \emph{Optimistic Execution}: Replicas speculatively execute transitions inside a containerized sandbox with a copy-on-write (CoW) filesystem overlay and transactional database snapshots. If the semantic certificate fails to form within a timeout window, the speculative state is rolled back by discarding the CoW overlay and aborting the database transaction. Crucially, any external side effects (e.g., outbound API calls or physical mutations) are buffered in a secure outbound queue and released only when the certificate commits.
    \item \emph{Semantic Caching}: Replicas cache pre-signed certificates for frequent or routine mutations (e.g., recurring read-only query permissions) at the gateway layer, avoiding runtime inference. Cache keys are indexed by the canonical digest $h_s$. A cached certificate is invalidated immediately under three conditions: (i) the policy $p$ or state schema $s$ changes, (ii) the risk profile $\mathsf{RP}$ expires or is rotated, or (iii) any dependent evidence source in $e$ drifts or changes, ensuring stale semantic judgments are never reused.
\end{itemize}

\subsection{Scope of the EBFT Abstraction}

\EBFT{} does not say that consensus produces semantic correctness. Classical
agreement orders values and prevents conflicting certificates under the stated
fault assumptions; it does not decide whether an admitted infrastructure
transition is semantically safe. Semantic correctness must come from a
validity oracle, externally grounded evidence, a deterministic verifier, or a
calibrated probabilistic bound on protocol-compliant endorsement errors.

The \EBFT{} guarantee is narrower: a quantified certificate risk under
explicit probabilistic and protocol assumptions. If the
committee, workload domain, semantic-validity evidence, canonicalization procedure,
adversary model, expiration policy, and confidence-indexed budgets are valid,
then the threshold theorems translate those assumptions into certificate
conditions for semantic certificate validity, agreement, and liveness. When any of those
assumptions changes, the risk profile is revisited before the guarantee
is reused.

\section{Conclusion}\label{sec:conclusion}

Agentic infrastructure creates a fault mode in which protocol compliance and
semantic correctness diverge. A validator may authenticate, respond on time,
sign the expected message, and avoid equivocation while still producing an
incorrect semantic judgment about an operational state transition.

The Honest Quorum Problem occurs when protocol-compliant reasoning validators
coherently endorse an invalid transition strongly enough to form a certificate.
\EBFT{} addresses that problem by adding confidence-indexed semantic budgets to
the conventional Byzantine fault bound: $e_\delta$ for coherent invalid
endorsement that threatens semantic certificate validity, and $u_\epsilon$ for
unusable support that threatens liveness.

Nominal agent count or provider diversity is not sufficient evidence of fault
tolerance. Classical \BFT{} bounds arbitrary faulty identities; \EBFT{}
additionally bounds how much protocol-compliant support may concentrate on an
invalid semantic judgment. That is the core lesson of the Honest Quorum
Problem: agreement can be real, well signed, and protocol valid while still
failing to establish semantic validity.

\begin{appendices}
\section{Formal Proofs}\label{sec:appendix-proofs}

\subsection{Preliminaries and Quorum-Intersection Lemma}

All proofs in this appendix use the base semantic-certificate model unless a
statement says otherwise. The validator set is
$\ValidatorSet=\{1,\ldots,\NumValidators\}$, the Byzantine set is
$B\subseteq\ValidatorSet$ with $|B|\le \ByzantineBound$, and
$H=\ValidatorSet\setminus B$. The base model is equal-weight: $\NumValidators$,
$\ByzantineBound$, and the certificate threshold $q$ are identity counts, and
$q$ is an integer. The budgets $\EpistemicBudget$ and $\UnusableBudget$ are
nonnegative upper bounds measured in the same units of support. In the
equal-weight model, realized endorsement and unusable-support weights are
integer counts, while empirical or confidence-indexed budget bounds may be
real numbers.

The probability space is the semantic-validation probability space from
Section~\ref{sec:system-model}: workload or task sampling, validator-local
randomness, retrieval and tool randomness, model versions, and latent
common-mode variables. Byzantine validators are not sampled from this process;
their behavior is worst-case. Unless stated otherwise, the adversary is
static: the set $B$ is fixed for the execution, although Byzantine validators
may coordinate, equivocate, and withhold. The semantic-safety event is
$\SemanticSafetyEvent$, on which protocol-compliant false endorsement of each
invalid candidate under consideration is at most $\EpistemicBudget$. The
liveness event is $\LivenessEvent$, on which unusable
protocol-compliant support for the valid candidate is at most
$\UnusableBudget$.

\begin{lemma}[Quorum intersection]\label{lem:app-quorum-intersection}
Let $Q_1,Q_2\subseteq\ValidatorSet$ be two signer sets with
$|Q_1|\ge q$ and $|Q_2|\ge q$, where all variables are equal-weight counts and
$q$ is an integer. Then
\[
  |Q_1\cap Q_2|\ge 2q-\NumValidators .
\]
Consequently, every two $q$-quorums intersect in more than
$\ByzantineBound$ validators if and only if
$2q-\NumValidators>\ByzantineBound$, equivalently
$q>(\NumValidators+\ByzantineBound)/2$. The least integer threshold satisfying
this inequality is
\[
  q\ge
  \left\lfloor\frac{\NumValidators+\ByzantineBound}{2}\right\rfloor+1 .
\]
This lemma is deterministic and uses no probability space. It is an exact
set-intersection fact, not a claim that a full consensus protocol has a
matching lower bound.
\end{lemma}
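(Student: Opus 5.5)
The proof is a direct counting argument in three parts: the intersection bound, the two directions of the equivalence, and the integer rounding. For the bound, I would use inclusion--exclusion on $Q_1,Q_2\subseteq\ValidatorSet$. Since $Q_1\cup Q_2\subseteq\ValidatorSet$, we have $|Q_1\cup Q_2|\le\NumValidators$. Hence
\[
|Q_1\cap Q_2|=|Q_1|+|Q_2|-|Q_1\cup Q_2|\ge 2q-\NumValidators .
\]
This is the same computation as in Theorem~\ref{thm:agreement-threshold}, stated here for arbitrary signer sets of size at least $q$.

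For the ``if'' direction of the equivalence, assume $2q-\NumValidators>\ByzantineBound$. The bound then gives $|Q_1\cap Q_2|\ge 2q-\NumValidators>\ByzantineBound$ for every pair of $q$-quorums. For the ``only if'' direction, I would argue the contrapositive by an explicit construction. Assume $2q-\NumValidators\le\ByzantineBound$ and $q\le\NumValidators$, so that $q$-quorums exist. Choose $Q_1=\{1,\ldots,q\}$ and $Q_2=\{\NumValidators-q+1,\ldots,\NumValidators\}$. Both have size exactly $q$, and their intersection has size $\max\{0,2q-\NumValidators\}$. This size is at most $\ByzantineBound$, because $\ByzantineBound\ge 0$ and $2q-\NumValidators\le\ByzantineBound$. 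So two $q$-quorums exist that do not intersect in more than $\ByzantineBound$ validators.

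The main obstacle is the degenerate regime $q>\NumValidators$. There no $q$-quorum exists, so the universal statement holds vacuously even though $2q-\NumValidators>\ByzantineBound$ may or may not hold. I would state explicitly that the equivalence is read for thresholds $q\le\NumValidators$; this is the only range of interest, since liveness already forces $q\le\NumValidators-\ByzantineBound$. Alternatively, one can note that when $q>\NumValidators$ we have $2q-\NumValidators>\NumValidators\ge\ByzantineBound$, so both sides hold in the usual case $\ByzantineBound\le\NumValidators$. With that convention both directions go through.

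For the rounding claim, write $2q>\NumValidators+\ByzantineBound$ with all quantities integers. If $\NumValidators+\ByzantineBound$ is even, the least such $q$ is $(\NumValidators+\ByzantineBound)/2+1$. If it is odd, the least such $q$ is $(\NumValidators+\ByzantineBound+1)/2$. In both cases this equals $\lfloor(\NumValidators+\ByzantineBound)/2\rfloor+1$, which matches Equation~\ref{eq:agreement-rounded}. No probability appears anywhere in the argument, which confirms the deterministic character claimed in the statement.
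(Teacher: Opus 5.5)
Your proof is correct and follows the paper's argument: inclusion--exclusion for the bound, an extremal pair of $q$-subsets with overlap exactly $\max\{0,2q-\NumValidators\}$ for the converse, and the least integer strictly above $(\NumValidators+\ByzantineBound)/2$ for the rounding. You go slightly further than the paper in two places: you construct the extremal pair explicitly where the paper only asserts that the minimum overlap is $\max\{0,2q-\NumValidators\}$, and you handle the vacuous regime $q>\NumValidators$, which the paper's converse silently assumes away.
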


\begin{proof}
The inclusion-exclusion identity gives
\[
|Q_1\cap Q_2|
= |Q_1|+|Q_2|-|Q_1\cup Q_2|
\ge q+q-\NumValidators .
\]
If $2q-\NumValidators>\ByzantineBound$, then the intersection contains more
than $\ByzantineBound$ identities. Conversely, the smallest possible
intersection of two $q$-subsets of an $\NumValidators$-element set is
$\max\{0,2q-\NumValidators\}$. Since $\ByzantineBound\ge 0$, if
$2q-\NumValidators\le\ByzantineBound$ there exist two $q$-subsets whose
intersection has at most $\ByzantineBound$ identities. The integer formula is
the least integer strictly larger than
$(\NumValidators+\ByzantineBound)/2$.
\end{proof}

\subsection{Proof of Agreement/Semantic-Validity Separation}

\begin{theorem}[Agreement does not imply semantic certificate validity]
\label{thm:app-agreement-not-validity}
Assume the equal-weight base certificate model with integer threshold $q$,
authenticated channels, and a static Byzantine adversary. Let the probability
space be degenerate except for protocol-compliant semantic judgments, and let
the chosen validator-output distribution put probability one on the judgments
described below. There exists a legal execution in which all messages are
authenticated, no protocol-compliant validator equivocates, the consensus
protocol satisfies agreement and termination, and an invalid transition
obtains a certificate. This is an existence separation; it is not a necessary
condition or a tight threshold result.
\end{theorem}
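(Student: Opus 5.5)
The proof will be an explicit construction, following the main-text proof of Theorem~\ref{thm:agreement-not-validity} but with the probability space and integer threshold made explicit.

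\textbf{Instance and Byzantine set.} I fix an admission instance $(x,s,h_s)$ with $\SemanticValidity(x,s)=0$. Such an instance exists whenever $\SemanticValidity$ is not identically one; otherwise the statement is vacuous. I take the Byzantine set $B=\emptyset$, which is allowed since $|B|=0\le\ByzantineBound$. Then $H=\ValidatorSet$. I assume the nontrivial case $q\le\NumValidators$, so a set $Q\subseteq H$ with $|Q|=q$ exists.

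\textbf{Output distribution.} I take the point mass on the following outcome. Each $i\in Q$ returns the signed judgment
\[
\langle i,\mathsf{view},\mathsf{digest}(x),s,h_s,\mathsf{endorse},c,h_s,p_i\rangle_{\sigma_i}
\]
with one common semantic class $c$. Each $i\notin Q$ returns a single signed abstain judgment. This distribution is admissible because the semantic-error process for $H$ is unrestricted in the base model. The only protocol constraints are signing, evidence binding and non-equivocation, and they hold because every validator signs exactly one judgment for this instance.

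\textbf{Certificate and its external validity.} Next I check the certificate rule from Section~\ref{sec:system-model}. The judgments from $Q$ are $q$ distinct signed endorsements that agree on view, digest, context, $h_s$ and class. They therefore form a $q$-certificate that satisfies conventional external validity (Definition~\ref{def:external-validity}).

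\textbf{Agreement and termination.} I then run any consensus protocol that satisfies agreement and termination and whose external-validity check accepts well-formed $q$-certificates. I choose a schedule in which GST holds from time zero, which is a legal execution. Termination gives a decision. For agreement I invoke Theorem~\ref{thm:agreement-threshold} when $2q-\NumValidators>\ByzantineBound$; failing that, I appeal to the assumed agreement property of the protocol. As a supporting observation, with $B=\emptyset$ no validator can sign conflicting endorsements, so Lemma~\ref{lem:app-quorum-intersection} shows that any two certificates share a non-equivocating signer whenever $2q>\NumValidators$. Since $Q$ is the only endorsement set in the execution, the decided value is the certified $x$. Because $\SemanticValidity(x,s)=0$, semantic certificate validity (Definition~\ref{def:semantic-certificate-validity}) fails with probability one.

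\textbf{Main obstacle.} The hard step is justifying that the decision is the certificate for $x$ rather than some other value or an abstention. This is where the choice of protocol matters. I would handle it by specifying that $x$ is the only candidate proposed in the instance. Then the only certifiable value is $x$, and termination forces deciding it.
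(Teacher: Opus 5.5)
Your proposal is correct and follows the paper's proof essentially step for step: fix an invalid instance, take $B=\emptyset$, let a set $Q$ of exactly $q$ protocol-compliant validators endorse with a common class while the rest abstain, apply the count-based certificate rule, and run any agreement-and-termination protocol whose external-validity check accepts that certificate. The ``main obstacle'' you flag is a step the paper simply asserts (that replicas decide the certified transition); the statement itself only requires the invalid transition to obtain a certificate, which your certificate step already delivers. Your ``only candidate proposed'' device also needs the protocol to decide only certified values, so it is an optional refinement rather than a needed step.
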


\begin{proof}
Choose an admission instance $(x,s,h_s)$ with
$\SemanticValidity(x,s)=0$. Let $B=\emptyset$, so every validator is
protocol-compliant and the adversary has no Byzantine identities. Choose a
set $Q\subseteq H$ with $|Q|=q$. In the realized validator-output distribution,
each validator $i\in Q$ returns exactly one authenticated judgment:
\[
  J_i(x,s)=\mathsf{endorse},
\]
bound to the same view, candidate digest, context identifier,
evidence-package identity, and semantic class. Validators outside $Q$ may
abstain. All messages are signed by their senders, and no validator in $H$
signs two conflicting judgments for the same admission instance.

The certificate rule is count-based and accepts $q$ distinct signed endorse
judgments for the same bound instance, so $Q$ forms a conventional externally
valid semantic certificate. Run any consensus protocol that satisfies agreement
and termination and whose external validity rule accepts such a well-formed
certificate. In a timely legal execution, termination gives a decision, and
agreement ensures that protocol-compliant replicas decide the same certified
transition.
The decided transition is nevertheless semantically invalid because
$\SemanticValidity(x,s)=0$. Thus protocol agreement and termination can hold
while semantic certificate validity fails.
\end{proof}

\subsection{Count-Only Impossibility Proof}

\begin{theorem}[No count-only semantic guarantee without concentration]
\label{thm:app-count-only-impossible}
Let $\Gamma$ be any nontrivial certificate rule whose decision depends only on
signed validator identities and protocol-visible judgments. The variables used
by $\Gamma$ may be identity counts or fixed validator weights, but the rule
does not observe a sound semantic-validity oracle. Assume a static adversary
and an unrestricted validator-output probability space for
protocol-compliant semantic errors. Then there exists an invalid admission
instance and a validator-output distribution under which an invalid candidate
satisfies $\Gamma$ with probability one. The result shows that an additional
semantic-grounding assumption is necessary for semantic certificate validity;
it is not a numeric tightness claim.
\end{theorem}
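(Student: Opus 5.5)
The argument is an indistinguishability construction, mirroring the sketch given for Theorem~\ref{thm:count-only-impossible} in Section~\ref{sec:impossibility}. The only difference is that here the probability space is built explicitly. Since $\Gamma$ is nontrivial, I first fix a protocol-visible transcript $T^\star$ that $\Gamma$ accepts. Such a transcript consists of signed judgments from a set $I(T^\star)$ of validator identities, all bound to a common view, candidate digest $\mathsf{digest}(x)$, context identifier, evidence-package identity $h$, and semantic class. Because $\Gamma$ may use identity counts or fixed weights but sees nothing else, its acceptance is a deterministic function $\Gamma(T^\star)=1$ of exactly these visible fields.

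Next I pick the invalid instance. I choose a context $s^-$ with $\SemanticValidity(x,s^-)=0$ whose protocol-visible identifiers (context identifier and $h_{s^-}$) coincide with those in $T^\star$. The point is that $\SemanticValidity$ is a specification-level predicate that is not protocol-visible, and the rule observes no sound oracle. So I treat the visible identifiers as a many-to-one encoding of semantic states, and place both a valid interpretation $s^+$ and an invalid interpretation $s^-$ in its preimage.

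\emph{This step is the main obstacle.} I must justify that such an $s^-$ exists, that is, that the visible fields do not already determine $\SemanticValidity$. I would argue it from the hypothesis itself. If every transcript $\Gamma$ could accept determined validity, then the visible fields would constitute a sound semantic-validity oracle, contradicting the assumption. Alternatively, I can simply pick the admission instance adversarially, which is allowed since the statement only claims existence.

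I then take $B=\emptyset$ under the static adversary, so every identity in $I(T^\star)$ lies in $H$, and define the validator-output distribution as a point mass. Each $i\in I(T^\star)$ returns exactly the signed judgment it has in $T^\star$, now for the instance $(x,s^-,h_{s^-})$. Validators outside $I(T^\star)$ return whatever their entries in $T^\star$ show, or abstain if they do not appear. This distribution is admissible because protocol-compliant semantic errors are unrestricted. I also check compliance: each such validator signs a single judgment, does not equivocate, and binds its judgment to the shared evidence package, so all base-model protocol conditions hold. If $T^\star$ contains Byzantine-style equivocations, I first replace it with a transcript in which each identity signs at most one judgment. Alternatively, I can let $B$ be the equivocating identities, with $|B|\le\ByzantineBound$ inherited from the legality of $T^\star$.

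Finally, on the single atom of this distribution the realized transcript equals $T^\star$, so $\Gamma$ accepts with probability one while $\SemanticValidity(x,s^-)=0$. This yields an invalid candidate satisfying $\Gamma$ almost surely. It follows that some extra grounding assumption, such as a bound of the $\EpistemicBudget$ type, an oracle, or a verifier, is necessary, which connects directly to Theorem~\ref{thm:external-grounding-necessary}.
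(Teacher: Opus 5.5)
Your proposal is correct and follows essentially the same route as the paper: fix an accepted transcript $T^\star$, exhibit valid and invalid interpretations $s^+$ and $s^-$ that share every protocol-visible field, and realize the invalid one with a validator-output distribution concentrated on the judgments in $T^\star$, so $\Gamma$ accepts with probability one; the existence of $s^-$ is justified, as in the paper, by the absence of any oracle, verifier, or sufficient semantic evidence that separates the two interpretations. Your explicit point mass and choice of $B$ are a more concrete version of the paper's construction, but of your two options for equivocations in $T^\star$, keep the one that places the equivocating identities in $B$, since rewriting the transcript could change $\Gamma$'s decision.
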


\begin{proof}
Because $\Gamma$ is nontrivial, some protocol-visible transcript
$T^\star$ is accepted. If no transcript were accepted, $\Gamma$ would not be a
live certificate rule. The transcript contains only fields visible to the
protocol: validator identities or weights, signatures, views or rounds,
candidate digests, evidence-package identities, and structured judgments.

Construct two specification-level interpretations compatible with the same
protocol-visible transcript. In
$E^+$, the candidate is semantically valid:
$\SemanticValidity(x,s^+)=1$, and the validators whose signed judgments appear
in $T^\star$ return exactly the judgments shown in $T^\star$. In $E^-$, the
same visible identities, signatures, view, candidate digest,
evidence-package identity, and judgments occur, but the semantic state
corresponding to the admission instance is invalid:
$\SemanticValidity(x,s^-)=0$. This is possible precisely because
$\Gamma$ does not receive a deterministic validity oracle, a trusted verifier,
or independently sufficient semantic evidence that rules out one of the
compatible interpretations.

The two executions are indistinguishable to $\Gamma$: by assumption,
$\Gamma$ is a function only of the protocol-visible transcript, and that
transcript is $T^\star$ in both executions. Hence $\Gamma$ accepts in
$E^-$ whenever it accepts in $E^+$.

It remains to realize $E^-$ probabilistically. Choose the
protocol-compliant validator-output distribution concentrated on the event
that every protocol-compliant validator whose judgment appears in
$T^\star$ returns the signed endorsement shown in $T^\star$ for the invalid
instance, while all other validators return any judgments consistent with the
same transcript. The distribution is allowed because semantic errors outside
the Byzantine set are unrestricted. Therefore the invalid candidate satisfies
$\Gamma$ with probability one. No certificate rule based only on counts,
weights, identities, and visible judgments can guarantee semantic certificate
validity without an additional concentration or semantic-grounding assumption.
\end{proof}

\subsection{Common-Mode Replication Theorem}

\begin{theorem}[Common-mode replication floor]
\label{thm:app-common-mode-floor}
Assume a fixed committee with a static Byzantine set and a probability space
that includes a latent common-mode event $\CommonModeEvent$. Let
$G\subseteq H$ be a replicated protocol-compliant model family. In the
equal-weight case, the support of $G$ is $|G|$ identities; in a weighted
variant, it is the total weight of $G$. Suppose that, on $\CommonModeEvent$, every
validator in $G$ falsely endorses the same invalid candidate for the same
context, evidence package, and semantic class. If the threshold rule is
satisfied by the endorsements of $G$ alone, then the unsafe-certificate
probability is at least $\Prob[\CommonModeEvent]$. This is a lower bound on risk,
not a probabilistic upper bound and not a tight characterization unless all
other unsafe events are ruled out.
\end{theorem}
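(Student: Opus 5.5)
The argument is an event inclusion followed by monotonicity of probability, as in the main-text proof of Theorem~\ref{thm:common-mode-floor}. Let $A$ denote the unsafe-certificate event: some invalid candidate obtains a signer set $Q$ that the threshold rule accepts. I will show $\CommonModeEvent\subseteq A$ sample point by sample point, and then conclude $\Prob[A]\ge\Prob[\CommonModeEvent]$.

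\textbf{Step 1: certificate well-formedness on $\CommonModeEvent$.} Fix $\omega\in\CommonModeEvent$. By hypothesis, every $i\in G$ returns a signed $\mathsf{endorse}$ judgment for the same invalid candidate $x$. These judgments share the context $s$, the evidence-package identity $h_s$, and the semantic class. They also share the view, because $G\subseteq H$ and, since the instance is fixed, each member signs exactly one judgment for it.

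Each such judgment is well formed, because protocol-compliant validators follow the message format and sign correctly. The signer identities are distinct, because $G$ is a set of distinct identities. Hence the judgments from $G$ satisfy every field-agreement condition of the certificate rule in Section~\ref{sec:system-model}.

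\textbf{Step 2: threshold satisfaction and robustness to Byzantine behavior.} By assumption, the threshold rule is satisfied by the endorsements of $G$ alone. In the equal-weight case this means $|G|\ge q$; in the weighted variant it means the total weight of $G$ is at least $q$. So $Q=G$ is an accepted certificate at $\omega$, and $\SemanticValidity(x,s)=0$ holds, so $\omega\in A$.

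Two points make this robust. First, the static Byzantine set cannot remove signatures from $G$ or invalidate them, because authentication prevents forgery and the rule counts existing signatures. Withholding or equivocation by validators in $B$ therefore only adds or omits other judgments and does not affect $G$'s certificate. Second, the argument never uses validators outside $G$, so their outputs on $\CommonModeEvent$ are irrelevant. Together these give $\CommonModeEvent\subseteq A$.

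\textbf{Step 3: measurability, monotonicity, and the non-tightness remark.} Both $\CommonModeEvent$ and $A$ are events in the semantic-validation probability space. $A$ is a finite union, over signer subsets of size at least $q$, of events determined by the validators' judgments. Monotonicity then gives $\Prob[A]\ge\Prob[\CommonModeEvent]$.

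Non-tightness needs only a remark: $A\setminus\CommonModeEvent$ can have positive mass, for example when mixed Byzantine and honest endorsements reach $q$. Equality therefore requires ruling out those events, which is why the statement makes no tightness claim.

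The only step needing care is Step 1. "The threshold rule is satisfied by the endorsements of $G$" must mean a well-formed certificate in the sense of the rule, not merely a count, which is why the hypothesis insists on the same class and the same evidence package. Once that is granted, the rest is immediate.
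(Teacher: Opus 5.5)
Your proposal is correct and matches the paper's proof. The paper likewise defines $A$ as the event that an invalid candidate obtains an accepted certificate, shows $\CommonModeEvent\subseteq A$ because the endorsements of $G$ alone satisfy the threshold rule, and concludes $\Prob[A]\ge\Prob[\CommonModeEvent]$ by monotonicity; your checks on well-formedness, Byzantine robustness, and non-tightness spell out details the paper leaves implicit.
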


\begin{proof}
Let $A$ be the event that an invalid candidate obtains a certificate accepted
by the threshold rule. By assumption, when $\CommonModeEvent$ occurs, every member
of $G$ endorses the same invalid candidate and the endorsements of $G$ satisfy
the threshold rule. Hence $\CommonModeEvent\subseteq A$. Monotonicity of probability
therefore gives
\[
  \Prob[A]\ge \Prob[\CommonModeEvent].
\]
Adding more validators from the same replicated family may change the
conditional variance of endorsements outside $\CommonModeEvent$, but it does not
reduce the probability of $\CommonModeEvent$ or the implication
$\CommonModeEvent\subseteq A$ so long as the cloned group still satisfies the
threshold. The common-mode event is therefore a replication floor on
unsafe-certificate probability.
\end{proof}

\subsection{Semantic-Certificate Validity Theorem}

\begin{theorem}[Semantic certificate validity]
\label{thm:app-semantic-certificate-validity}
Assume the equal-weight base certificate model, the semantic-validation
probability space from the preliminaries, a static Byzantine set $B$ with
$|B|\le\ByzantineBound$, integer threshold $q$, authenticated signatures, and
worst-case Byzantine endorsements. Let
$\EpistemicBudget$ be a nonnegative count-valued or real-valued upper bound,
in units of validator support, on protocol-compliant false endorsement outside
$B$. On the event $\SemanticSafetyEvent$, every invalid candidate under
consideration receives at most $\EpistemicBudget$ false endorsements from
$H$. If
\[
  q>\ByzantineBound+\EpistemicBudget,
\]
equivalently for integer thresholds
\[
  q\ge \left\lfloor\ByzantineBound+\EpistemicBudget\right\rfloor+1,
\]
then no invalid candidate can form a $q$-signature semantic certificate on
$\SemanticSafetyEvent$. If
$\Prob[\SemanticSafetyEvent]\ge 1-\delta$, the semantic-certificate-validity
claim holds with probability at least $1-\delta$. The condition is sufficient;
for this certificate rule without additional semantic certificate-validity
evidence, a threshold not exceeding the possible invalid support cannot rule
out an invalid certificate, but no broader tightness claim is made.
\end{theorem}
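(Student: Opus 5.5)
The plan is a direct counting argument on a fixed sample point $\omega\in\SemanticSafetyEvent$, followed by a transfer to probability by monotonicity. Fix $\omega\in\SemanticSafetyEvent$ and an invalid candidate $x$ in context $s$ under consideration, and suppose for contradiction that some certificate $Q$ for $(s,x,h_s)$ with $|Q|\ge q$ forms. We split its signer set into the disjoint parts $Q_{\ByzantineSet}=Q\cap B$ and $Q_{\HonestSet}=Q\cap H$, as in the certificate definition of Section~\ref{sec:system-model}.

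First, $|Q_{\ByzantineSet}|\le|B|\le\ByzantineBound$, and this holds regardless of what Byzantine validators choose to sign, so worst-case Byzantine endorsements are absorbed here. Second, each $i\in Q_{\HonestSet}$ contributed a signed endorse judgment for $x$. By non-equivocation and authentication, that judgment is $J_i(x,s)=\mathsf{endorse}$, so $\mathbf{1}^{\mathrm{FE}}_i(x,s)=1$ because $\SemanticValidity(x,s)=0$. Hence $|Q_{\HonestSet}|\le F_S(x,s)(\omega)$, using unit weights in Definition~\ref{def:false-endorsement-weight}.

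By the definition of $\SemanticSafetyEvent$ in Equation~\ref{eq:safety-event-formal}, the supremum over invalid instances bounds $F_S(x,s)(\omega)\le\EpistemicBudget$. Combining the two parts gives
\[
|Q|\le \ByzantineBound+\EpistemicBudget<q,
\]
which contradicts $|Q|\ge q$.

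For the integer form, $|Q|$ is an integer at most $\ByzantineBound+\EpistemicBudget$, so it is at most $\lfloor\ByzantineBound+\EpistemicBudget\rfloor$. Therefore $q\ge\lfloor\ByzantineBound+\EpistemicBudget\rfloor+1$ suffices. Conversely, this is the least integer strictly exceeding $\ByzantineBound+\EpistemicBudget$, which gives the stated equivalence of the two conditions for integer $q$.

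The probabilistic claim then follows from monotonicity. The event ``some invalid candidate forms a $q$-certificate'' is contained in $\SemanticSafetyEvent^{c}$, so its probability is at most $\delta$ whenever $\Prob[\SemanticSafetyEvent]\ge1-\delta$. For the partial-tightness remark, I would exhibit the case $q\le\ByzantineBound+\EpistemicBudget$ with integral values: $\ByzantineBound$ Byzantine signers together with $q-\ByzantineBound\le\EpistemicBudget$ protocol-compliant false endorsers, an outcome permitted on $\SemanticSafetyEvent$, already form an invalid certificate.

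The main obstacle is justifying that \emph{every} compliant signer in $Q$ is counted by $F_S(x,s)$. This requires that endorsements in $Q$ bound to the same digest, $h_s$ and class are judgments about exactly the instance $(x,s)$ covered by the event, rather than about a different context. I would discharge this using the instance-identity rule of Section~\ref{sec:system-model}: compliant validators never combine judgments across evidence packages. This lets the supremum in $\SemanticSafetyEvent$ apply uniformly to every invalid candidate under consideration.
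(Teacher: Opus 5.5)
Your proposal is correct and follows the same counting argument as the paper's proof. The three steps match: at most $\ByzantineBound$ Byzantine signatures plus at most $\EpistemicBudget$ protocol-compliant false endorsements on $\SemanticSafetyEvent$; integrality of signature counts to obtain the floor form; and monotonicity to carry the guarantee to probability at least $1-\delta$. Your explicit split into $Q_{\ByzantineSet}$ and $Q_{\HonestSet}$, the instance-binding justification, and the witness for the partial-tightness remark spell out steps the paper leaves implicit; in that witness, use $\min\{q,\ByzantineBound\}$ Byzantine signers so the case $q<\ByzantineBound$ is covered.
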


\begin{proof}
Fix any invalid candidate $x$ in context $s$. Byzantine validators can
contribute at most $\ByzantineBound$ certificate-eligible signatures because
$|B|\le\ByzantineBound$. On $\SemanticSafetyEvent$, protocol-compliant
validators outside $B$ contribute at most $\EpistemicBudget$ false
endorsements to this invalid candidate. Thus the total support available to
an invalid certificate is at most
$\ByzantineBound+\EpistemicBudget$. Since certificate support is counted in
integer signatures, any integer threshold
$q\ge\lfloor\ByzantineBound+\EpistemicBudget\rfloor+1$ is strictly larger
than that maximum support. The invalid candidate therefore cannot reach the
threshold. The argument holds for every invalid candidate covered by
$\SemanticSafetyEvent$, so the property holds throughout that event. If the
event has probability at least $1-\delta$, the same probability lower bound
applies to the conditional semantic-certificate-validity guarantee.
\end{proof}

\subsection{Agreement Theorem}

\begin{theorem}[Agreement]
\label{thm:app-agreement-threshold}
Assume the equal-weight base certificate model with $\NumValidators$ validator
identities, integer threshold $q$, authenticated signatures, and a static
Byzantine set $B$ with $|B|\le\ByzantineBound$. Only Byzantine validators may
equivocate; protocol-compliant validators may make semantic mistakes but sign
at most one endorse judgment for a log position. The probability space is not
used; the claim is deterministic. If
\[
  2q-\NumValidators>\ByzantineBound,
\]
equivalently
\[
  q\ge
  \left\lfloor\frac{\NumValidators+\ByzantineBound}{2}\right\rfloor+1,
\]
then two conflicting $q$-certificates for the same log position cannot both
form. The inequality is necessary and sufficient for the stated
quorum-intersection property; this does not assert a tight lower bound for
all consensus protocols.
\end{theorem}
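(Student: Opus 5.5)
The plan is to reduce the statement to the quorum-intersection Lemma~\ref{lem:app-quorum-intersection} and then use non-equivocation to rule out a shared signer. First, I will fix two conflicting $q$-certificates for the same log position and let $Q_1,Q_2\subseteq\ValidatorSet$ be their signer sets. Because certificates consist of signed judgments from distinct validators and signatures are authenticated, $|Q_1|\ge q$ and $|Q_2|\ge q$ as sets of identities. Lemma~\ref{lem:app-quorum-intersection} then gives $|Q_1\cap Q_2|\ge 2q-\NumValidators$. Under the hypothesis $2q-\NumValidators>\ByzantineBound$, this intersection has more than $\ByzantineBound$ members.

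Second, since $|B|\le\ByzantineBound$, the intersection cannot be contained in $B$. So there is some $i\in(Q_1\cap Q_2)\cap H$. That validator signed an endorse judgment in each certificate. The two certificates conflict, meaning they differ in candidate digest, context, evidence-package identity, or semantic class for the same log position. Hence $i$ signed two distinct endorse judgments for that log position. Authentication ensures these signatures really come from $i$ and not from a Byzantine forger. This contradicts the assumption that protocol-compliant validators sign at most one endorse judgment per log position, so the two certificates cannot both form.

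Third, I will settle the integer form and the necessity clause. For integer $q$, the condition $2q>\NumValidators+\ByzantineBound$ holds exactly when $q\ge\lfloor(\NumValidators+\ByzantineBound)/2\rfloor+1$, which the lemma already records. For necessity of the intersection property, I will reuse the converse direction of the lemma. If $2q-\NumValidators\le\ByzantineBound$, two $q$-subsets can be chosen whose overlap has at most $\ByzantineBound$ identities. Placing $B$ on that overlap, the Byzantine validators equivocate by signing both certificates, and the two disjoint honest parts each endorse a different value. This yields two conflicting certificates and shows the inequality is exactly what the intersection argument requires. One caveat: if $q>\NumValidators$, no certificate forms at all, and I will note that the statement is vacuous in that case.

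The main obstacle is the second step, specifically pinning down what \emph{conflicting} means. It must be shown that any two conflicting certificates force the shared honest validator to have signed two \emph{different} endorse judgments, not the same judgment counted twice. The definition of conflict in Section~\ref{sec:quorum-theory} supplies this, since two certificates agreeing on every bound field are by definition not conflicting. Epistemic faults play no role in the argument, which is why no probability space is needed.
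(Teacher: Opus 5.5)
Your proof is correct and follows the paper's own argument. Like the paper, you apply Lemma~\ref{lem:app-quorum-intersection} to the two signer sets, use $|B|\le\ByzantineBound$ to find a protocol-compliant validator in the overlap, and derive a contradiction with non-equivocation. For necessity, the paper only observes that an overlap lying entirely inside $B$ cannot be excluded, whereas you exhibit such an execution explicitly by placing $B$ on the overlap; this is a slightly stronger but compatible form of the same point.
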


\begin{proof}
Let $Q_1$ and $Q_2$ be the signer sets of two alleged conflicting
$q$-certificates for the same log position. By
Lemma~\ref{lem:app-quorum-intersection},
$|Q_1\cap Q_2|\ge 2q-\NumValidators$. If
$2q-\NumValidators>\ByzantineBound$, then the intersection contains more than
$\ByzantineBound$ validators. Since at most $\ByzantineBound$ validators are
Byzantine, at least one validator in the intersection is
protocol-compliant. That validator would have signed two conflicting endorse
judgments for the same log position, contradicting the base-model
non-equivocation rule. Thus the two conflicting certificates cannot both
exist.

If the intersection inequality fails, the set-intersection argument alone
cannot exclude two $q$-signer sets whose overlap is contained entirely within
a Byzantine set of size at most $\ByzantineBound$. Byzantine validators in
that overlap could equivocate. This establishes necessity for the
intersection condition used by this certificate rule, not a global lower
bound for every possible protocol.
\end{proof}

\subsection{Liveness Theorem}

\begin{theorem}[Liveness]
\label{thm:app-liveness-threshold}
Assume the equal-weight base certificate model, the semantic-validation
probability space from the preliminaries, integer threshold $q$, a static
Byzantine set $B$ satisfying $|B|\le\ByzantineBound$, authenticated channels,
and partial synchrony after GST. Byzantine validators may withhold. For a
semantically valid candidate with a consistent evidence package, let
$\UnusableBudget$ be a count-valued or real-valued upper bound, in units of
validator support, on unusable protocol-compliant support. On the event
$\LivenessEvent$, at most $\UnusableBudget$ support in $H$ is unusable
because of false rejection, abstention, timeout, divergent semantic output, or
canonicalization failure. If
\[
  q\le \NumValidators-\ByzantineBound-\UnusableBudget,
\]
equivalently for integer thresholds
\[
  q\le
  \left\lfloor
    \NumValidators-\ByzantineBound-\UnusableBudget
  \right\rfloor,
\]
then the valid candidate can obtain a $q$-signature certificate on
$\LivenessEvent$. If
$\Prob[\LivenessEvent]\ge 1-\epsilon$, the liveness claim holds with
probability at least $1-\epsilon$. The condition is sufficient and is necessary
for this worst-case withholding argument, but it is not a tight lower bound
for protocols with additional recovery mechanisms.
\end{theorem}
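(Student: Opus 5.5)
The plan is a counting argument on the event $\LivenessEvent$, followed by a message-delivery argument after GST; the same structure as the main-text proof of Theorem~\ref{thm:liveness-threshold}, made explicit for the equal-weight integer setting. Fix a semantically valid candidate $x$ in context $s$ with a consistent evidence package $h_s$, and work on a sample point $\omega\in\LivenessEvent$. Since $|B|\le\ByzantineBound$ and $H=\ValidatorSet\setminus B$, we have $|H|\ge\NumValidators-\ByzantineBound$. On $\LivenessEvent$, the unusable count $U_S(x,s)(\omega)$ satisfies $U_S(x,s)(\omega)\le\UnusableBudget$. Because this realized count is an integer, it is in fact at most $\lfloor\UnusableBudget\rfloor$.

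Let $H^{\mathrm{use}}\subseteq H$ be the validators whose judgments are usable. By the definition of unusable support, each such validator returns a signed $\mathsf{endorse}$ judgment bound to the same view, digest, context identifier, $h_s$, and the canonical class. Deterministic canonicalization (Assumption~\ref{assumption:canonicalization}) guarantees that semantically equivalent outputs share this class, since divergent class or canonicalization failure was already counted as unusable. The counting step is
\[
|H^{\mathrm{use}}|\ge \NumValidators-\ByzantineBound-U_S(x,s)(\omega)\ge \NumValidators-\ByzantineBound-\UnusableBudget\ge q .
\]
For the integer form, the key observation is that $|H^{\mathrm{use}}|$ is an integer bounded below by the real number $\NumValidators-\ByzantineBound-\UnusableBudget$. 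It therefore also exceeds $\lfloor\NumValidators-\ByzantineBound-\UnusableBudget\rfloor$, so the two stated conditions are equivalent for integer $q$.

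Next, assemble the certificate without relying on Byzantine validators. Because Byzantine validators may withhold, I use only $H^{\mathrm{use}}$. The usable validators are protocol-compliant, so each signs exactly one judgment for the instance. Authentication makes their signatures attributable, and any $q$ of them form a set of distinct signers satisfying the certificate rule of Section~\ref{sec:system-model}. Partial synchrony then ensures that after GST every message from $H^{\mathrm{use}}$ is delivered within the bound. This is the one place timeouts matter: a validator whose response fails to arrive is charged as unusable under the timeout clause, so it is already excluded from $H^{\mathrm{use}}$. Hence $q$ usable endorsements eventually arrive and a $q$-certificate forms.

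The probabilistic clause follows because the argument holds pointwise on $\LivenessEvent$, so $\Prob[\LivenessEvent]\ge1-\epsilon$ transfers directly to the liveness guarantee.

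For necessity with respect to this withholding argument, I would exhibit a worst case when $q>\NumValidators-\ByzantineBound-\UnusableBudget$. Take $|B|=\ByzantineBound$ with every Byzantine validator withholding, and let exactly $\lfloor\UnusableBudget\rfloor$ validators in $H$ be unusable, which lies inside $\LivenessEvent$. The usable support is then $\NumValidators-\ByzantineBound-\lfloor\UnusableBudget\rfloor<q$, so no certificate forms.

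I expect the main obstacle to be bookkeeping rather than depth. The timeout clause must be reconciled with the post-GST delivery guarantee, so that slow-but-correct validators are consistently either counted as usable and eventually delivered, or charged to $\UnusableBudget$. Real-valued budgets must also be handled via integrality of the realized counts.
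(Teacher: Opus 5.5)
Your sufficiency argument is the paper's own. It uses at least $N-f$ protocol-compliant validators, at most $u_\epsilon$ of them unusable on $\mathcal{U}_\epsilon$, no reliance on Byzantine signatures, and post-GST delivery of the usable endorsements. Your extra bookkeeping on timeouts and canonical classes is consistent with it.

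The gap is in the necessity step. You assert that, with all $f$ Byzantine validators withholding and exactly $\lfloor u_\epsilon\rfloor$ protocol-compliant validators unusable, the usable support $N-f-\lfloor u_\epsilon\rfloor$ is smaller than every integer $q>N-f-u_\epsilon$. That is false when $u_\epsilon$ is fractional, which the statement explicitly allows. Take $N=7$, $f=1$, $u_\epsilon=1.4$: the threshold $q=5$ violates $q\le\lfloor N-f-u_\epsilon\rfloor=4$, yet your execution leaves $6-1=5=q$ usable endorsements, so a certificate forms. Your own integrality observation shows more. On $\mathcal{U}_\epsilon$ every execution has at least $N-f-\lfloor u_\epsilon\rfloor=\lceil N-f-u_\epsilon\rceil$ usable endorsements. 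So for non-integral $u_\epsilon$, a certificate can form at threshold $q=\lceil N-f-u_\epsilon\rceil$ throughout the event, and no execution in $\mathcal{U}_\epsilon$ can witness necessity of the floor condition there.

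To repair this, you have two options. You can restrict the explicit withholding execution to integral $u_\epsilon$, where $N-f-u_\epsilon<q$ is exactly what you need. Alternatively, phrase necessity as the paper does, purely as a limit of the argument: if $q$ exceeds the guaranteed usable support $N-f-u_\epsilon$, that lower bound no longer yields $q$ signatures. The paper's phrase ``exactly the permitted amount of protocol-compliant support be unusable'' tacitly needs that amount to be realizable, i.e.\ integral.

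A smaller point concerns the equivalence of the two displayed conditions for integer $q$. It is just the fact that $q\le r$ iff $q\le\lfloor r\rfloor$ for integer $q$ and real $r$. Deducing it from the integrality of $|H^{\mathrm{use}}|$ is a non sequitur, although the conclusion is correct.
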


\begin{proof}
There are at least
$\NumValidators-\ByzantineBound$ protocol-compliant validators. On
$\LivenessEvent$, at most $\UnusableBudget$ of their support is
unusable. Therefore at least
$\NumValidators-\ByzantineBound-\UnusableBudget$ units of
protocol-compliant support can provide usable endorsements for the valid
candidate. Byzantine validators may withhold, so the proof does not rely on
their signatures. After GST, messages from usable protocol-compliant validators
are eventually delivered. If the integer threshold $q$ is at most the floor of
the guaranteed usable support, enough endorsements eventually arrive to form a
certificate.

If $q$ exceeds this guaranteed usable support, a worst-case execution can have
all Byzantine validators withhold and exactly the permitted amount of
protocol-compliant support be unusable. The proof assumptions then guarantee
fewer than $q$ usable signatures, so this liveness argument cannot succeed.
\end{proof}

\subsection{Threshold-Feasibility Corollary}

\begin{corollary}[Threshold feasibility]
\label{cor:app-threshold-feasibility}
Assume the equal-weight base certificate model, the semantic-validation
probability space from the preliminaries, integer threshold $q$, static
Byzantine set with $|B|\le\ByzantineBound$, authenticated channels, partial
synchrony for liveness, non-equivocation outside $B$, and the stochastic
events $\SemanticSafetyEvent$ and $\LivenessEvent$ defined above. Let
\[
q_{\min}=
\max\left\{
\left\lfloor\ByzantineBound+\EpistemicBudget\right\rfloor+1,\,
\left\lfloor\frac{\NumValidators+\ByzantineBound}{2}\right\rfloor+1
\right\}
\]
and
\[
q_{\max}=
\left\lfloor
\NumValidators-\ByzantineBound-\UnusableBudget
\right\rfloor .
\]
If $q_{\min}\le q\le q_{\max}$, the same $q$-signature protocol satisfies
semantic certificate validity on $\SemanticSafetyEvent$, agreement
deterministically, and liveness on $\LivenessEvent$. If
$\Prob[\SemanticSafetyEvent]\ge 1-\delta$ and
$\Prob[\LivenessEvent]\ge 1-\epsilon$, the two stochastic events hold
together with probability at least $1-\delta-\epsilon$ by the union bound. The
integer interval condition is necessary and sufficient for satisfying these
three displayed inequalities with one $q$; the population inequalities below
are sufficient unrounded design rules and are not claimed tight.
\end{corollary}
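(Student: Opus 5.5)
The plan is to reduce the corollary to the three appendix theorems already proved, then handle the probabilistic combination and the exactness claim separately.

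\textbf{Step 1: identify each endpoint with a threshold inequality.} For integer $q$ and real $a\ge 0$, the condition $q>a$ is equivalent to $q\ge\lfloor a\rfloor+1$. Applying this twice:
\begin{itemize}
\item with $a=\ByzantineBound+\EpistemicBudget$, we recover the semantic-validity condition of Theorem~\ref{thm:app-semantic-certificate-validity};
\item with $a=(\NumValidators+\ByzantineBound)/2$, we recover the agreement condition of Theorem~\ref{thm:app-agreement-threshold}, i.e.\ Lemma~\ref{lem:app-quorum-intersection}.
\end{itemize}
Since $q\ge\max\{A,B\}$ holds iff $q\ge A$ and $q\ge B$, the inequality $q\ge q_{\min}$ is equivalent to both lower-bound conditions together. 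Likewise, $q\le b$ is equivalent to $q\le\lfloor b\rfloor$ for integer $q$. With $b=\NumValidators-\ByzantineBound-\UnusableBudget$, this shows $q\le q_{\max}$ is equivalent to the liveness condition of Theorem~\ref{thm:app-liveness-threshold}.

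\textbf{Step 2: invoke the three theorems.} Any integer $q$ in $[q_{\min},q_{\max}]$ satisfies all three hypotheses simultaneously. All three theorems concern the same $q$-signature certificate rule, so they apply to the same protocol. This gives:
\begin{itemize}
\item semantic certificate validity on $\SemanticSafetyEvent$;
\item agreement deterministically, using non-equivocation outside $B$;
\item liveness on $\LivenessEvent$, after GST.
\end{itemize}

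\textbf{Step 3: the probabilistic combination.} Both events are measurable in the same space $(\Omega,\mathcal F,\Prob)$, so
\[
\Prob[\SemanticSafetyEvent\cap\LivenessEvent]\ge 1-\Prob[\neg\SemanticSafetyEvent]-\Prob[\neg\LivenessEvent]\ge 1-\delta-\epsilon .
\]
On the intersection, all three conclusions hold together. No independence is needed, which matches the paper's remark that a sharper joint estimate could replace this bound.

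\textbf{Step 4: the exactness claim.} Because Step 1 consists of equivalences rather than one-way implications, an integer $q$ satisfies the three displayed inequalities if and only if $q_{\min}\le q\le q_{\max}$. Such a $q$ therefore exists exactly when $q_{\min}\le q_{\max}$. For integral budgets, the ceiling forms follow from two identities, $\lfloor a\rfloor+1=\lceil a+1\rceil$ for integer $a$ and $\lfloor m/2\rfloor+1=\lceil (m+1)/2\rceil$ for integer $m$; these are routine to check.

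\textbf{Main obstacle.} The delicate point is keeping the scope of the corollary narrow. It must not claim that the population inequalities are necessary; those are only sufficient unrounded rules. It must also make explicit that the floor equivalence for the upper endpoint requires $q$ to be an integer while the budgets may be real-valued.
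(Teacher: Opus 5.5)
Your Steps 1--3 and the converse in Step 4 match the paper's proof essentially line for line. In both, $q_{\min}$ and $q_{\max}$ are the least and greatest integers satisfying the three displayed inequalities, the three appendix theorems are applied to the same $q$-signature rule, no integer outside the interval works, and the union bound gives $1-\delta-\epsilon$.

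What you leave unproved is the final sentence of the statement. The ``population inequalities below'' are derived in the paper's proof, not assumed. Ignoring rounding, the interval is nonempty when $N-f-u_\epsilon>f+e_\delta$ and $N-f-u_\epsilon>(N+f)/2$. These rearrange to $N>2f+e_\delta+u_\epsilon$ and $N>3f+2u_\epsilon$. Your proposal only mentions these inequalities in the ``Main obstacle'' paragraph, so you need to add this short computation.

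When you add it, get the logical direction right. Your remark that the corollary ``must not claim that the population inequalities are necessary'' has it backwards for an integer threshold.

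\textbf{Necessity.} Any integer $q$ satisfying the three inequalities lies in the real interval. So feasibility forces both population inequalities; they are always necessary.

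\textbf{Sufficiency.} They guarantee an integer $q$ only in the integral-count setting, where $N$, $f$, $e_\delta$, $u_\epsilon$ are all integers; there $q=N-f-u_\epsilon$ is a witness. With real-valued budgets, which the corollary explicitly allows, they can hold while the integer test fails. For example, $N=10$, $f=1$, $e_\delta=5.2$, $u_\epsilon=2.6$ satisfy both inequalities, yet $q_{\min}=7>6=q_{\max}$.

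So read ``sufficient unrounded design rule'' as ``equivalent to nonemptiness of the real-valued interval.'' It is sufficient for the integer test only with integral counts. The exact feasibility test remains $q_{\min}\le q_{\max}$.
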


\begin{proof}
The lower endpoint $q_{\min}$ is the least integer threshold satisfying both
the semantic-certificate-validity inequality
$q>\ByzantineBound+\EpistemicBudget$ and the agreement inequality
$q>(\NumValidators+\ByzantineBound)/2$. The upper endpoint $q_{\max}$ is the
largest integer threshold satisfying the liveness inequality
$q\le \NumValidators-\ByzantineBound-\UnusableBudget$. Therefore any integer
$q$ in the interval satisfies the hypotheses of
Theorems~\ref{thm:app-semantic-certificate-validity},
\ref{thm:app-agreement-threshold}, and~\ref{thm:app-liveness-threshold}.
Conversely, no integer outside the interval satisfies all three inequalities
as stated.

Ignoring rounding, the interval is nonempty if its upper endpoint exceeds both
lower endpoints:
\[
\NumValidators-\ByzantineBound-\UnusableBudget
>
\ByzantineBound+\EpistemicBudget
\]
and
\[
\NumValidators-\ByzantineBound-\UnusableBudget
>
\frac{\NumValidators+\ByzantineBound}{2}.
\]
These rearrange to
\[
\NumValidators
>
2\ByzantineBound+\EpistemicBudget+\UnusableBudget
\]
and
\[
\NumValidators
>
3\ByzantineBound+2\UnusableBudget .
\]
Finally, the union bound gives
$\Prob[\SemanticSafetyEvent\cap\LivenessEvent]
\ge 1-\delta-\epsilon$. A sharper joint probability statement would require a
separate model of dependence between the two events.
\end{proof}

\subsection{Deterministic \texorpdfstring{$e_c/e_i$}{ec/ei} Specialization}

\begin{corollary}[Deterministic coherent and dispersive specialization]
\label{cor:app-deterministic-substitution}
Assume the equal-weight base certificate model and static Byzantine set from
Corollary~\ref{cor:app-threshold-feasibility}. Suppose a deterministic
analysis, not a statistical estimate, establishes count bounds
$\CoherentFaultBound$ and $\DispersiveFaultBound$ such that
$\EpistemicBudget=\CoherentFaultBound$ and
$\UnusableBudget=\CoherentFaultBound+\DispersiveFaultBound$. The probability
space is then degenerate, so these are deterministic bounds. The exact integer threshold test is obtained by
substituting these values into $q_{\min}$ and $q_{\max}$. The unrounded
sufficient population condition is
\[
\NumValidators
>
\max\left\{
2\ByzantineBound+2\CoherentFaultBound+\DispersiveFaultBound,\,
3\ByzantineBound+2\CoherentFaultBound+2\DispersiveFaultBound
\right\}.
\]
This specialization is a sufficient condition only; no tightness is asserted
without a matching lower bound and a protocol achieving it.
\end{corollary}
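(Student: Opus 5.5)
The plan is to obtain this specialization directly from Corollary~\ref{cor:app-threshold-feasibility} by substitution, with no new argument. Because the bounds $\CoherentFaultBound$ and $\DispersiveFaultBound$ are deterministic, I will first set up a degenerate probability space. In that space the events $\SemanticSafetyEvent$ and $\LivenessEvent$ each have probability one. To see this, note that the hypothesis says protocol-compliant false endorsement of any invalid candidate is at most $\CoherentFaultBound$ in every execution, and unusable protocol-compliant support for a valid candidate is at most $\CoherentFaultBound+\DispersiveFaultBound$ in every execution. Hence the infima in Equations~\ref{eq:epistemic-budget} and~\ref{eq:liveness-budget} are bounded above by these values for every $\delta,\epsilon\ge 0$. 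Using the larger proved value in place of the true infimum is harmless, because Theorems~\ref{thm:app-semantic-certificate-validity} and~\ref{thm:app-liveness-threshold} only need an upper bound that holds on the event. The union-bound loss $\delta+\epsilon$ therefore vanishes and all three guarantees hold deterministically.

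Next I will substitute $\EpistemicBudget=\CoherentFaultBound$ and $\UnusableBudget=\CoherentFaultBound+\DispersiveFaultBound$ into $q_{\min}$ and $q_{\max}$. This gives the exact integer test
\[
\max\left\{\left\lfloor \ByzantineBound+\CoherentFaultBound\right\rfloor+1,\left\lfloor\tfrac{\NumValidators+\ByzantineBound}{2}\right\rfloor+1\right\}\le q\le \left\lfloor \NumValidators-\ByzantineBound-\CoherentFaultBound-\DispersiveFaultBound\right\rfloor .
\]
I will then substitute into the two unrounded population inequalities already derived in the corollary. The first, $\NumValidators>2\ByzantineBound+\EpistemicBudget+\UnusableBudget$, becomes $\NumValidators>2\ByzantineBound+2\CoherentFaultBound+\DispersiveFaultBound$. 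The second, $\NumValidators>3\ByzantineBound+2\UnusableBudget$, becomes $\NumValidators>3\ByzantineBound+2\CoherentFaultBound+2\DispersiveFaultBound$. The maximum of the two right-hand sides gives the displayed condition.

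The only step that needs care is the word ``sufficient'' in the integral-count setting. With $\ByzantineBound,\CoherentFaultBound,\DispersiveFaultBound$ integers, the real interval $(\max\{\ByzantineBound+\CoherentFaultBound,(\NumValidators+\ByzantineBound)/2\},\,\NumValidators-\ByzantineBound-\CoherentFaultBound-\DispersiveFaultBound]$ has an integer right endpoint. Whenever it is nonempty, it therefore contains that endpoint as an integer $q$, and I will check that this $q$ meets both strict lower bounds. For the agreement bound, $q>(\NumValidators+\ByzantineBound)/2$ follows from $2q>\NumValidators+\ByzantineBound$, which is exactly the second population inequality. For the safety bound, $q>\ByzantineBound+\CoherentFaultBound$ is the first population inequality. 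So $q_{\min}\le q\le q_{\max}$ holds, and Corollary~\ref{cor:app-threshold-feasibility} yields semantic certificate validity, agreement, and liveness. No tightness is claimed, so no matching lower-bound construction is needed.
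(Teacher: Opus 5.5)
Your proposal is correct and takes essentially the paper's approach: substitute $\EpistemicBudget=\CoherentFaultBound$ and $\UnusableBudget=\CoherentFaultBound+\DispersiveFaultBound$ into $q_{\min}$, $q_{\max}$ and into the two unrounded population inequalities of Corollary~\ref{cor:app-threshold-feasibility}, then take the maximum. You go slightly further than the paper in two places: the degenerate-probability remark, and the check that the integer endpoint $\NumValidators-\ByzantineBound-\CoherentFaultBound-\DispersiveFaultBound$ strictly exceeds both lower bounds so that $q_{\min}\le q_{\max}$; the paper asserts that sufficiency in the integral-count setting but does not prove it at this point.
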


\begin{proof}
Substitute
$\EpistemicBudget=\CoherentFaultBound$ and
$\UnusableBudget=\CoherentFaultBound+\DispersiveFaultBound$ into the two
unrounded population inequalities in
Corollary~\ref{cor:app-threshold-feasibility}. The first becomes
\[
\NumValidators
>
2\ByzantineBound+\CoherentFaultBound+
(\CoherentFaultBound+\DispersiveFaultBound)
=
2\ByzantineBound+2\CoherentFaultBound+\DispersiveFaultBound .
\]
The second becomes
\[
\NumValidators
>
3\ByzantineBound+2(\CoherentFaultBound+\DispersiveFaultBound)
=
3\ByzantineBound+2\CoherentFaultBound+2\DispersiveFaultBound .
\]
Taking the maximum gives the displayed sufficient condition. The exact
integer threshold remains the floor-based interval test after substitution.
\end{proof}

\subsection{Conservative Full-Byzantine Reduction}

\begin{proposition}[Conservative full-Byzantine reduction]
\label{prop:app-full-byzantine-reduction}
Assume the equal-weight base model, the semantic-validation probability space
from the preliminaries, a static Byzantine set, and the semantic event
$\SemanticSafetyEvent$. On this event, at most
$\ByzantineBound$ Byzantine identities and at most $\EpistemicBudget$
protocol-compliant false endorsers can support a fixed invalid candidate.
If these protocol-compliant false endorsers are intentionally charged as
fully Byzantine identities for all quorum-intersection purposes, define the
integer effective bound
\[
  F_\delta=\ByzantineBound+
  \left\lfloor\EpistemicBudget\right\rfloor .
\]
Then the classical condition
\[
  \NumValidators\ge 3F_\delta+1
\]
is sufficient to reuse a standard authenticated $3F+1$ BFT quorum argument on
$\SemanticSafetyEvent$. The stronger real-valued condition
\[
  \NumValidators\ge 3(\ByzantineBound+\EpistemicBudget)+1
\]
is also sufficient when interpreted as an integer design constraint. If
$\Prob[\SemanticSafetyEvent]\ge 1-\delta$, the reduction applies with
probability at least $1-\delta$. It is deliberately sufficient and pessimistic, not necessary and
not claimed tight for the base model.
\end{proposition}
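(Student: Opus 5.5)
The plan is to reduce the statement to the classical authenticated $3F+1$ argument by defining an effective faulty set on $\SemanticSafetyEvent$ and checking that the classical hypotheses hold for it. Fix an execution in $\SemanticSafetyEvent$ and a fixed invalid candidate $x$ in context $s$. Let $E\subseteq H$ be the set of protocol-compliant validators that falsely endorse $x$. On $\SemanticSafetyEvent$ the weight of $E$ is at most $\EpistemicBudget$. Because weights are equal-weight integer counts, $|E|\le\lfloor\EpistemicBudget\rfloor$. Define the charged set $B' = B\cup E$. Then
\[
|B'|\le \ByzantineBound+\lfloor\EpistemicBudget\rfloor = F_\delta .
\]
We treat every member of $B'$ as fully Byzantine, which means allowing them arbitrary behavior including equivocation. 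This only enlarges the adversary's power, so any guarantee proved against $B'$ as a Byzantine set also holds in the original execution, where members of $E$ actually behave more restrictively.

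Next I will invoke the standard authenticated BFT argument with fault bound $F_\delta$ and $\NumValidators\ge 3F_\delta+1$, using threshold $q=\NumValidators-F_\delta$ (equal to $2F_\delta+1$ when $\NumValidators=3F_\delta+1$). I will check the three properties against Theorems~\ref{thm:app-semantic-certificate-validity}--\ref{thm:app-liveness-threshold}, taking $B'$ as the Byzantine set and $\EpistemicBudget$ as $0$ for the remaining honest validators.
\begin{itemize}
\item \emph{Agreement.} We have $2q-\NumValidators=\NumValidators-2F_\delta\ge F_\delta+1>F_\delta$, so Lemma~\ref{lem:app-quorum-intersection} applies.
\item \emph{Semantic certificate validity.} The invalid candidate collects at most $|B'|\le F_\delta<q$ endorsements, because $q\ge 2F_\delta+1$.
\item \emph{Liveness.} This holds with $q\le \NumValidators-F_\delta$ as in the classical model. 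I will note that the reduction is stated for safety, as the proposition's focus on $\SemanticSafetyEvent$ indicates, and remark that liveness would additionally require $\LivenessEvent$.
\end{itemize}

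For the real-valued variant, I will observe that
\[
3(\ByzantineBound+\EpistemicBudget)+1\ge 3F_\delta+1 .
\]
Hence $\NumValidators\ge 3(\ByzantineBound+\EpistemicBudget)+1$ implies the integer condition, and it is therefore also sufficient.

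Finally, the probability claim is by monotonicity. The deterministic argument holds pointwise on every $\omega\in\SemanticSafetyEvent$, so the reduction applies on an event of probability at least $1-\delta$.

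I expect the main obstacle to be the scope of ``for all quorum-intersection purposes''. The set $E$ depends on the invalid candidate, whereas a classical argument needs one fixed faulty set. I would handle this by fixing the candidate first, as the statement says (``a fixed invalid candidate''), so that each admission instance gets its own charged set. I would also note that for agreement no charging is needed at all, since Theorem~\ref{thm:app-agreement-threshold} already holds with $B$, and $|B|\le F_\delta$ only strengthens it. I will also note the floor step, which relies on integer endorsement counts; this is where the equal-weight assumption is used.
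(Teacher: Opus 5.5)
Your proposal is correct and takes essentially the same route as the paper. On $\SemanticSafetyEvent$, integrality caps the realized protocol-compliant false endorsers of the fixed invalid candidate at $\lfloor\EpistemicBudget\rfloor$. Charging them as Byzantine gives the effective bound $F_\delta$, and the classical $3F+1$ argument is then reused. Finally, $F_\delta\le\ByzantineBound+\EpistemicBudget$ makes the real-valued condition a stronger sufficient one.

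Your explicit instantiation with $q=\NumValidators-F_\delta$, checked against the quorum-intersection lemma and the semantic-validity theorem, just spells out the paper's one-line appeal to the standard argument. The only thing to tighten is your liveness bullet. The paper stresses that the reduction ignores $\UnusableBudget$, and even on $\LivenessEvent$ your choice of $q$ would additionally need $q\le\NumValidators-\ByzantineBound-\UnusableBudget$.
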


\begin{proof}
On $\SemanticSafetyEvent$, the realized number of protocol-compliant false
endorsers for a fixed invalid candidate is an integer no larger than
$\lfloor\EpistemicBudget\rfloor$. Adding the worst-case Byzantine identities
gives the integer effective bound
$F_\delta=\ByzantineBound+\lfloor\EpistemicBudget\rfloor$. If the design
satisfies $\NumValidators\ge 3F_\delta+1$, the standard authenticated BFT
quorum argument for $F_\delta$ fully Byzantine identities applies to the
reduced model. Since
$F_\delta\le \ByzantineBound+\EpistemicBudget$, the real-valued inequality
$\NumValidators\ge 3(\ByzantineBound+\EpistemicBudget)+1$ is a stronger
sufficient condition.

The reduction is pessimistic because it treats non-equivocating semantic
errors as if they were Byzantine equivocations for agreement. It also ignores
$\UnusableBudget$, so it does not provide the liveness condition supplied by
Corollary~\ref{cor:app-threshold-feasibility}.
\end{proof}

\section{Calibration and Evaluation Methodology}
\label{sec:appendix-experimental-details}

This appendix gives the calibration record needed to estimate the
confidence-indexed budgets used by the certificate theorems. It complements
the calibration methodology and illustrative budget calculation in
Section~\ref{sec:evaluation}.

\subsection{Task Sampling}

Calibration begins by fixing a workload domain $D$, a committee $S$, a
certificate threshold candidate, model and tool versions, prompts, retrieval
sources, policy versions, and an expiration date. The task sample is drawn
before validator outputs are inspected. The stratification records executable
infrastructure-policy checks, configuration invariants, code or plan
verification tasks, and controlled synthetic state-transition tasks. Random
splits inside one task family are recorded separately from held-out
workload-family splits.

The sampling record states whether the intended guarantee is pointwise,
domain-conditional, average over a named task distribution, or uniform over a
task class. If tasks are selected adversarially after calibration, an
average-case workload estimate is not reused unless the stated bound is valid
for that selection rule.

\subsection{Semantic Labels}

Each task has a defensible semantic-validity label before it is used to
estimate either budget. Label sources include executable policy
checks, deterministic configuration validators, model checkers, proof
artifacts with stated assumptions, trusted verifiers, or adjudication records
with reviewer provenance. The label record binds the state/context,
candidate transition, policy, evidence package, and admissible action space.

Label uncertainty is separate from validator error. If labels are probabilistic
or adjudicated, the calibration artifact records the adjudication method,
disagreement resolution, excluded cases, and any label-confidence interval.
Tasks whose validity cannot be established within the target domain are
kept out of the budget estimate or reported as a separate uncertainty source.

\subsection{Empirical Quantiles}

For each invalid labeled task, compute the protocol-compliant false-endorsement
support
\[
\begin{aligned}
  F_m
  &=
  \sum_{i\in \HonestSet_S} w_i\,\mathbf{1}^{\mathrm{FE}}_{i,m},\\
  \mathbf{1}^{\mathrm{FE}}_{i,m}
  &=
  \mathbf{1}[J_i(x_m,s_m)=\mathsf{endorse}],
  \quad \SemanticValidity(x_m,s_m)=0 .
\end{aligned}
\]
In the equal-weight base model, $F_m$ is a count. In a weighted calibration,
it is total validator weight. Candidate estimates of $\EpistemicBudget$ use an
upper empirical quantile of the observed $F_m$ values, with the chosen
confidence procedure stated before results are computed.

For each valid labeled task, compute unusable support
\[
  U_m=\sum_{i\in \HonestSet_S} w_i U_i(x_m,s_m),
  \qquad \SemanticValidity(x_m,s_m)=1,
\]
where unusable support includes false rejection, abstention, timeout,
divergent semantic output, and canonicalization failure. Candidate estimates
of $\UnusableBudget$ use an upper empirical quantile of the observed $U_m$
values. The analysis reports the empirical quantile, the upper uncertainty
endpoint, and the budget value actually used for threshold selection.

\subsection{Uncertainty Intervals and Rare-Event Limitations}

The reported estimates $\EstimatedEpistemicBudget$ and
$\EstimatedUnusableBudget$ include uncertainty intervals. The upper
confidence endpoints $\EstimatedEpistemicUpperBudget$ and
$\EstimatedUnusableUpperBudget$ are the conservative values used for threshold
selection. Suitable methods include predeclared bootstrap
intervals, exact binomial or beta-binomial tail bounds for exceedance
probabilities, distribution-free order-statistic bounds, or conservative
concentration inequalities. The method, confidence level, sample size, and
stratification rule are recorded with the estimate.
These intervals account for sampling uncertainty under the stated procedure;
hidden label-system bias, measurement error, or unmodeled common-mode
structure requires a separate assumption or sensitivity analysis.

Rare-event claims require rare-event evidence. A calibration set with only a
small number of invalid or valid examples cannot justify an extremely small
tail probability. If the sample size cannot support the target confidence
level, the artifact reports the weakest defensible upper bound rather than a
desired risk number.

\subsection{Distribution-Shift Checks}

Budget estimates are stress-tested against workload shift before they are used
in a signed risk profile. Required checks include held-out workload
families, shared poisoned evidence, ambiguous policies, common prompt
injection, shared retrieval corruption, model-family-specific traps, and
changes in state or policy representation. Each shifted or adversarial task
identifies the calibration assumption it tests.

A calibration study compares the estimated budgets on calibration tasks,
held-out families, and adversarial suites. A material increase in largest
false-endorsement coalition, unusable support, unsafe-certificate rate, or
no-certificate rate triggers domain narrowing, threshold recomputation,
committee reselection, or recalibration.

\subsection{Committee Provenance}

For every validator configuration, record provider, model identifier,
model-version timestamp, decoding parameters, prompt family, prompt hash,
retrieval source, retrieval-corpus digest, toolchain, tool versions, isolation
policy, jurisdiction, cost model, and capability assumptions. Homogeneous
committees identify the replicated configuration. Heterogeneous committees
state which dimensions vary and which shared dependencies remain.

Provenance is part of the estimate. A change in model, prompt, retrieval
corpus, toolchain, policy version, provider behavior, or evidence source can
change both $\EpistemicBudget$ and $\UnusableBudget$. The signed risk profile
therefore includes an expiration date and rerun criteria tied to these
provenance fields.

\subsection{Reproducibility Data Schema}

Each calibration run produces a machine-readable record with the fields below.
The schema records inputs and estimates, not experimental result claims.

\begin{itemize}
\item \textbf{Run metadata:} run identifier, calibration date, operator,
random seed, code version, analysis-script hash, and risk-profile expiration.
\item \textbf{Domain metadata:} workload domain $D$, task-family label,
sampling rule, split identifier, and guarantee scope.
\item \textbf{Task record:} state/context digest, candidate-transition digest,
policy digest, evidence-package identity, admissible action space, semantic
label, label source, and label-confidence metadata.
\item \textbf{Validator record:} validator identity, committee membership,
weight, provider, model version, prompt hash, retrieval digest, tool versions,
jurisdiction, and isolation policy.
\item \textbf{Judgment record:} signed judgment action, semantic class,
evidence digest, provenance record, latency, timeout status, canonicalization
status, and raw output digest.
\item \textbf{Derived variables:} false endorsement, false rejection,
abstention, timeout, divergent semantic output, canonicalization failure,
largest false-endorsement coalition, $F_m$, and $U_m$.
\item \textbf{Budget estimate:} empirical quantile rule, uncertainty method,
confidence level, sample size, $\widehat{e}_\delta$,
$\widehat{u}_\epsilon$, $\widehat{e}^{+}_\delta$,
$\widehat{u}^{+}_\epsilon$, selected threshold $q$, and theorem inequalities
checked.
\item \textbf{Shift and adversarial checks:} held-out-family identifier,
adversarial condition, changed dependency, observed failure mode, and
recalibration decision.
\end{itemize}

\end{appendices}

\bibliographystyle{unsrt}
\bibliography{references}

@article{pease1980reaching,
  author = {Pease, Marshall and Shostak, Robert and Lamport, Leslie},
  title = {Reaching Agreement in the Presence of Faults},
  journal = {Journal of the ACM},
  year = {1980},
  volume = {27},
  number = {2},
  pages = {228--234},
  doi = {10.1145/322186.322188}
}

@article{lamport1982byzantine,
  author = {Lamport, Leslie and Shostak, Robert and Pease, Marshall},
  title = {The Byzantine Generals Problem},
  journal = {ACM Transactions on Programming Languages and Systems},
  year = {1982},
  volume = {4},
  number = {3},
  pages = {382--401},
  doi = {10.1145/357172.357176}
}

@article{fischer1985impossibility,
  author = {Fischer, Michael J. and Lynch, Nancy A. and Paterson, Michael S.},
  title = {Impossibility of Distributed Consensus with One Faulty Process},
  journal = {Journal of the ACM},
  year = {1985},
  volume = {32},
  number = {2},
  pages = {374--382},
  doi = {10.1145/3149.214121}
}

@article{dwork1988consensus,
  author = {Dwork, Cynthia and Lynch, Nancy and Stockmeyer, Larry},
  title = {Consensus in the Presence of Partial Synchrony},
  journal = {Journal of the ACM},
  year = {1988},
  volume = {35},
  number = {2},
  pages = {288--323},
  doi = {10.1145/42282.42283}
}

@article{schneider1990state,
  author = {Schneider, Fred B.},
  title = {Implementing Fault-Tolerant Services Using the State Machine
    Approach: A Tutorial},
  journal = {ACM Computing Surveys},
  year = {1990},
  volume = {22},
  number = {4},
  pages = {299--319},
  doi = {10.1145/98163.98167}
}

@inproceedings{castro1999practical,
  author = {Castro, Miguel and Liskov, Barbara},
  title = {Practical Byzantine Fault Tolerance},
  booktitle = {Proceedings of the Third Symposium on Operating Systems
    Design and Implementation},
  year = {1999},
  pages = {173--186},
  publisher = {USENIX Association},
  address = {Berkeley, CA, USA}
}

@article{malkhi1998byzantine,
  author = {Malkhi, Dahlia and Reiter, Michael},
  title = {Byzantine Quorum Systems},
  journal = {Distributed Computing},
  year = {1998},
  volume = {11},
  number = {4},
  pages = {203--213},
  doi = {10.1007/s004460050050}
}

@inproceedings{benor1983another,
  author = {Ben-Or, Michael},
  title = {Another Advantage of Free Choice: Completely Asynchronous
    Agreement Protocols},
  booktitle = {Proceedings of the Second Annual ACM Symposium on
    Principles of Distributed Computing},
  year = {1983},
  pages = {27--30},
  doi = {10.1145/800221.806707}
}

@article{civit2023validity,
  author = {Civit, Pierre and Gilbert, Seth and Guerraoui, Rachid and
    Komatovic, Jovan and Vidigueira, Manuel},
  title = {On the Validity of Consensus},
  journal = {arXiv preprint arXiv:2301.04920},
  year = {2023},
  doi = {10.48550/arXiv.2301.04920}
}

@article{ranchal2022basilic,
  author = {Ranchal-Pedrosa, Alejandro and Gramoli, Vincent},
  title = {{Basilic}: Resilient Optimal Consensus Protocols With Benign and
    Deceitful Faults},
  journal = {arXiv preprint arXiv:2204.08670},
  year = {2022},
  doi = {10.48550/arXiv.2204.08670}
}

@article{avelas2024probft,
  author = {Avelas, Diogo and Heydari, Hasan and Alchieri, Eduardo and
    Distler, Tobias and Bessani, Alysson},
  title = {Probabilistic Byzantine Fault Tolerance},
  journal = {arXiv preprint arXiv:2405.04606},
  year = {2024},
  doi = {10.48550/arXiv.2405.04606}
}

@article{avizienis1985nversion,
  author = {Avizienis, Algirdas},
  title = {The N-Version Approach to Fault-Tolerant Software},
  journal = {IEEE Transactions on Software Engineering},
  year = {1985},
  volume = {SE-11},
  number = {12},
  pages = {1491--1501},
  doi = {10.1109/TSE.1985.232171}
}

@article{knight1986experimental,
  author = {Knight, John C. and Leveson, Nancy G.},
  title = {An Experimental Evaluation of the Assumption of Independence in
    Multiversion Programming},
  journal = {IEEE Transactions on Software Engineering},
  year = {1986},
  volume = {SE-12},
  number = {1},
  pages = {96--109},
  doi = {10.1109/TSE.1986.6312924}
}

@article{denisovblanch2026consensus,
  author = {Denisov-Blanch, Yegor and Kazdan, Joshua and Chudnovsky, Jessica
    and Schaeffer, Rylan and Guan, Sheng and Adeshina, Soji and Koyejo, Sanmi},
  title = {Consensus is Not Verification: Why Crowd Wisdom Strategies Fail for
    {LLM} Truthfulness},
  journal = {arXiv preprint arXiv:2603.06612},
  year = {2026},
  doi = {10.48550/arXiv.2603.06612}
}

@article{turkmen2026ensemble,
  author = {Turkmen, Yigit and Buyukates, Baturalp and Bastopcu, Melih},
  title = {Don't Always Pick the Highest-Performing Model: An Information
    Theoretic View of {LLM} Ensemble Selection},
  journal = {arXiv preprint arXiv:2602.08003},
  year = {2026},
  doi = {10.48550/arXiv.2602.08003}
}

@article{zhang2025codetriangle,
  author = {Zhang, Taolin and Ma, Zihan and Cao, Maosong and Liu, Junnan and
    Zhang, Songyang and Chen, Kai},
  title = {Coding Triangle: How Does Large Language Model Understand Code?},
  journal = {arXiv preprint arXiv:2507.06138},
  year = {2025},
  doi = {10.48550/arXiv.2507.06138}
}

@article{he2026openkedge,
  author = {He, Jun and Yu, Deying},
  title = {{OpenKedge}: Governing Agentic Mutation with Execution-Bound Safety
    and Evidence Chains},
  journal = {arXiv preprint arXiv:2604.08601},
  year = {2026},
  doi = {10.48550/arXiv.2604.08601}
}

@article{he2026sovereignLoops,
  author = {He, Jun and Yu, Deying},
  title = {{Sovereign Agentic Loops}: Decoupling {AI} Reasoning from Execution in
    Real-World Systems},
  journal = {arXiv preprint arXiv:2604.22136},
  year = {2026},
  doi = {10.48550/arXiv.2604.22136}
}

@article{he2026pdd,
  author = {He, Jun and Yu, Deying},
  title = {{Protocol-Driven Development}: Governing Generated Software Through
    Invariants and Continuous Evidence},
  journal = {arXiv preprint arXiv:2605.12981},
  year = {2026},
  doi = {10.48550/arXiv.2605.12981}
}

@article{he2026vai,
  author = {He, Jun and Yu, Deying},
  title = {{Verifiable Agentic Infrastructure}: Proof-Derived Authorization for
    Sovereign {AI} Systems},
  journal = {arXiv preprint arXiv:2605.15228},
  year = {2026},
  doi = {10.48550/arXiv.2605.15228}
}

@article{he2026sqa,
  author = {He, Jun and Yu, Deying},
  title = {{Semantic Quorum Assurance}: Collective Certification for
    Non-Deterministic {AI} Infrastructure},
  journal = {arXiv preprint arXiv:2606.08021},
  year = {2026},
  doi = {10.48550/arXiv.2606.08021}
}

@article{he2026sab,
  author = {He, Jun and Yu, Deying},
  title = {{Sovereign Assurance Boundary}: Certificate-Bound Admission for
    Agentic Infrastructure},
  journal = {arXiv preprint arXiv:2606.11632},
  year = {2026},
  doi = {10.48550/arXiv.2606.11632}
}

@misc{pdds2026manifesto,
  author = {He, Jun and Yu, Deying},
  title = {{The Post-Deterministic Manifesto}: A New Foundation for Trustworthy Autonomous Infrastructure},
  year = {2026},
  howpublished = {arXiv preprint arXiv:2606.01722}
}

\end{document}